\def\includehome{./include}
\def\bibhome{./include}
\def\fighome{./figures}
\pgfplotsset{compat=newest}
\newtheorem{theorem}{Theorem}
\newtheorem{lemma}[theorem]{Lemma}
\newtheorem{definition}[theorem]{Definition}
\newcommand{\mytensor}[1]{\mathcal{#1}}
\newcommand{\mymatrix}[1]{\bm{#1}}
\newcommand{\myvector}[1]{\bm{#1}}
\newcommand{\scalarSup}[2]{{#1}^{(#2)}}
\newcommand{\scalarSub}[2]{#1_{#2}}
\newcommand{\vectorSup}[2]{\bm{#1}^{(#2)}}
\newcommand{\vectorSub}[2]{#1_{#2}}	
\newcommand{\vectorInd}[3]{#1^{(#2)}_{#3}}
\newcommand{\matrixSub}[2]{#1_{#2}}	
\newcommand{\matrixInd}[3]{#1^{(#2)}_{#3}}
\newcommand{\tensorSup}[2]{\mathcal{#1}^{(#2)}}
\newcommand{\tensorSub}[2]{\mathcal{#1}_{#2}}
\newcommand{\tensorInd}[3]{\mathcal{#1}^{(#2)}_{#3}}
\newcommand{\sumIndex}[2]{\sum_{#1 = 1}^{#2}}
\newcommand{\loss}{\mathcal{L}}
\newcommand{\gradient}[1]{\frac{\partial \loss}{\partial {#1}}}
\newcommand{\derivative}[2]{\frac{\partial {#1}}{\partial {#2}}}
\newcommand{\gradientInline}[1]{{\partial \loss} / {\partial {#1}}}
\newcommand{\derivativeInline}[2]{{\partial {#1}} / {\partial {#2}}}
\newcommand{\rootunity}[2]{\omega_{#1}^{#2}} 
\newcommand{\Fourier}[1]{\widetilde{#1}}
\newcommand{\adjoint}[1]{#1^\top}
\newcommand{\inverse}[1]{\overline{#1}}
\def\P{\mathbb{P}} 
\def\E{\mathbb{E}} 
\def\V{\mathbb{V}} 
\def\R{\mathbb{R}} 
\def\C{\mathbb{C}} 
\def\Z{\mathbb{Z}} 
\def\conv{\ast}
\newcommand{\im}{\mathsf{j}} 
\newcommand{\ARMA}{ARMA\xspace}
\newcommand{\ARMAlong}{autoregressive-moving-average\xspace}
\newcommand{\MA}{MA\xspace}
\newcommand{\MAlong}{moving-average\xspace}
\newcommand{\AR}{AR\xspace}
\newcommand{\ARLong}{Autoregressive\xspace}
\newcommand{\ARlong}{autoregressive\xspace}
\newcommand{\DTFT}{DTFT\xspace}
\newcommand{\DTFTlong}{discrete-time Fourier transform\xspace}
\newcommand{\DFT}{DFT\xspace}
\newcommand{\DFTLONG}{Discrete Fourier Transform\xspace}
\newcommand{\DFTlong}{discrete Fourier transform\xspace}
\newcommand{\IDFT}{IDFT\xspace}
\newcommand{\FFT}{FFT\xspace}
\newcommand{\FFTLONG}{Fast Fourier Transform\xspace}
\newcommand{\NNlong}{neural network\xspace}
\newcommand{\CNN}{CNN\xspace}
\newcommand{\CNNLong}{Convolutional neural network\xspace}
\newcommand{\ERF}{ERF\xspace}
\newcommand{\ERFLONG}{Effective Receptive Field\xspace}
\newcommand{\ERFLong}{Effective receptive field\xspace}
\newcommand{\ERFlong}{effective receptive field\xspace}
\newcommand{\BIBO}{BIBO\xspace}
\newcommand{\BIBOLONG}{Bounded-Input Bounded-Output\xspace}
\newcommand{\ROC}{ROC\xspace}
\newcommand{\ROCLONG}{Region of Convergence\xspace}
\newcommand{\MGF}{MGF\xspace}
\newcommand{\MGFLONG}{Moment Generating Function\xspace}
\newcommand{\MGFlong}{moment generating function\xspace}
\newcommand{\mytitle}{ARMA Nets: \\ Expanding Receptive Field for Dense Prediction}
\title{\mytitle}
\author{
	\begin{tabular}{c} Jiahao Su$^{1}$ \\ \texttt{jiahaosu@umd.edu} \end{tabular} \begin{tabular}{c} Shiqi Wang$^{2}$ \\ \texttt{161170041@smail.nju.edu.cn} \end{tabular} \begin{tabular}{c} Furong Huang$^{1}$ \\ \texttt{furongh@cs.umd.edu} \end{tabular} \\
  	$^{1}$University of Maryland, College Park, MD USA \hskip0.5em $^{2}$Nanjing University, Nanjing, China \\
}
\begin{document}
\maketitle

\begin{abstract}
Global information is essential for dense prediction problems,
whose goal is to compute a discrete or continuous label for each pixel in the images. 
Traditional convolutional layers in neural networks, initially designed for image classification, are restrictive in these problems since the filter size limits their receptive fields.
In this work, we propose to replace any traditional convolutional layer 
with an autoregressive moving-average (ARMA) layer, 
a novel module with an adjustable receptive field controlled by the learnable autoregressive coefficients.
Compared with traditional convolutional layers, 
our ARMA layer enables explicit interconnections of the output neurons, 
and learns its receptive field by adapting the autoregressive coefficients of the interconnections.
ARMA layer is adjustable to different types of tasks: 
for tasks where global information is crucial, 
it is capable of learning relatively large autoregressive coefficients 
to allow for an output neuron's receptive field covering the entire input; 
for tasks where only local information is required, 
It can learn small or near zero autoregressive coefficients 
and automatically reduces to a traditional convolutional layer.
We show both theoretically and empirically that the effective receptive field of 
networks with ARMA layers (named as ARMA networks) expands 
with larger autoregressive coefficients.
We also provably solve the instability problem of learning and prediction in the ARMA layer
through a re-parameterization mechanism. 
Additionally, we demonstrate that ARMA networks 
substantially improve their baselines on challenging dense prediction tasks 
including video prediction and semantic segmentation.
\end{abstract}

\section{Introduction}
\label{sec:definitions}

Convolutional layers in neural networks 
have many successful applications for machine learning tasks.
Each output neuron encodes an input region of the network measured by the
{\em \ERFlong} (\ERF)~\citep{luo2016understanding}.
A large \ERF that allows for sufficient global information is needed to make accurate predictions; 
however, a simple stack of convolutional layers does not effectively expand \ERF.
{\CNNLong}s ({\CNN}s) typically encode global information by adding 
downsampling (pooling) layers, which coarsely aggregate global information.
A fully-connected classification layer subsequently reduces the entire feature map to an output label.
Downsampling and fully-connected layers are suitable for image classification tasks 
where only a single prediction is needed.
But they are less effective, due to potential loss of information, in dense prediction tasks 
such as semantic segmentation and video prediction, where each pixel requests a prediction.
Therefore, it is crucial to introduce mechanisms that enlarge \ERF without too much information loss.

Naive approaches to expanding \ERF, such as deepening the network or enlarging the filter size, 
drastically increase the model complexity,
which results in expensive computation, difficulty in optimization, and susceptibility to overfitting.
Recently advanced architectures have been proposed to expand \ERF,
including encoder-decoder structured networks~\citep{ronneberger2015u}, 
dilated convolutional networks~\citep{yu2015multi, yu2017dilated}, 
and non-local attention networks~\citep{wang2018non}. 
However, encoder-decoder structured networks could lose 
high-frequency information due to the downsampling layers.
Dilated convolutional networks could suffer from the gridding effect 
while the \ERF expansion is limited,
and non-local attention networks are expensive in training and inference.

We introduce a novel {\em \ARMAlong} (\ARMA) layer
that enables adaptive receptive field by explicit interconnections among its output neurons.
Our \ARMA layer realizes these interconnections via extra convolutions on output neurons,
on top of the convolutions on input neurons as in a traditional convolutional layer.
We provably show that an \ARMA network can have arbitrarily large \ERF,
thus encoding global information, with minimal extra parameters at each layer.
Consequently, an \ARMA network can flexibly enlarge its \ERF
to leverage global knowledge for dense prediction without reducing spatial resolution.
Moreover, the \ARMA networks are independent of the architectures above
including encoder-decoder structured networks, dilated convolutional networks and non-local attention networks.

A significant challenge in \ARMA networks lies in the complex computations 
needed in both forward and backward propagations — 
simple convolution operations are not applicable 
since the output neurons are influenced by their neighbors and thus interrelated.
Another challenge in ARMA networks is instability — 
the additional interconnections among the output neurons could recursively amplify the outputs 
and lead them to infinity. We address both challenges in this paper. 
   
\textbf{Summary of Contributions}
\begin{itemize}[leftmargin=*, itemsep=0pt, topsep=0pt]
\item We introduce a novel \ARMA layer that is a plug-and-play module 
substituting convolution layers in neural networks to allow flexible tuning of their \ERF, 
adapting to the task requirements and improving performance in dense prediction problems.
\item We recognize and address the problems of {\em computation} and {\em instability} in \ARMA layers.
{\bf (1)} To reduce computational complexity, we develop \FFT-based algorithms for both forward and backward passes;
{\bf (2)} To guarantee stable learning and prediction, we propose a {\em separable \ARMA layer}
and a re-parameterization mechanism that ensures the layer to operate in a stable region.
\item We successfully apply \ARMA layers 
in ConvLSTM network~\citep{xingjian2015convolutional} for pixel-level multi-frame video prediction 
and U-Net model~\citep{ronneberger2015u} for medical image segmentation.
\ARMA networks substantially outperform the corresponding baselines on both tasks, 
suggesting that our proposed \ARMA layer is a general and useful building block for dense prediction problems.
\end{itemize}

\section{Related Works}
\label{sec:related}

{\bf Dilated convolution~\citep{holschneider1990real}} 
enlarges the receptive field by upsampling the filter coefficients with zeros.
 Unlike encoder-decoder structure, dilated convolution preserves the spatial resolution
 and is thus widely used in dense prediction problems,
including semantic segmentation~\citep{long2015fully, yu2015multi, chen2017deeplab}, 
and objection detection~\citep{dai2016r, li2019scale}.
However, dilated convolution by itself creates gridding artifacts 
if its input contains higher frequency than the upsampling rate~\citep{yu2017dilated}, 
and the inconsistency of local information hampers the performance of the dilated convolutional networks~\citep{wang2018smoothed}.
Such artifacts can be alleviated by extra anti-aliasing layer~\citep{yu2017dilated}, group interacting layer~\citep{wang2018smoothed} or spatial pyramid pooling~\citep{chen2017rethinking}.

{\bf Deformable convolution} allows the filter shape (i.e.\ locations of the incoming pixels) to be learnable~\citep{dai2017deformable, jeon2017active, zhu2019deformable}.
While deformable convolution focuses on adjusting the filter {\em shape}, 
our \ARMA layer aims to expand the filter {\em size} adaptively.

{\bf Non-local attention network~\citep{wang2018non}} 
inserts non-local attention blocks between the convolutional layers.
A non-local attention block computes a weighted sum of all input neurons for each output neuron, 
similar to attention mechanism~\citep{vaswani2017attention}.
In practice, non-local attention blocks are computationally expensive, 
thus they are typically inserted in the upper part of the network (with lower resolution). 
In contrast, our \ARMA layers are economical (see \autoref{sec:arma-computation}), 
and can be used throughout the network.

{\bf Encoder-decoder structured network} pairs each downsampling layer with 
another upsampling layer to maintain the resolution, 
and introduces skip-connection between the pair 
to preserve the high-frequency information~\citep{ronneberger2015u, long2015fully}.
Since the shortcut bypasses the downsampling/upsampling layers,
the network has a small receptive field for the high-frequency components.
A potential solution is to augment upsampling with non-local attention block~\citep{oktay2018attention}
or \ARMA layer (\autoref{sec:experiments}).

{\bf Spatial recurrent neural networks} apply recurrent propagations over the spatial domain
~\citep{byeon2015scene, oord2016pixel, kalchbrenner2015grid, stollenga2015parallel, liu2016learning}, 
and learns the affinity between neighboring pixels~\citep{liu2017learning}.
Most of these prior works consider nonlinear recurrent neural networks, 
where the activation between recursions prohibits an efficient \FFT-based algorithm.
In contrast, our proposed \ARMA layer is equivalent to a linear recurrent neural network.
where the spatial recurrences in \ARMA layer can be efficiently evaluated using \FFT.

\section{\ARMA Neural Networks}
\label{sec:arma-nn}

In this section, we introduce a novel {\em \ARMAlong} (\ARMA) layer,
and analyze its ability to expand {\em \ERFLONG} (\ERF) in {\NNlong}s.
The analysis is further verified by visualizing the \ERF
with varying network depth and strength of \ARlong coefficients.

\subsection{\ARMA Layer}
\label{sub:traditional-to-arma}

A traditional convolutional layer
is essentially a {\em \MAlong} model~\citep{box2015time},
$\tensorSub{Y}{\bm{:}, \bm{:}, t} = \sum_{s = 1}^{S}
\tensorSub{W}{\bm{:}, \bm{:}, t, s} \conv \tensorSub{X}{\bm{:}, \bm{:}, s}$,
where the {\em \MAlong coefficients} $\mytensor{W} \in \R^{K_m \times K_m \times T \times S}$ 
is parameterized by a \(4^\text{th}\)-order kernel 
($K_m$ is the filter size, and $S, T$ are input/output channels), 
$\bm{:}$ denotes all elements from the specified coordinate,
and $\conv$ denotes convolution between an input feature and a filter.

\begin{wrapfigure}{r}{0.5\textwidth}
	\begin{minipage}{0.245\textwidth}
		\includegraphics[width=\textwidth]{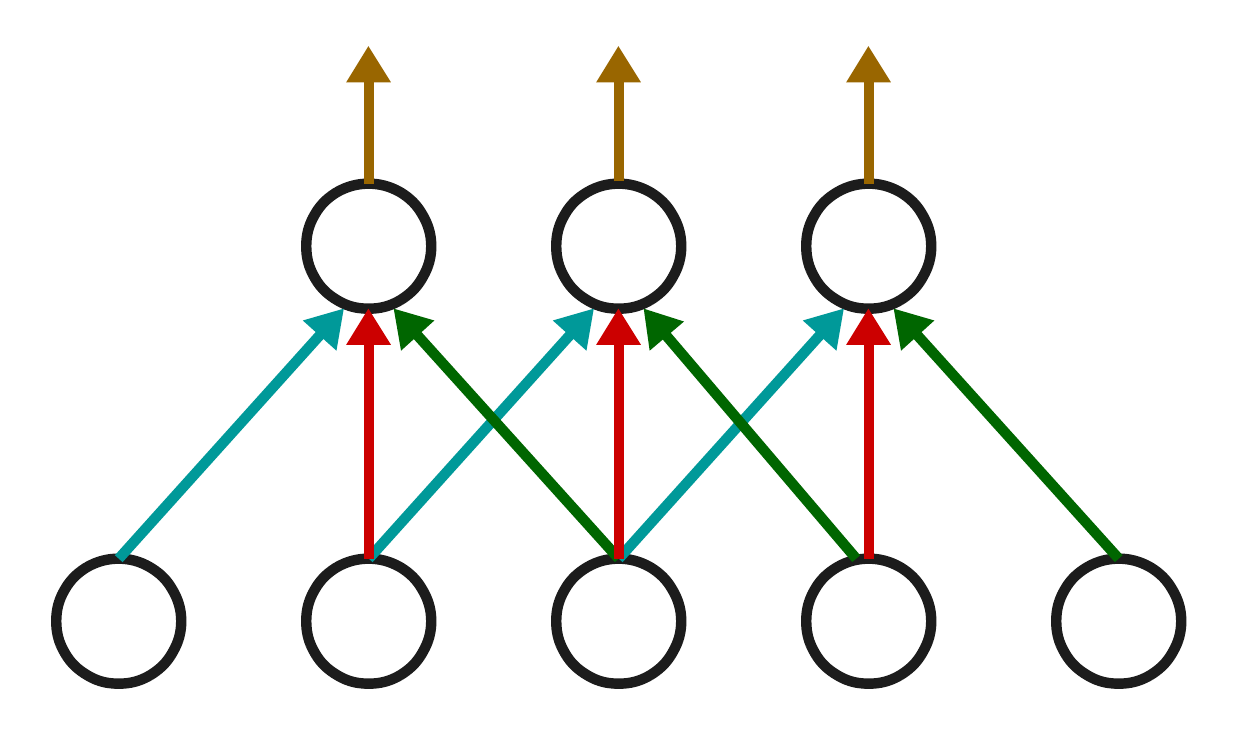}
	\label{fig:conv-1d}
	\captionof*{subfigure}{{ {\bf(a)} Convolution}}
	\end{minipage}
	\hfill
	\begin{minipage}{0.245\textwidth}
		\includegraphics[width=\textwidth]{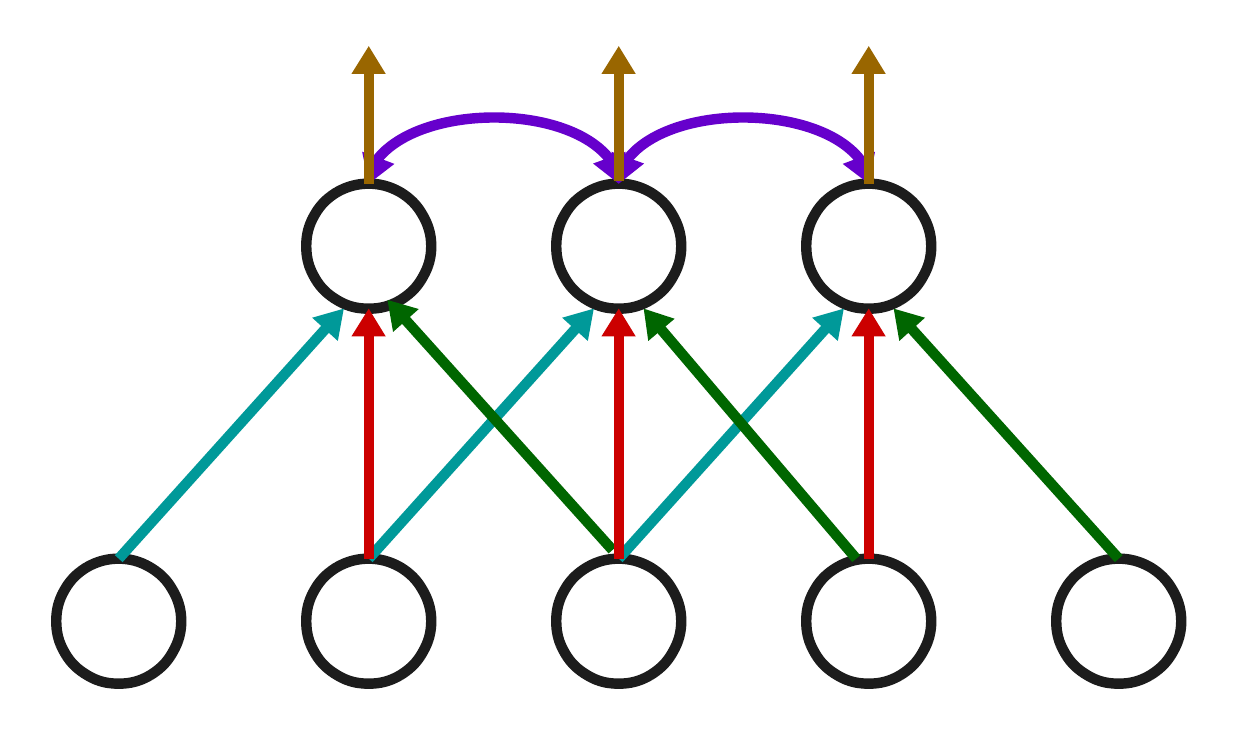}
		\label{fig:arma-1d}
		\captionof*{subfigure}{{\bf(b)} \ARMA}
	\end{minipage}
	\captionof{figure}{The \ARMA layer introduces interconnections among output neurons explicitly.}
	\label{fig:cmp-1d}
\end{wrapfigure}

As motivated in the introduction, we introduce a novel \ARMA layer, 
that enables adaptive receptive field by introducing explicit interconnections among its output neurons 
as illustrated in \autoref{fig:cmp-1d}.
Our \ARMA layer realizes these interconnections by introducing extra convolutions on the outputs, 
in addition to the convolutions on the inputs as in a traditional convolutional layer.
As a result, in an \ARMA layer, each output neuron can be affected by an input pixel faraway
through interconnections among the output neurons, thus receives global information.
Formally, we define \ARMA layer in Definition~\ref{def:arma-cnn}.

\begin{definition} [{\bf \ARMA layer}] 
\label{def:arma-cnn}
An \ARMA layer is parameterized by a moving-average kernel (coefficients)
$\mytensor{W} \in \R^{K_m \times K_m \times S \times T}$ and
an \ARlong kernel (coefficients) $\mytensor{A} \in \R^{K_a \times K_a \times T}$.
It receives an input $\mytensor{X} \in \R^{I_1 \times I_2 \times S}$ and 
returns an output $\mytensor{Y} \in \R^{I^{\prime}_1 \times I^{\prime}_2 \times T}$ 
with an \ARMA model:
{\small
\begingroup
\setlength{\abovedisplayskip}{2pt}
\setlength{\belowdisplayskip}{0pt}
\begin{equation}
\tensorSub{A}{\bm{:}, \bm{:}, t} \conv \tensorSub{Y}{\bm{:}, \bm{:}, t} = 
\sum_{s = 1}^{S} \tensorSub{W}{\bm{:}, \bm{:}, t, s} \conv \tensorSub{X}{\bm{:}, \bm{:}, s}
\label{eq:arma-cnn}
\end{equation}
\endgroup}%
\end{definition}

{\it Remarks:}
{\bf (1)} Since the output interconnections are realized by convolutions, 
the \ARMA layer maintains the {\em shift-invariant} property.
{\bf (2)} The \ARMA layer {\em reduces} to a traditional layer 
if the \ARlong kernel $\mytensor{A}$ represents an identical mapping.
{\bf (3)} The \ARMA layer is a {\em plug-and-play} module that 
can replace {\em any} convolutional layer, adding $K_a^2 T$ extra parameters 
negligible compared to $K_w^2 S T$ parameters in a traditional convolution layer.  
{\bf (4)} Different from traditional layer, computing \autoref{eq:arma-cnn} 
and its backpropagation is nontrivial, 
studied in \autoref{sec:arma-computation}.  

\vspace{-0.5em}
\begin{figure}[!htbp]
\centering
	\begin{subfigure}[b]{0.26\textwidth}
	\centering
		\includegraphics[width=\linewidth]{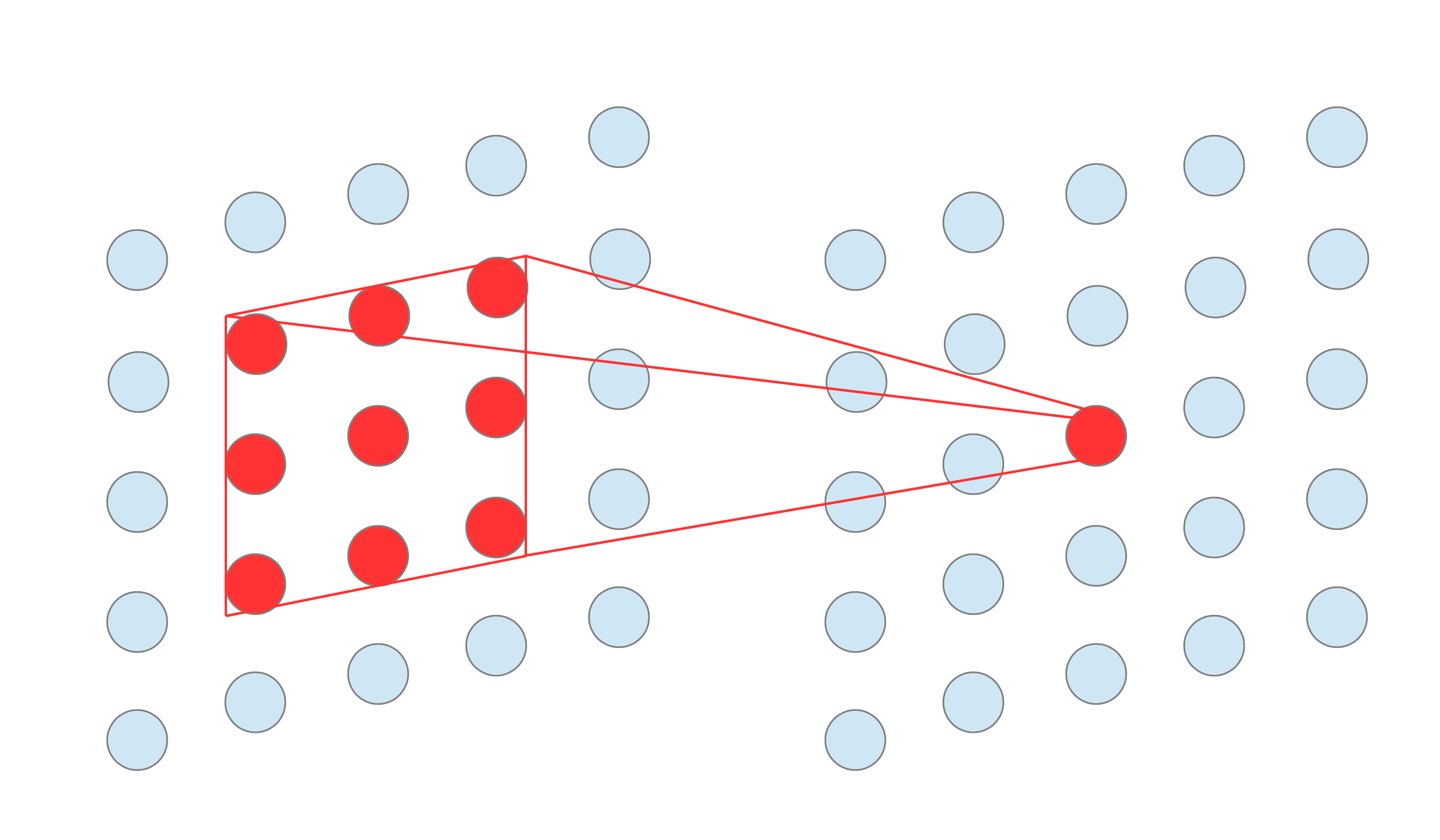}
		\caption{Convolution}
		\label{fig:conv-2d}
	\end{subfigure}
	\hfill
	\begin{subfigure}[b]{0.26\textwidth}
	\centering
		\includegraphics[width=\linewidth]{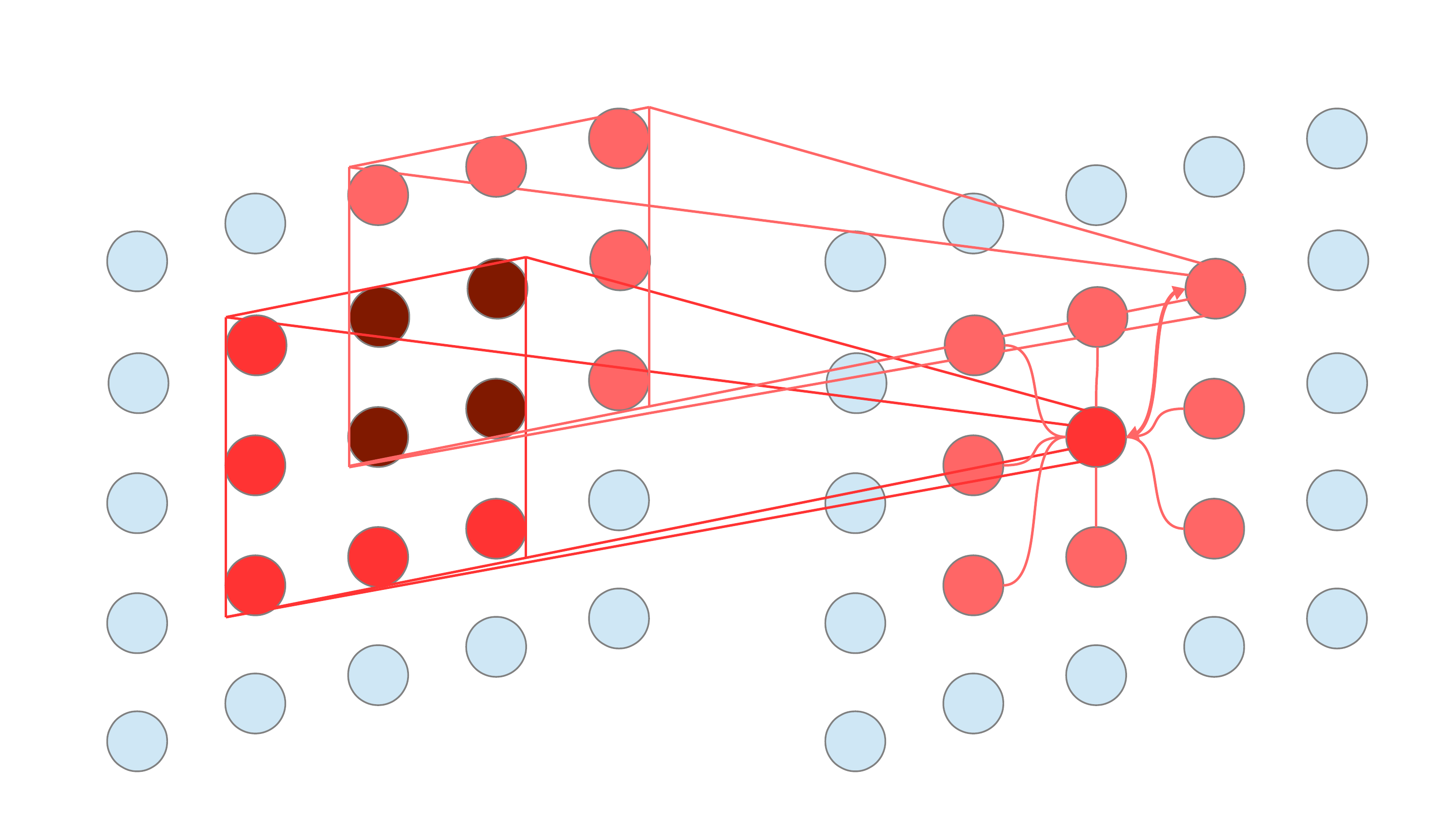}
		\caption{\ARMA}
		\label{fig:arma-2d}
	\end{subfigure}
	\hfill
	\begin{subfigure}[b]{0.23\textwidth}
	\centering
		\includegraphics[width=\linewidth]{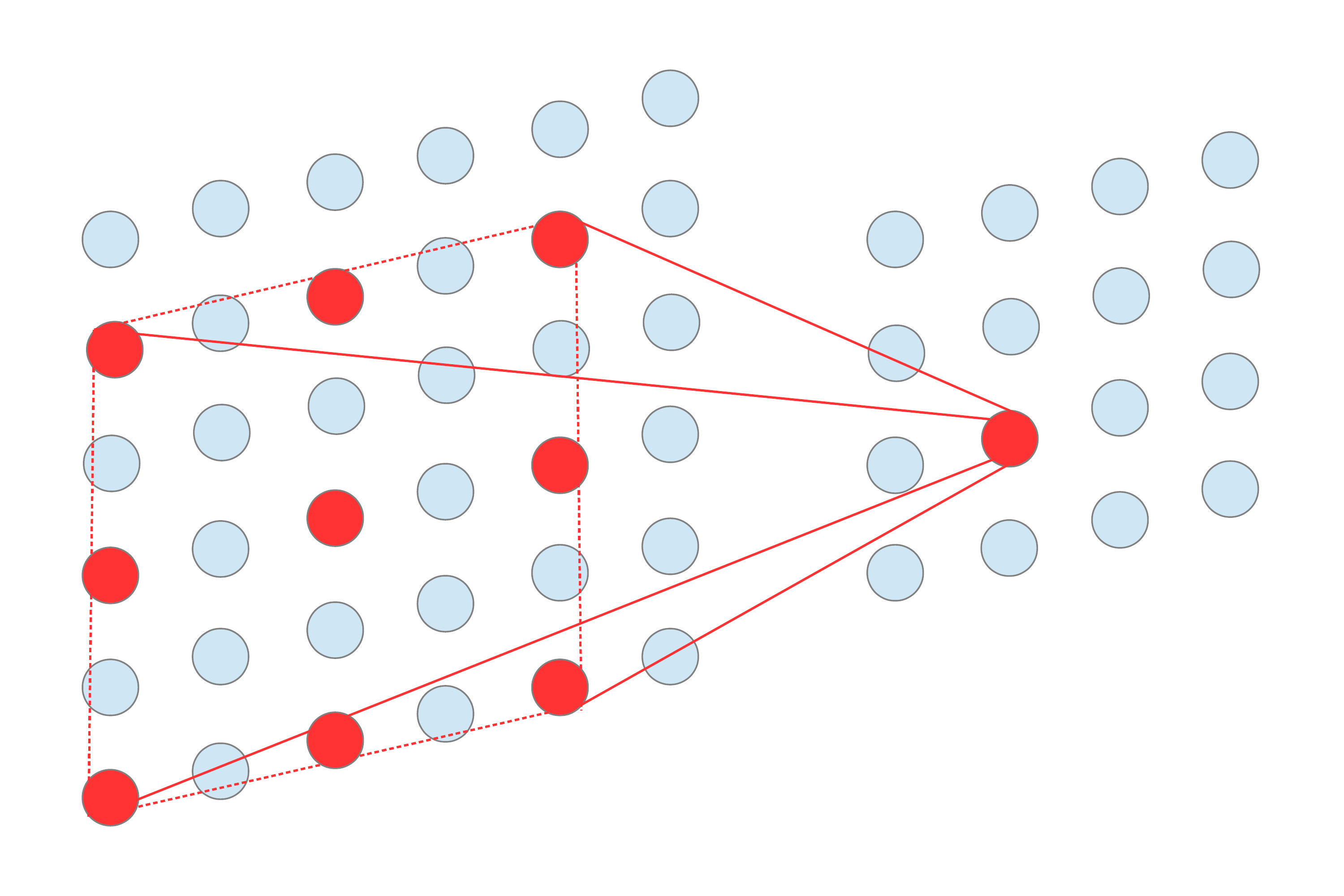}
		\caption{Dilated convolution}
		\label{fig:dilated-conv-2d}
	\end{subfigure}
	\hfill
	\begin{subfigure}[b]{0.23\textwidth}
	\centering
		\includegraphics[width=\linewidth]{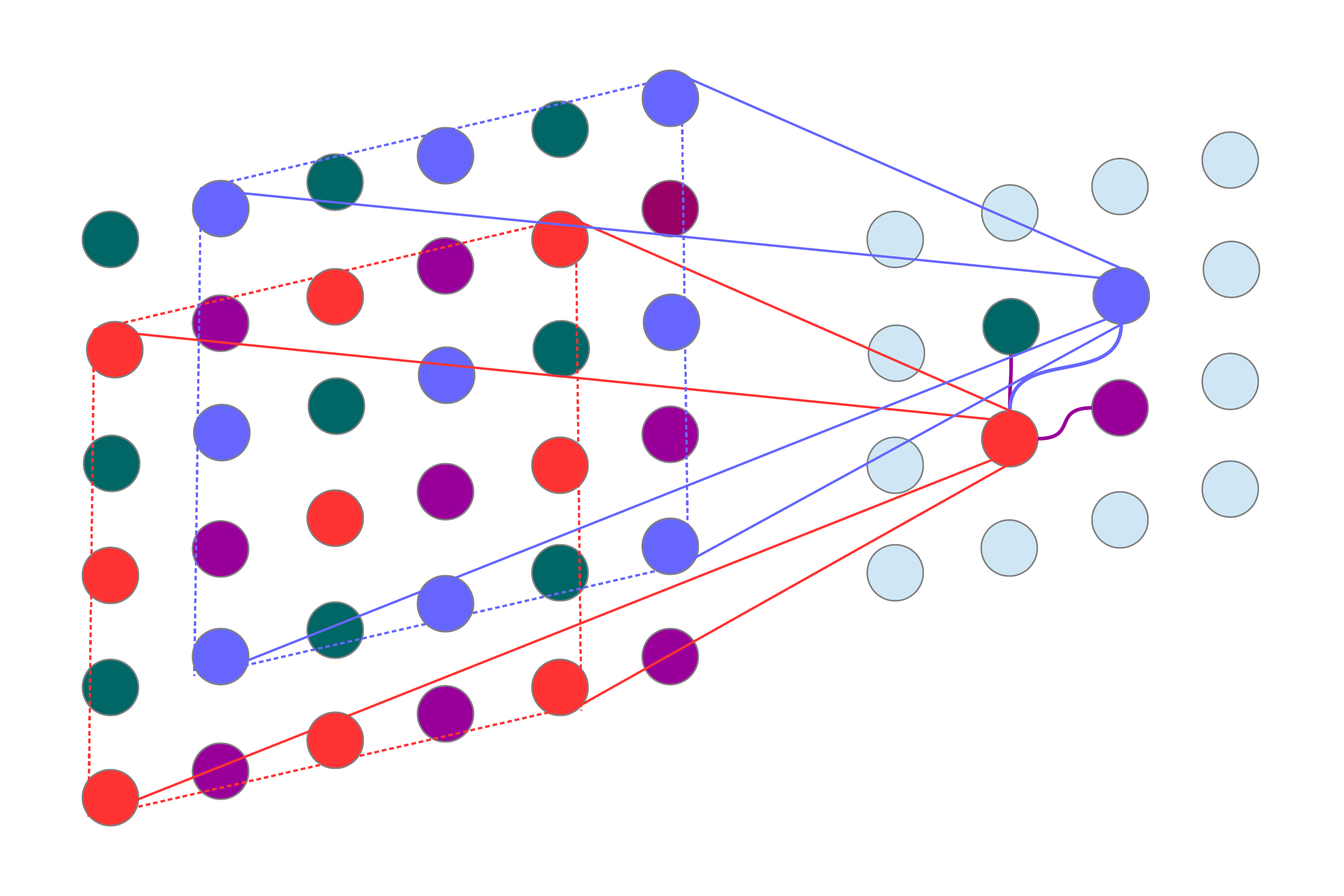}
		\caption{Dilated \ARMA}
		\label{fig:dilated-arma-2d}
	\end{subfigure}
\caption{{\bf Diagrams of receptive field.}
In \ARMA layer {\bf(b)}, each output neuron receives its neighbors' receptive field.
In {\bf(d)}, \ARMA's autoregression fills the gaps created by dilated convolution.}
\label{fig:cmp-2d}
\end{figure}
\vspace{-0.5em}

Our \ARMA layer is complementary to dilated convolutional layer, deformable convolutional layer, 
non-local attention block and encoder-decoder architecture, and can be combined with each of them.
For instance, {\em dilated \ARMA layer}, illustrated in \autoref{fig:dilated-arma-2d}, 
removes the gridding effect caused by dilated convolution ---
the \ARlong kernel can be interpreted as an anti-aliasing filter.  

The motivation of introducing \ARMA layer is to enlarge the effective input region
for each network output without increasing the filter size or network depth, 
thus avoiding the difficulties in training larger or deeper models.
As illustrated in \autoref{fig:cmp-2d}, 
each output neuron in a traditional convolutional layer (\autoref{fig:conv-2d}) 
only receives information from a small input region (the filter size).
However, an \ARMA layer enlarges the region from a local small one to a larger one (\autoref{fig:arma-2d}),
and enables an output neuron to receive information
from a faraway input neuron through the connections to its neighbors.
Now we formally introduce the concept of {\em \ERFlong} (\ERF) to characterize the effective input region. 
And we will provably show that an \ARMA network can have arbitrarily large \ERF 
with a single extra parameter at each layer in Theorem~\ref{thm:erf-arma} in the following subsection.

\subsection{\ERFLONG}
\label{sub:erf}

\ERFLong (\ERF)~\citep{luo2016understanding} measures 
the area of the input region that makes {\em substantial} contribution to an output neuron. 
In this section, we analyze the ERF size of an $L$-layers network 
with \ARMA layers v.s.\ traditional convolutional layers.
Formally, consider an output at location $(i_1, i_2)$, 
the impact from an input pixel at $(i_1 - p_1, i_2 - p_2)$ 
(i.e \  $L$ layers and $(p_1, p_2)$ pixels away) 
is measured by the amplitude of partial derivative
$g(i_1, i_2, p_1, p_2) = \left| \derivativeInline{\tensorInd{Y}{L}{i_1, i_2, t}} 
{\tensorInd{X}{1}{i_1 - p_1, i_2 - p_2, s}}\right|$ (where superscripts index the layers), 
i.e.\ how much the output changes as the input pixel is perturbed. 

\begin{definition} [{\bf \ERFLONG, \ERF}]
\label{def:erf}
Consider an $L$-layers network 
with an $S$-channels input $\tensorSup{X}{1} \in \R^{I_1 \times I_2 \times S}$
and a $T$-channels output $\tensorSup{Y}{L} \in \R^{I_1 \times I_2 \times T}$,
its \ERFlong is defined as the empirical distribution of the gradient maps:
$\text{ERF}(p_1, p_2) = {1 / (I_1 I_2 S T)} \cdot \sum_{s, t, i_1, i_2} \allowbreak
[{g(i_1, i_2, p_1, p_2)} \allowbreak / {\sum_{j_1, j_2} g(j_1, j_2, p_1, p_2)}]$,  
To measure the size of the \ERF, we define its radius $r(\textsl\ERF)$ 
as the standard deviation of the empirical distribution:
{\small
\begingroup
\setlength{\abovedisplayskip}{2pt}
\setlength{\belowdisplayskip}{0pt}
\begin{equation}
r^2\left( \textsl\ERF \right) = 
\sum_{p_1, p_2} \left( p_1^2 + p_2^2 \right) \textsl\ERF\left( p_1, p_2 \right)  -
\left[ \sum_{p_1, p_2} \sqrt{p_1^2 + p_2^2} ~ \textsl\ERF(p_1, p_2) \right]^2
\label{eq:erf-radius}
\end{equation}
\endgroup}%
\end{definition}

Notice that \ERF simultaneously depends on the model parameters 
and a specified input to the network, i.e.\ \ERF is both {\em model-dependent} and {\em data-dependent}. 
Therefore, it is generally intractable to compute the \ERF analytically for any practical neural network. 

We follow the original paper of \ERF~\citep{luo2016understanding} 
to estimate the radius with a simplified linear network.
The paper empirically verifies that such an estimation
is accurate and can be used to guide filter designs.

\begin{theorem}[{\bf \ERF of a linear \ARMA network with dilated convolutions}]
\label{thm:erf-arma}
Consider an $L$-layers linear network,
where the $\ell^{\text{th}}$ layer computes
$\vectorInd{y}{\ell}{i} - \scalarSup{a}{\ell} \vectorInd{y}{\ell}{i-1}
= \sum_{p = 0}^{\scalarSup{K}{\ell} -1} 
 [(1 - \scalarSup{a}{\ell}) / \scalarSup{K}{\ell}] \cdot \vectorInd{y}{\ell - 1}{i - \scalarSup{d}{\ell} p}$
(i.e.\ the \MAlong coefficients are uniform
with length $\scalarSup{K}{\ell}$ and dilation $\scalarSup{d}{\ell}$, 
and the \ARlong coefficients $\vectorInd{a}{\ell}{0} = 1, 
\vectorInd{a}{\ell}{1} = - \scalarSup{a}{\ell} $ has length $2$). 
Suppose $0 \leq \scalarSup{a}{\ell} < 1$ for $1 \leq \ell \leq L$, 
the  \ERF radius of such a linear \ARMA network is
{\small
\begingroup
\setlength{\abovedisplayskip}{2pt}
\setlength{\belowdisplayskip}{2pt}
\begin{equation}
r(\text{\ERF})_{\text{\ARMA}}^2 = \sum_{\ell = 1}^{L}
\left[ \frac{{\scalarSup{d}{\ell}}^2 \left({\scalarSup{K}{\ell}}^2 - 1\right)}{12} 
+ \frac{ \scalarSup{a}{\ell}}{\left(1 - \scalarSup{a}{\ell}\right)^2} \right]
\label{eq:erf-arma}
\end{equation}
\endgroup}%
When $\scalarSup{a}{\ell} = 0, \forall \ell \in [L]$,
the \ARMA layers reduce to (dilated) convolutional layers,
and the \ERF of the resulted linear \CNN has radius
$r(\text{\ERF})_{\text{\CNN}}^2 = \sum_{\ell = 1}^{L}
{\scalarSup{d}{\ell}}^2 ({\scalarSup{K}{\ell}}^2 - 1) / 12$.
\end{theorem}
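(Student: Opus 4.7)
The plan is to view each layer as a discrete-time linear time-invariant filter whose (nonnegative) impulse response is already normalized to a probability distribution on the nonnegative integers. Because the full $L$-layer network is linear, its overall input-to-output impulse response is the convolution of the per-layer impulse responses, and hence corresponds to the distribution of the sum of $L$ independent nonnegative random variables. Under this identification, the radius in Definition~\ref{def:erf} collapses to the ordinary variance of that sum: since the impulse response is supported on $p \geq 0$ we have $|p| = p$, so the formula matches $E[p^2] - (E[p])^2$. Thus $r(\text{\ERF})^2$ equals the sum over $\ell$ of the per-layer variances, and the theorem reduces to computing the variance of a single ARMA layer's impulse response.

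For a single layer with parameters $(K, d, a)$, solve $h_i - a\, h_{i-1} = [(1-a)/K]\sum_{p=0}^{K-1}\delta_{i-dp}$ by observing that the MA driving term is a uniform measure on $\{0, d, 2d, \ldots, (K-1)d\}$ with total mass $1-a$, while the inverse AR operator $(1 - a z^{-1})^{-1}$ has causal impulse response $a^n$ for $n \geq 0$ with total mass $1/(1-a)$. The layer impulse response is therefore the convolution of a discrete uniform distribution on $\{0, d, \ldots, (K-1)d\}$ and a geometric distribution $(1-a)a^n$, with total mass exactly $1$. The variance of the uniform piece is $d^2(K^2-1)/12$ and that of the geometric is $a/(1-a)^2$, and independence makes these two pieces additive within a layer. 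Summing first over the two pieces within a layer and then over $\ell = 1, \ldots, L$ reproduces \autoref{eq:erf-arma}; specializing to $a^{(\ell)} = 0$ recovers the pure-CNN formula.

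The main subtle point, and the only non-mechanical step, is justifying that the ERF in Definition~\ref{def:erf} genuinely reduces to this single normalized impulse response in the linear, scalar-per-layer setting. One verifies that $\partial y^{(L)}_i / \partial x^{(1)}_{i - p}$ equals $h^{\text{total}}_p$ independently of $i$, that this quantity is nonnegative because every MA weight $(1-a)/K$ and every AR tail $a^n$ is nonnegative, and that the averaging over $(i_1, i_2, s, t)$ built into the definition is therefore redundant and simply reproduces $h^{\text{total}}$. Once this identification is in place, the remaining argument uses only textbook variance formulas for the discrete uniform and geometric laws together with the convolution-adds-variance identity, so no further estimation or approximation is required.
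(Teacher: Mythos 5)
Your proposal is correct and follows the same skeleton as the paper's proof: both reduce the ERF radius to a sum of per-layer variances by interpreting each layer's normalized, nonnegative impulse response as a probability mass function (so that composing layers convolves distributions and adds variances, which is exactly the paper's Theorem~\ref{thm:erf-cnn}), and both invert the AR part to write each layer as an MA filter with infinitely many coefficients. Where you diverge is in the per-layer variance computation: the paper defines the moment generating function of $\vectorSup{f}{\ell} = \vectorSup{\inverse{a}}{\ell} \ast \vectorSup{w}{\ell}$, differentiates it twice at $\lambda = 0$, and assembles $\scalarSub{M}{2} - \scalarSub{M}{1}^2$ by hand, whereas you observe that $\vectorSup{f}{\ell}$ is itself the convolution of a normalized uniform law on $\{0, d, \ldots, (K-1)d\}$ (variance $d^2(K^2-1)/12$) with a geometric law $(1-a)a^n$ (variance $a/(1-a)^2$), so the variance-additivity argument applies a second time within the layer and the two terms of \autoref{eq:erf-arma} drop out immediately. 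Your route is more elementary and makes the structure of the answer transparent; the paper's MGF machinery is more mechanical but generalizes more readily to AR filters that do not factor into recognizable named distributions. You also correctly flag the one genuinely non-mechanical step --- that in the linear scalar setting the gradient map is location- and data-independent and nonnegative, so the averaging in Definition~\ref{def:erf} is vacuous and the radius is an honest variance --- which is precisely what the paper's Theorem~\ref{thm:erf-cnn} establishes.
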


Theorem~\ref{thm:erf-arma} is proved in \autoref{app:receptive}.
If the coefficients for different layers are identical,
the radius reduces to $r(\text{\ERF})_{\text{\ARMA}} = 
\sqrt{L} \cdot \sqrt{d^2 (K^2 - 1) / 12 + a / (1 - a)^2}$.

{\it Remarks:} 
\textbf{(1) Compared with a (dilated) \CNN, 
an \ARMA network can have arbitrarily large \ERF
with an extra parameter $a$ at each layer.} 
When the \ARlong coefficient $a$ is large (e.g.\ $a > 1 - 1 / (dK)$),
the second term $a / (1 - a)^2 $ dominates the radius, 
and the \ERF is substantially larger than that of a \CNN.
In particular, the radius tends to infinity as $a$ approaches $1$.
\textbf{(2) An \ARMA network can adaptively adjust its \ERF through learnable parameter $a$.}
As $a$ gets smaller (e.g.\ $a < 1 - 1 / (dK)$),
the second term is comparable to or smaller than the first term, 
and the effect of expanded \ERF diminishes.
In particular if $a = 0$, an \ARMA network reduces to a \CNN.

\textbf{Visualization of the \ERF.} 
In Theorem~\ref{thm:erf-arma},
we analytically show that the \ARMA's radius of \ERF increases with  
the network depth and magnitude of the \ARlong coefficients. 
We now verify our analysis by simulating linear \ARMA networks 
with a single extra parameter (an \ARlong coefficient $a$) in each layer 
under varying depths and magnitude of the \ARlong coefficient $a$.
Shown in \autoref{fig:ERF}, as the \ARlong coefficient get larger, the radius of the \ERF increases.
When the \ARlong coefficient is zero (i.e.\ $a = 0$), 
an \ARMA network reduces to a traditional convolutional network.  
The simulation results also indicate that the \ERF expands as the networks get deeper, 
and \ARMA's ability to expand the \ERF increases as the networks get deeper.
In conclusion, an \ARMA network can have a large \ERF even when the network is shallow, 
and its ability to expand the \ERF increases as the network gets deeper.

\vspace{-0.5em}
\begin{figure}[!htbp]
\centering
{\small
	\begin{tabular}{c c | c c}
	a & \hspace{0.4cm} \(L = 1\) \hfill \(L = 3\) \hfill \(L = 5\)  \hspace{0.4cm} 
	& a & \hspace{0.4cm} \(L = 1\) \hfill \(L = 3\) \hfill \(L = 5\) \hspace{0.4cm} \\
	\raisebox{0.8cm}{\begin{tabular}{c} \(0.0\) \\ (CNN) \end{tabular}} & \hspace{-0.2cm}
	\includegraphics[width=0.38\linewidth]{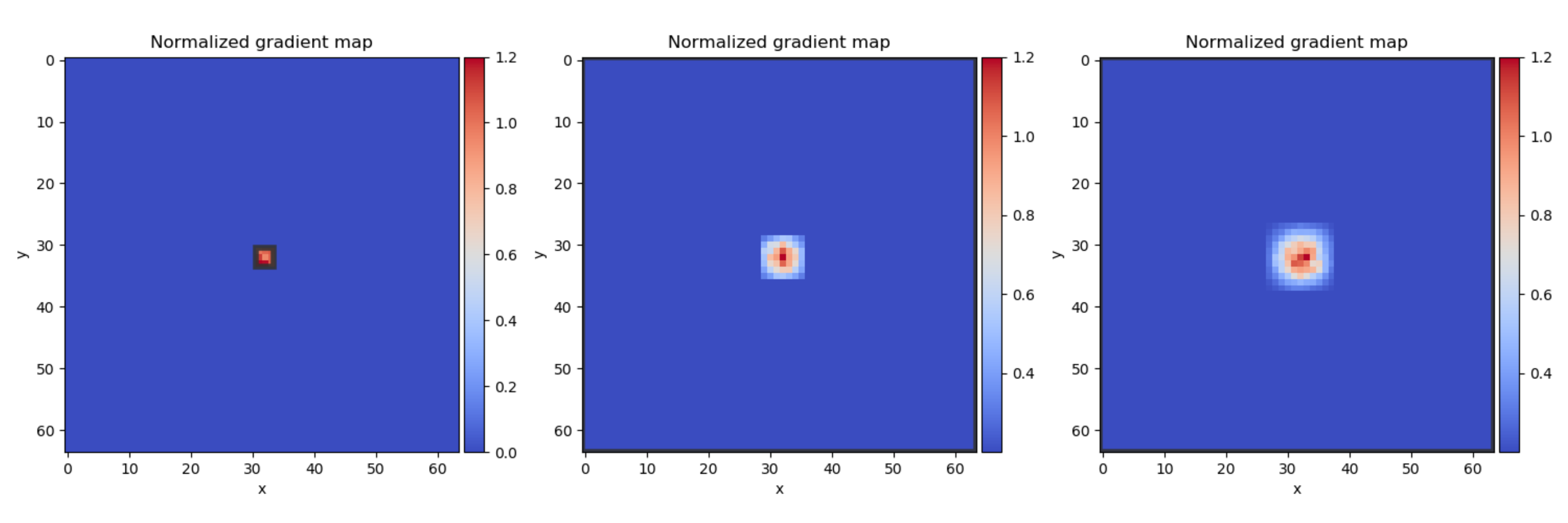}
	& \raisebox{0.8cm}{\(0.8\)} & \hspace{-0.2cm} 
	\includegraphics[width=0.38\linewidth]{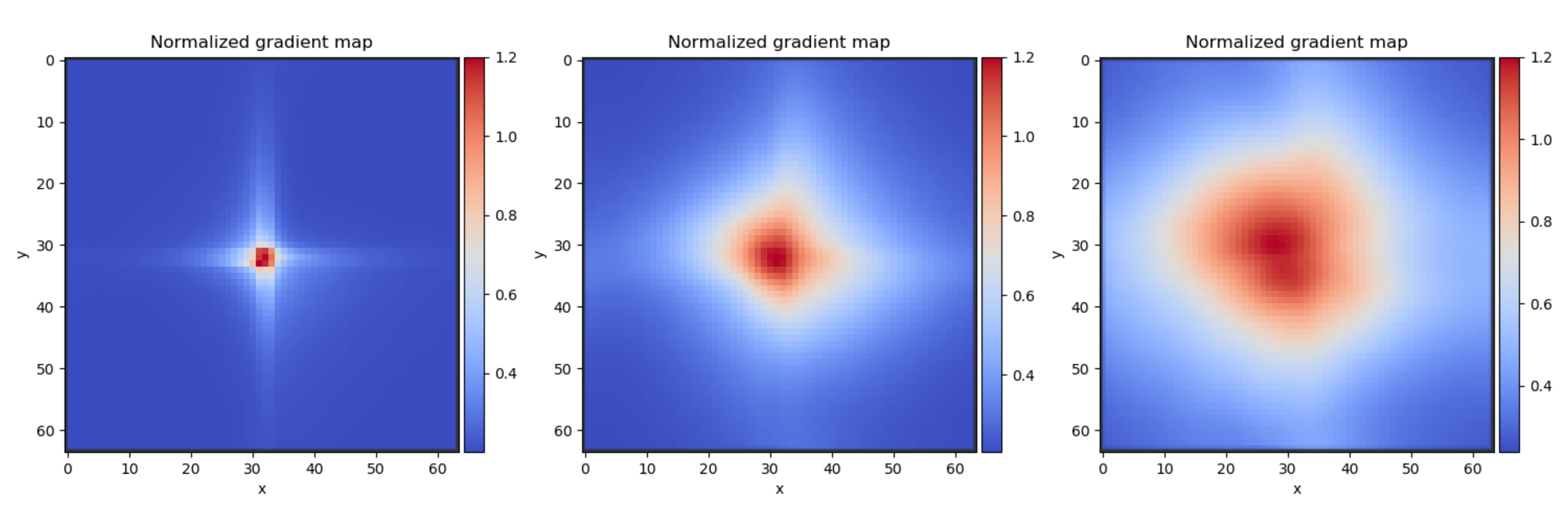} \\
	\raisebox{0.8cm}{\(0.6\)} & \hspace{-0.2cm} 
	\includegraphics[width=0.38\linewidth]{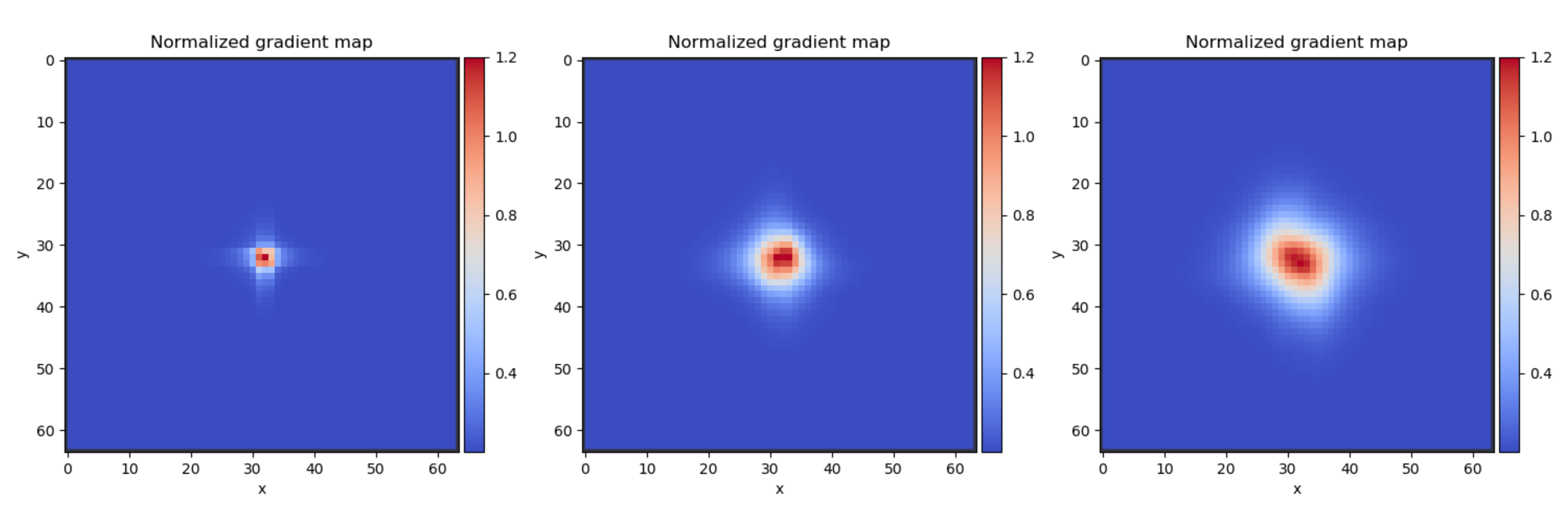} 
	& \raisebox{0.8cm}{\(0.9\)} & \hspace{-0.2cm} 
	\includegraphics[width=0.38\linewidth]{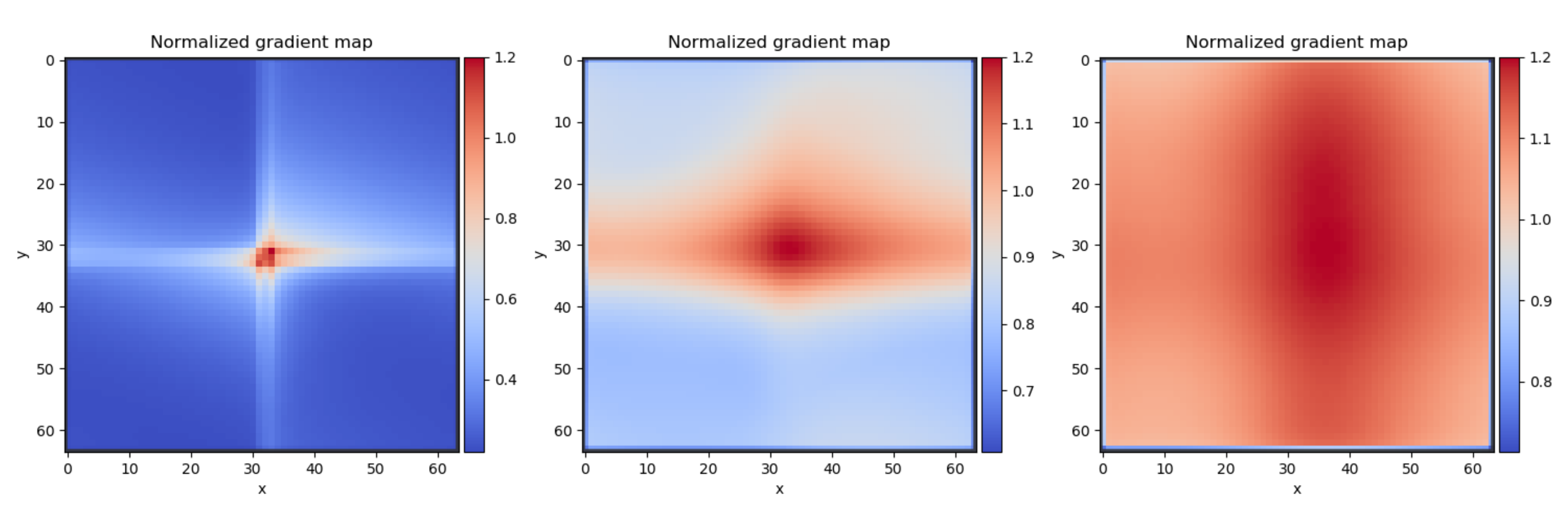} \\
	\end{tabular}
\caption{
Visualization of {\ERF} in linear {\ARMA} networks with a single extra parameter (an autoregressive coefficient $a$) in each layer, under different network depth $L=1,3,5$ and different magnitude of the autoregressive coefficient $a=0.0, 0.6, 0.8,0.9$.}
\label{fig:ERF}
}%
\end{figure}
\vspace{-1.5em}

\section{Prediction and Learning of \ARMA Layer}
\label{sec:arma-computation}

In the \ARMA layer, each neuron is influenced by its neighbors from all directions (see \autoref{fig:arma-2d}).
As a result, no neurons could be evaluated alone before evaluating any other neighboring neurons.
To compute \autoref{eq:arma-cnn}, we thus need to solve a system of linear equations to obtain all values simultaneously.
{\bf (1)} However, the standard solver using Gaussian elimination is too expensive to be practical, 
and therefore we need to seek for a more efficient solution.
{\bf (2)} Furthermore, the solver for the system of linear equations is typically not automatic differentiable,
and we have to derive the backward equations analytically.
{\bf (3)} Finally, we also need to devise an efficient algorithm to
compute the backpropagation equations efficiently.
In the section, we address these aforementioned problems.

\textbf{Decomposing \ARMA Layer.}
We decompose the \ARMA layer in \autoref{eq:arma-cnn} 
into a {\MAlong} (\MA) layer and an {\ARLong} layer, 
with $\mytensor{T} \in \R^{I_1 \times I_2 \times T}$ as an intermediate result:
{\small
\begingroup
\setlength{\abovedisplayskip}{2pt}
\setlength{\belowdisplayskip}{0pt}
\begin{equation}
\textsf{\MA Layer: } ~ 
\tensorSub{T}{\bm{:}, \bm{:}, t} = \sumIndex{s}{S}
\tensorSub{W}{\bm{:}, \bm{:}, t, s} \conv \tensorSub{X}{\bm{:}, \bm{:}, s}; \quad
\textsf{\AR Layer: } ~ 
\tensorSub{A}{\bm{:}, \bm{:}, t} \conv \tensorSub{Y}{\bm{:}, \bm{:}, t} 
= \tensorSub{T}{\bm{:}, \bm{:}, t}
\label{eq:arma-cnn-steps}
\end{equation}
\endgroup}%

\begin{wraptable}{r}{0.48\textwidth}
\vspace{-1.5em}
\setlength{\tabcolsep}{3pt}
\small{
	\begin{tabular}{c | c | c | c}
	Layer & \# params. & \# FLOPs & $r(\textsl{\ERF})^2$ \\
	\hline
	Conv. & $K_w^2 C^2$ & $O(I^2 K_w^2 C^2)$ & $O(L K_w^2)$ \\ 
	\hline
	\ARMA
	& $\begin{gathered} K_w^2 C^2 \\ + \textcolor{red}{K_a^2 C} \end{gathered}$
	& $\begin{gathered} O(K_w^2 I^2  C^2 + \\ \textcolor{red}{ I^2  \log(I)\ C}) \end{gathered}$
	& $\begin{gathered} O\big(L K_w^2 + \\ \textcolor{blue}{L \frac{a}{(1 - a)^2}} \big) \end{gathered}$
	\end{tabular}
	}
\captionof{table}{{\small{
\ARMA layer achieves large
\textcolor{blue}{gain} of \ERF radius through small \textcolor{red}{overhead} of extra \# of parameters and \# of FLOPs. Through a single extra parameter $a$ (thus $K_a = 2$), the \ERF radius can be arbitrarily large.
For notational simplicity, we assume the heights/widths are equal $I_1$$=$$I_2$$=$$I_1^{\prime}$$=$$I_2^{\prime}$$ =$$ I$,  and the input and output channels are the same $S $$=$$ T $$=$$ C$.}}}
\vspace{-2em}
\label{tab:cmp-arma-conv}
\end{wraptable}

\textbf{Difficulty in Computing the \AR Layer.} 
While the \MA layer is simply a traditional convolutional layer, 
it is nontrivial to solve the \AR layer.
Naively using Gaussian elimination, the linear equations in the \AR layer
can be solved in time cubic in dimension $O((I_1^2 + I_2^2) I_1I_2 T)$, which is too expensive.

\textbf{Solving the \AR Layer.}
We propose to use the frequency-domain division~\citep{lim1990two} 
to solve the {\em deconvolution} problem in the \AR layer.
Since the convolution in spatial domain leads to element-wise product in frequency domain, we first transform 
\(\mytensor{A}, \mytensor{T}\) into their frequency representations \(\mytensor{\Fourier{A}},\mytensor{\Fourier{T}}\), 
with which we compute \(\mytensor{\Fourier{Y}}\)
(the frequency representation of \(\mytensor{Y}\)) with element-wise division.
Then, we reconstruct the output \(\mytensor{Y}\) by an inverse Fourier transform of \(\mytensor{\Fourier{Y}}\).

\textbf{Computational Overhead.} \ARMA trades small overhead of 
extra number of parameters and computation for large gain of \ERF radius as shown in \autoref{tab:cmp-arma-conv}.
With {\em \FFTLONG} (\FFT), the FLOPS required by the extra autoregressive layer is 
$O(\log(\max(I_1, I_2)) I_1 I_2 T)$ (see \autoref{app:computation} for derivations).
Importantly, compared with non-local attention block~\citep{wang2018non}, the extra computation introduced in a \ARMA layer is smaller; a non-local attention block requires $O(I_1^2 I_2^2 T)$ FLOPS.

\textbf{Backpropagation.} 
Deriving the backpropagation for \autoref{eq:arma-cnn-steps} is nontrivial; 
although backpropagation rule for \MA layer is conventional, that of \AR layer is not. 
In Theorem~\ref{thm:ar-backprop} we show that
backpropagation of an \AR layer can be computed as two \ARMA models.
\begin{theorem} [{\bf Backpropagation of \ARMA layer}]
\label{thm:ar-backprop}
Given \(\tensorSub{A}{\bm{:}, \bm{:}, t} \ast \tensorSub{Y}{\bm{:}, \bm{:}, t} = \tensorSub{T}{\bm{:}, \bm{:}, t}\) 
and the gradient \(\gradientInline{\mytensor{Y}}\), the gradients \(\{\gradientInline{\mytensor{A}}, \gradientInline{\mytensor{X}}\}\) can be obtained by two \ARMA models:
{\small
\begingroup
\setlength{\abovedisplayskip}{2pt}
\setlength{\belowdisplayskip}{2pt}
\begin{equation}
\tensorSub{\adjoint{A}}{\bm{:}, \bm{:}, t} \conv \gradient{\tensorSub{A}{\bm{:}, \bm{:}, t}} =
- \tensorSub{\adjoint{Y}}{\bm{:}, \bm{:}, t} \conv \gradient{\tensorSub{Y}{\bm{:}, \bm{:}, t}}; \quad 
\tensorSub{\adjoint{A}}{\bm{:}, \bm{:}, t} \conv \gradient{\tensorSub{T}{\bm{:}, \bm{:}, t}} =
\gradient{\tensorSub{Y}{\bm{:}, \bm{:}, t}}
\label{eq:arma-cnn-backprop}
\end{equation}
\endgroup}%
where \(\tensorSub{\adjoint{A}}{\bm{:}, \bm{:}, t}\) and \(\tensorSub{\adjoint{Y}}{\bm{:}, \bm{:}, t}\) 
are the transposed images of \(\tensorSub{A}{\bm{:}, \bm{:}, t}\) and \(\tensorSub{Y}{\bm{:}, \bm{:}, t}\)
(e.g.\ \(\tensorSub{\adjoint{A}}{i_1, i_2, t} = \tensorSub{A}{-i_1, -i_2, t}\)).
\end{theorem}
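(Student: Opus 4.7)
The plan is to derive both gradients from the implicit AR constraint $A \ast Y = T$ by the standard adjoint (implicit-differentiation) method, and then to recognize each resulting formula as an ARMA-type convolution so that the same FFT-based solver built for the forward AR layer handles both backpropagation equations. Since the AR step in \autoref{eq:arma-cnn-steps} acts channelwise in $t$, it suffices to work one output channel at a time; I suppress $t$ throughout and treat $A, Y, T$ as 2D tensors.

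\textbf{Step 1 (transpose identity for convolution).} First I would record the algebraic fact that, when convolution by $A$ is viewed as a linear operator $M_A$ on flattened tensors, its transpose is convolution by the ``flipped'' kernel $A^*_{i_1, i_2} = A_{-i_1, -i_2}$ — equivalently $\langle u, A \ast v\rangle = \langle A^* \ast u, v\rangle$ for all conformable $u, v$. This is a one-line shift of summation indices but underlies everything else; $A^*$ here is exactly the $\adjoint{A}$ of the theorem statement, and likewise for $Y^*$.

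\textbf{Step 2 (gradient with respect to $T$).} The AR constraint is linear in $T$ with $Y = M_A^{-1} T$, so the chain rule gives $M_A^{\top} \nabla T = \nabla Y$; by Step 1 this is exactly $A^* \ast \nabla T = \nabla Y$, which is the second equation of the theorem.

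\textbf{Step 3 (gradient with respect to $A$).} For a single entry $A_j$ with $j = (j_1, j_2)$, differentiating the constraint implicitly in $A_j$ (holding $T$ fixed) yields $A \ast (\partial Y / \partial A_j) = -\sigma_j Y$, where $\sigma_j$ denotes the shift $(\sigma_j Y)_i = Y_{i - j}$. Combining this with the chain rule and the adjoint identity of Step 1 gives $\partial L / \partial A_j = -\langle \nabla Y, M_A^{-1} \sigma_j Y\rangle = -\langle M_A^{-\top}\nabla Y, \sigma_j Y\rangle = -\langle \nabla T, \sigma_j Y\rangle$, and a last index shift rewrites this compactly as $-(Y^* \ast \nabla T)_j$. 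Hence $\nabla A = -\, Y^* \ast \nabla T$. Convolving both sides by $A^*$, commuting the two convolutions, and substituting the Step 2 identity $A^* \ast \nabla T = \nabla Y$ produces $A^* \ast \nabla A = -\, Y^* \ast \nabla Y$, which is the first equation.

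The only real obstacle is the index bookkeeping around the two reflections $A \mapsto A^*$, $Y \mapsto Y^*$ and the shift operator $\sigma_j$ in Step 3 — getting these signs and indices straight is the one place where it is easy to slip. Once Step 1 is cleanly in hand the rest is chain rule plus substitution. The structural payoff is that both resulting equations have the form $A^* \ast (\text{unknown}) = (\text{known})$, i.e.\ a deconvolution by the flipped AR kernel; this is precisely an \ARMA model, so the FFT-based forward-pass algorithm of \autoref{sec:arma-computation} applies verbatim to compute each backpropagation quantity.
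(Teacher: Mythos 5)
Your proposal is correct and follows essentially the same route as the paper's own (real-number) proof in \autoref{app:computation}: implicit differentiation of the constraint $A \ast Y = T$, followed by the chain rule and the transpose/adjoint identity for convolution, with your operator-adjoint bookkeeping playing the role of the paper's explicit inverse-convolution and inverse-transposed-convolution lemmas. The only difference is cosmetic — you pass through the explicit intermediate $\gradient{\mytensor{A}} = -\adjoint{\mytensor{Y}} \conv \gradient{\mytensor{T}}$ before convolving with $\adjoint{\mytensor{A}}$, and the paper additionally supplies a second, DFT-based proof for the circular-convolution case that also yields the FFT evaluation you mention.
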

Since the backpropagation is characterized by \ARMA models,
it can be evaluated efficiently using \FFT similar to \autoref{eq:arma-cnn-steps}.
The proof of Theorem~\ref{thm:ar-backprop} with its \FFT evaluation is given in \autoref{app:computation}.

\section{Stability of \ARMA Layers}
\label{sec:stability}

An \ARMA model with arbitrary coefficients is not always stable.
For example, the model \( \vectorSub{y}{i} - a \vectorSub{y}{i-1} = \vectorSub{x}{i} \) is unstable if \(|a| > 1\): 
Consider an input \(\myvector{x}\) with \(\vectorSub{x}{0} = 1\) and \(\vectorSub{x}{i} = 0, \forall i \neq 0\), 
the output \(\myvector{y}\) will recursively amplify itself as
\(\vectorSub{y}{0} = 1, \vectorSub{y}{1} = a, \cdots, \vectorSub{y}{i} = a^{i}\) and diverge to infinity.

\subsection{Stability Constraints for \ARMA layer}
\label{sub:arma-stability}

The key to guarantee stability of an \ARMA layer is 
to constrain its \ARlong coefficients, which
prevents the output from repeatedly amplifying itself.
To derive the constraints, we propose a special design, {\em separable \ARMA layer}
inspired by {\em separable filters}~\citep{lim1990two}.

\begin{definition}[\bf Separable \ARMA Layer]
\label{def:separable-arma-cnn}
A separable \ARMA layer is parameterized by a \MAlong kernel 
\(\mytensor{W} \in \R^{K_{w} \times K_{w} \times S \times T}\) and 
\(T \times Q\) sets of \ARlong filters
\(\{( \matrixInd{f}{q}{\bm{:}, t}, \matrixInd{g}{q}{\bm{:}, t})_{q = 1}^{Q}\}_{t = 1}^{T}\),.
It takes an input \(\mytensor{X} \in \R^{I_1 \times I_2 \times S}\) 
and returns an output \(\mytensor{Y} \in \R^{I^{\prime}_1 \times I^{\prime}_2 \times T}\) as
{\small
\begingroup
\setlength{\abovedisplayskip}{2pt}
\setlength{\belowdisplayskip}{2pt}
\begin{equation}
\left( \matrixInd{f}{1}{\bm{:}, t} \conv \cdots \conv \matrixInd{f}{Q}{\bm{:}, t} \right) \otimes 
\left( \matrixInd{g}{1}{\bm{:}, t} \conv \cdots \conv \matrixInd{g}{Q}{\bm{:}, t} \right) \conv 
\tensorSub{Y}{\bm{:}, \bm{:}, t} = \sumIndex{s}{S} \
\tensorSub{W}{\bm{:}, \bm{:}, t, s} \conv \tensorSub{X}{\bm{:}, \bm{:}, s}
\label{eq:separable-arma-cnn}
\end{equation}
\endgroup}%
where the filters \( \matrixInd{f}{q}{\bm{:}, t}, \matrixInd{g}{q}{\bm{:}, t} \in \R^{3} \) are length-$3$, 
and \(\otimes\) denotes outer product of two 1D-filters.
\end{definition}

{\it Remarks:}
Each autoregressive filter \(\tensorSub{A}{\bm{:}, \bm{:}, t}\) is designed to be separable, 
i.e.\ \(\tensorSub{A}{\bm{:}, \bm{:}, t} = \matrixSub{F}{\bm{:}, t} \otimes \matrixSub{G}{\bm{:}, t}\), thus it
can be characterized by 1D-filters \( \matrixSub{F}{\bm{:}, t}\) and \( \matrixSub{G}{\bm{:}, t}\). 
By the fundamental theorem of algebra~\citep{oppenheim2014discrete}, 
any 1D-filter can be represented as a composition of length-3 filters.
Therefore, \( \matrixSub{F}{\bm{:}, t}\) and \( \matrixSub{G}{\bm{:}, t}\) can further be factorized as
\( \matrixSub{F}{\bm{:}, t} = \matrixInd{f}{1}{\bm{:}, t} \conv
\matrixInd{f}{2}{\bm{:}, t} \cdots \conv \matrixInd{f}{Q}{\bm{:}, t}\) and 
\( \matrixSub{G}{\bm{:}, t} = \matrixInd{g}{1}{\bm{:}, t} \conv 
\matrixInd{g}{2}{\bm{:}, t} \cdots \conv \matrixInd{g}{Q}{\bm{:}, t}\).
In summary, each \(\tensorSub{A}{\bm{:}, \bm{:}, t}\) is characterized by
\(Q\) sets of length-$3$ \ARlong filters \(( \matrixInd{f}{q}{\bm{:}, t}, \matrixInd{g}{q}{\bm{:}, t})_{q=1}^{Q}\).

\vspace{-1em}
\begin{figure}[!htbp]
	\includegraphics[width=\textwidth]{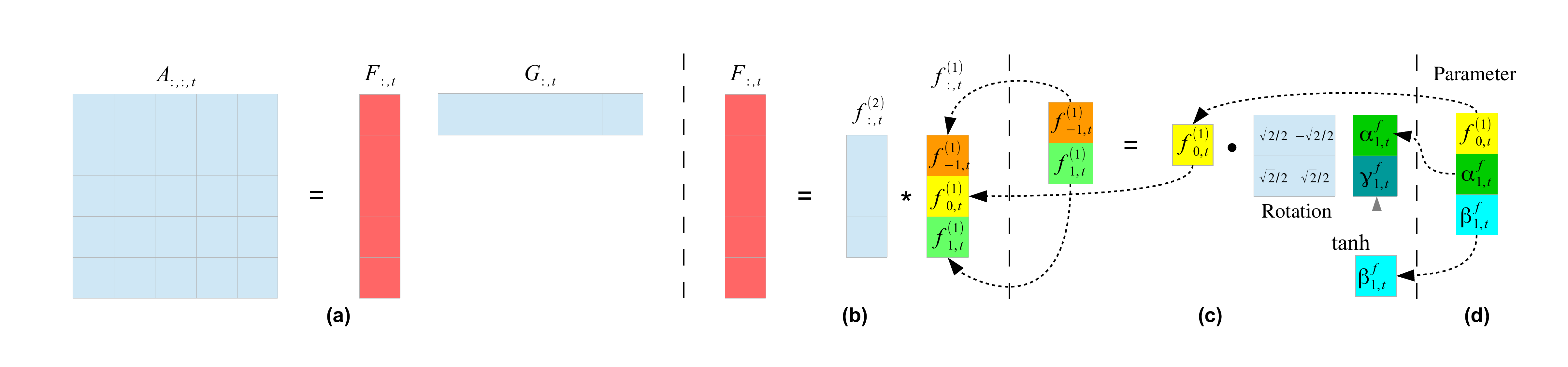}
	\vspace{-2em}
\captionof{figure}{For each channel $t$, 
\textbf{(a)} the two-dimensional filter $\tensorSub{A}{\bm{:}, \bm{:}, t}$ is parameterized 
 through an outer product of two 1D-filters $\matrixSub{F}{\bm{:}, t}$ and $\matrixSub{G}{\bm{:}, t}$;
\textbf{(b)} $\matrixSub{F}{\bm{:}, t}$ is parameterized through a convolution of 
$\matrixInd{f}{1}{\bm{:}, t} \conv \cdots \conv \matrixInd{f}{Q}{\bm{:}, t}$,
and similarly $\matrixSub{G}{\bm{:}, t}$ as a convolution of 
$\matrixInd{g}{1}{\bm{:}, t} \conv \cdots \conv \matrixInd{g}{Q}{\bm{:}, t}$;
\textbf{(c)} we re-parameterize each constrained $(\matrixInd{f}{q}{-1, t}, \matrixInd{f}{q}{1, t})$
to unconstrained $(\matrixSub{\alpha^{f}}{q, t}, \matrixSub{\beta^{f}}{q, t})$,
and similarly $(\matrixInd{g}{q}{-1, t}, \matrixInd{g}{q}{1, t})$ 
to $(\matrixSub{\alpha^{g}}{q, t}, \matrixSub{\beta^{g}}{q, t})$;
\textbf{(d)} final parameters for unconstrained optimization are 
$(\matrixSub{\alpha^{f}}{q, t}, \matrixSub{\beta^{f}}{q, t}, 
\matrixSub{\alpha^{g}}{q, t}, \matrixSub{\beta^{g}}{q, t})_{q = 1}^{Q}$.}
\end{figure}

\begin{theorem} [{\bf Constraints for Stable Separable \ARMA Layer}]
\label{thm:stability-arma}

A sufficient condition for the separable \ARMA layer 
(Definition~\ref{def:separable-arma-cnn}) 
to be stable (i.e.\ output be bounded for any bounded input) is:
{\small
\begingroup
\setlength{\abovedisplayskip}{2pt}
\setlength{\belowdisplayskip}{0pt}
\begin{equation}
\left| \matrixInd{f}{q}{-1, t} + \matrixInd{f}{q}{1, t} \right| < \matrixInd{f}{q}{0, t}, ~
\left| \matrixInd{g}{q}{-1, t} + \matrixInd{g}{q}{1, t} \right| < \matrixInd{g}{q}{0, t}, ~
\forall q \in [Q], t \in [T].
\label{eq:second-order-constraint}
\end{equation}
\endgroup}%
\end{theorem}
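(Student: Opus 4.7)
The plan is to peel off the structure of the separable ARMA layer until we reach an atomic 1D length-3 deconvolution, at which point stability can be verified directly from the $z$-transform.

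First, I would exploit the tensor-product structure. For each output channel $t$, the AR kernel factors as $\tensorSub{A}{\bm{:}, \bm{:}, t} = \matrixSub{F}{\bm{:}, t} \otimes \matrixSub{G}{\bm{:}, t}$, so the 2D convolution $\tensorSub{A}{\bm{:}, \bm{:}, t} \conv \tensorSub{Y}{\bm{:}, \bm{:}, t}$ splits into a 1D convolution by $\matrixSub{F}{\bm{:}, t}$ along one axis composed with a 1D convolution by $\matrixSub{G}{\bm{:}, t}$ along the other. Hence solving the 2D AR equation amounts to two cascaded 1D AR deconvolutions, one per axis; since BIBO stability composes (a cascade of stable systems maps bounded inputs to bounded outputs), it suffices to prove stability of each 1D deconvolution separately.

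Next, I would use the factorization $\matrixSub{F}{\bm{:}, t} = \matrixInd{f}{1}{\bm{:}, t} \conv \cdots \conv \matrixInd{f}{Q}{\bm{:}, t}$ (and analogously for $\matrixSub{G}{\bm{:}, t}$). Because convolution is associative, 1D deconvolution by $\matrixSub{F}{\bm{:}, t}$ is itself a cascade of $Q$ length-3 deconvolutions, so the whole problem collapses to a single atomic claim: every length-3 1D filter $f = (f_{-1}, f_0, f_1)$ satisfying $|f_{-1} + f_1| < f_0$ produces a BIBO-stable deconvolution $f \conv y = x$.

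For the atomic claim, I would pass to the $z$-domain: $F(z) = f_{-1}\, z + f_0 + f_1\, z^{-1}$, and show that $F$ has no zeros on the unit circle. Writing $z = e^{j\omega}$ and separating real and imaginary parts,
\[
F(e^{j\omega}) = \bigl(f_0 + (f_{-1}+f_1)\cos\omega\bigr) + j\,(f_{-1}-f_1)\sin\omega.
\]
When $f_{-1} \neq f_1$, the imaginary part vanishes only at $\omega \in \{0, \pi\}$, where the real part becomes $f_0 \pm (f_{-1}+f_1)$, nonzero by hypothesis (which also forces $f_0 > 0$). When $f_{-1} = f_1$, the real part is $f_0 + 2 f_{-1} \cos\omega$, whose range $[f_0 - 2|f_{-1}|,\, f_0 + 2|f_{-1}|]$ excludes $0$ since $2|f_{-1}| = |f_{-1}+f_1| < f_0$. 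Hence the roots of the characteristic polynomial $f_{-1} z^2 + f_0 z + f_1$ avoid $|z|=1$, and a standard partial-fraction split of $1/F(z)$ writes the inverse filter as a causal geometric sequence (from the root strictly inside the unit disk) plus an anti-causal geometric sequence (from the root strictly outside), both with rates less than $1$ in modulus, hence both absolutely summable. The resulting two-sided impulse response lies in $\ell^{1}(\mathbb{Z})$, giving BIBO stability. The degenerate cases $f_{-1}=0$ or $f_{1}=0$ reduce to a first-order recursion and are handled identically.

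The main obstacle is this atomic step. The hypothesis $|f_{-1}+f_1| < f_0$ is noticeably weaker than the symmetric bound $|f_{-1}|+|f_1| < f_0$ that would make the triangle inequality give $|F(e^{j\omega})| \ge f_0 - (|f_{-1}|+|f_1|) > 0$ trivially; instead one has to exploit the exact phase cancellation between $f_{-1} e^{j\omega}$ and $f_1 e^{-j\omega}$, which is precisely why splitting into real and imaginary parts (rather than bounding the modulus directly) is essential. The separability and cascade reductions are essentially bookkeeping once the cascaded-stability principle is invoked.
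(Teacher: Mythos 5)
Your proposal is correct and follows the paper's reduction exactly: peel the layer into a cascade of the (always-stable) moving-average convolution and the autoregressive deconvolution, use separability to split the 2D deconvolution into two 1D cascades, factor each 1D filter into length-3 pieces, and invoke closure of BIBO stability under cascade and concatenation. Where you genuinely diverge is the atomic length-3 step. The paper works with the characteristic polynomial $F(z)=f_{-1}z^{2}+f_{0}z+f_{1}$ (after normalizing $f_{0}=1$), first argues the roots must be real (the condition $|f_{-1}+f_{1}|<f_{0}$ forces $4f_{-1}f_{1}\le (f_{-1}+f_{1})^{2}<f_{0}^{2}$, so the discriminant is positive), and then uses the sign test $F(1)\cdot F(-1)<0 \iff (f_{-1}+f_{1})^{2}<f_{0}^{2}$ to place one root strictly inside and one strictly outside the unit circle, which is exactly the ring-shaped region of convergence containing $|z|=1$. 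You instead show directly that $F(e^{j\omega})$ never vanishes by separating real and imaginary parts, which neatly sidesteps the reality-of-roots argument and correctly exploits the phase cancellation that makes the asymmetric bound $|f_{-1}+f_{1}|<f_{0}$ (rather than $|f_{-1}|+|f_{1}|<f_{0}$) the right hypothesis. The one small gap is that "no zeros on the unit circle" does not by itself tell you the roots straddle the circle, yet your partial-fraction step asserts one root strictly inside and one strictly outside; that configuration does hold here, but to justify it you need precisely the paper's observation that $F(1)>0>F(-1)$ under the hypothesis (or, alternatively, note that stability only requires choosing, for each root, the causal or anti-causal expansion according to whether it lies inside or outside, so the exact configuration is immaterial). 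With that sentence added, your argument is complete and marginally more self-contained than the paper's.
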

The proof is deferred to \autoref{app:stability}, 
which follows the standard techniques using Z-transform.

\subsection{Achieving stability via re-parameterization}
\label{sub:arma-reparameterization}

In principle, the constraints required for stability in a \ARMA layer 
as in Theorem~\ref{thm:stability-arma} 
could be enforced through constrained optimization.
However, constrained optimization algorithm, 
such as projected gradient descent~\citep{bertsekas2015convex}, 
is more expensive as it requires an extra projection step.
Moreover, it could be more difficult to achieve convergence.
In order to avoid the aforementioned challenges,
we introduce a {\em re-parameterization} mechanism 
to remove constraints needed to guarantee stability in ARMA layer. 

\begin{theorem}[Re-parameterization]
\label{thm:stability-param}
For a separable \ARMA layer in Definition~\ref{def:separable-arma-cnn},
if we re-parameterize each tuple 
\((\matrixInd{f}{q}{-1, t}, \matrixInd{f}{q}{1, t}, 
\matrixInd{g}{q}{-1, t}, \matrixInd{g}{q}{1, t})\) 
as learnable parameters 
\((\matrixSub{\alpha^{f}}{q, t}, \matrixSub{\beta^{f}}{q, t}, 
\matrixSub{\alpha^{g}}{q, t}, \matrixSub{\beta^{g}}{q, t})\): 
{\small
\setlength{\abovedisplayskip}{2pt}
\setlength{\belowdisplayskip}{2pt}
\begin{equation}
\begin{pmatrix} \matrixInd{f}{q}{-1, t} & \matrixInd{g}{q}{-1, t} \\ 
\matrixInd{f}{q}{1, t} & \matrixInd{g}{q}{1, t} \end{pmatrix} = 
\begin{pmatrix} \matrixInd{f}{q}{0, t} & 0 \\ 0 & \matrixInd{g}{q}{0, t} \end{pmatrix}
\begin{pmatrix} \sqrt{2} / 2 & -\sqrt{2} / 2 \\ \sqrt{2} / 2 & \sqrt{2} / 2 \end{pmatrix} 
\begin{pmatrix} \matrixSub{\alpha^{f}}{q, t} & \matrixSub{\alpha^{g}}{q, t} \\ 
\tanh(\matrixSub{\beta^{f}}{q, t}) & \tanh(\matrixSub{\beta^{g}}{q, t}) \end{pmatrix}
\label{eq:re-parameterization}
\end{equation}}%
then the layer is stable for \textbf{arbitrary} 
\(\{(\matrixInd{f}{q}{0, t}, \matrixInd{g}{q}{0, t}, \matrixSub{\alpha^{f}}{q, t}, \matrixSub{\beta^{f}}{q, t}, 
\matrixSub{\alpha^{g}}{q, t}, \matrixSub{\beta^{g}}{q, t})_{q=1}^{Q}\}_{t=1}^{T}\) with no constraints.
\end{theorem}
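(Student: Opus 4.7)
The strategy is to verify that the re-parameterization in~\eqref{eq:re-parameterization} automatically satisfies the sufficient stability conditions of Theorem~\ref{thm:stability-arma} for every choice of the unconstrained parameters, so that stability follows immediately by appealing to that theorem. Since each 2D autoregressive filter $\tensorSub{A}{\bm{:}, \bm{:}, t} = \matrixSub{F}{\bm{:}, t} \otimes \matrixSub{G}{\bm{:}, t}$ is separable and each 1D factor decomposes further into length-$3$ subfilters, it suffices to verify, for every $q \in [Q]$ and every output channel $t \in [T]$, the coordinatewise inequalities $|\matrixInd{f}{q}{-1, t} + \matrixInd{f}{q}{1, t}| < \matrixInd{f}{q}{0, t}$ and $|\matrixInd{g}{q}{-1, t} + \matrixInd{g}{q}{1, t}| < \matrixInd{g}{q}{0, t}$.

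First I would unroll the $2 \times 2$ matrix identity~\eqref{eq:re-parameterization} to obtain explicit formulas for $\matrixInd{f}{q}{\pm 1, t}$ and $\matrixInd{g}{q}{\pm 1, t}$ in terms of the leading coefficients $\matrixInd{f}{q}{0, t}, \matrixInd{g}{q}{0, t}$ and the free parameters $\matrixSub{\alpha^{f}}{q,t}, \matrixSub{\beta^{f}}{q,t}, \matrixSub{\alpha^{g}}{q,t}, \matrixSub{\beta^{g}}{q,t}$. The two structural ingredients of the construction are (i) the $\sqrt{2}/2$-scaled orthogonal rotation, which aligns the pair of free parameters with the ``sum'' and ``difference'' directions of $(\matrixInd{f}{q}{-1,t}, \matrixInd{f}{q}{1,t})$ so that only one of them survives in the sum $\matrixInd{f}{q}{-1,t} + \matrixInd{f}{q}{1,t}$ that appears in the stability constraint; and (ii) the $\tanh$ activation on the $\beta$ coordinates, which squashes the surviving parameter into $(-1, 1)$. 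Multiplying by the diagonal factor $\mathrm{diag}(\matrixInd{f}{q}{0, t}, \matrixInd{g}{q}{0, t})$ then scales the outcome by the corresponding leading coefficient, placing $\matrixInd{f}{q}{-1, t} + \matrixInd{f}{q}{1, t}$ strictly inside $(-\matrixInd{f}{q}{0, t}, \matrixInd{f}{q}{0, t})$ regardless of the values of $\alpha$ and $\beta$, and analogously for the $g$ factors.

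Plugging these bounds back into Theorem~\ref{thm:stability-arma} then yields stability of the separable ARMA layer for arbitrary unconstrained parameters, completing the argument. The main obstacle is the bookkeeping in the first step: tracking which entry of the triple matrix product corresponds to which of $\matrixInd{f}{q}{-1, t}, \matrixInd{f}{q}{1, t}, \matrixInd{g}{q}{-1, t}, \matrixInd{g}{q}{1, t}$, and verifying that the particular choice of rotation matrix, the placement of $\tanh$ on the second row of the parameter matrix, and the order of the diagonal scaling together eliminate the unconstrained $\alpha$ parameters from the sums appearing in the stability conditions, leaving only the bounded $\tanh(\beta)$ terms multiplied by the leading coefficients.
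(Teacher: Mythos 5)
Your overall strategy --- reduce to Theorem~\ref{thm:stability-arma} and verify that the re-parameterized coefficients satisfy $|f^{(q)}_{-1,t}+f^{(q)}_{1,t}|<f^{(q)}_{0,t}$ for every $q,t$ --- is exactly the route the paper intends (its own justification is a one-line remark plus a figure at the end of \autoref{app:stability}). The gap is that you defer the single computation on which the whole argument rests, and that computation does not come out the way you assert. Multiplying out \autoref{eq:re-parameterization} for one channel (dropping indices) gives $f_{-1}=f_0\,\tfrac{\sqrt2}{2}\,(\alpha^{f}-\tanh\beta^{f})$ and $f_{1}=f_0\,\tfrac{\sqrt2}{2}\,(\alpha^{f}+\tanh\beta^{f})$ (modulo a row/column transposition problem in the printed product, under which $f_1$ is literally scaled by $g_0$ rather than $f_0$). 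Hence $f_{-1}+f_{1}=\sqrt2\,f_0\,\alpha^{f}$: the rotation sends the \emph{unconstrained} parameter $\alpha^{f}$ into the sum direction and the bounded $\tanh\beta^{f}$ into the difference $f_{1}-f_{-1}$ --- the opposite of what you claim in your ingredient (i). The quantity that Theorem~\ref{thm:stability-arma} requires to be bounded is therefore unbounded, and stability is not ``automatically satisfied for every choice of the unconstrained parameters.''

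Even after repairing the role reversal (applying $\tanh$ to the coordinate that the rotation maps onto the $(1,1)$ direction), one only obtains $|f_{-1}+f_{1}|=\sqrt2\,f_0\,|\tanh\beta^{f}|<\sqrt2\,f_0$, which still overshoots the threshold $f_0$ by a factor of $\sqrt2$; an extra normalization (e.g.\ entries $1/2$ in place of $\sqrt2/2$, or $\tfrac{1}{\sqrt2}\tanh$) is needed to land strictly inside the stability strip $|f_{-1}+f_{1}|<f_0$. None of this is visible in your write-up because the ``bookkeeping'' you flag as the main obstacle is never actually carried out --- for this theorem the bookkeeping \emph{is} the proof. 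To be fair, the paper is no more explicit than you are, but a complete argument must either perform the multiplication and exhibit the bound, or amend the formula so that the bound genuinely holds.
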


In practice, we can set \( \matrixSub{f^{q}}{0, t} = \matrixSub{g^{q}}{0, t} = 1\) 
(since the scale can be learned by the \MAlong kernel),
and only store and optimize over each tuple
\((\matrixSub{\alpha^{f}}{q, t}, \matrixSub{\beta^{f}}{q, t}, \matrixSub{\alpha^{g}}{q, t}, \matrixSub{\beta^{g}}{q, t})\).
In other words, each \ARlong filter \(\tensorSub{A}{\bm{:}, \bm{:}, t}\) is constructed from
\((\matrixSub{\alpha^{f}}{q, t}, \matrixSub{\beta^{f}}{q, t}, \matrixSub{\alpha^{g}}{q, t}, \matrixSub{\beta^{g}}{q, t})_{q = 1}^{Q}\) on the fly during training or inference.

\begin{wrapfigure}{r}{0.35\textwidth}
\vspace{-1em}
\begin{minipage}{0.35\textwidth}
\centering
	\resizebox{0.9\textwidth}{!}{\input{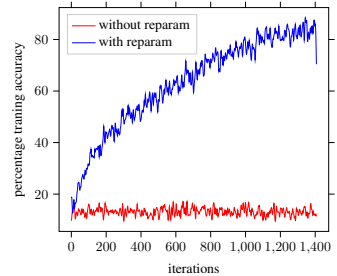}}
	\captionof{figure}{
	\small{Learning curves with and without re-parameterization on 
	an \ARMA network with a VGG-11 backbone on CIFAR-10.}}
\label{fig:stabilization-compare}
\end{minipage}
\vspace{-2em}
\end{wrapfigure}

\textbf{Experimental Demonstration of Re-parameterization.}
To verify the re-parameterization is essential for stable training,
we train a VGG-11 network~\citep{simonyan2014very} on CIFAR-10 dataset, 
where all convolutional layers are replaced by \ARMA layers 
with \ARlong coefficients initialized as zeros.
We compare the learning curves using the re-parameterization 
v.s.\ not using the re-parameterization in \autoref{fig:stabilization-compare}.
As we can see, the training quickly converges under our proposed re-parameterization mechanism 
with which the stability of the network is guaranteed.
However without the re-parameterization mechanism, 
a naive training of \ARMA network never converges and gets NaN error quickly. 
The experiment thus verifies that the theory in
Theorem~\ref{thm:stability-param} is effective in guaranteeing stability. 
\section{Experiments}
\label{sec:experiments}

We apply our \ARMA networks on two dense prediction problems
-- pixel-level video prediction and semantic segmentation to demonstrate effectiveness of \ARMA networks.
{\bf (1)} We incorporate our \ARMA layers 
in U-Net~\citep{ronneberger2015u, wang2020non} for semantic segmentation,
and in ConvLSTM network~\citep{xingjian2015convolutional, byeon2018contextvp} for video prediction.
\textbf{We show that the resulted \ARMA U-Net and \ARMA-LSTM models
uniformly outperform the baselines on both tasks.}
{\bf (2)} We then interpret the varying performance of \ARMA networks on different tasks
by visualizing the histograms of the learned \ARlong coefficients.
We include the detailed setups (datasets, model architectures, training strategies and evaluation metrics) 
and visualization in \autoref{app:supp_exp} for reproducibility purposes.

\textbf{Semantic Segmentation on Biomedical Medical Images.}
We evaluate our \ARMA U-Net on the lesion segmentation task 
in ISIC 2018 challenge~\citep{lesion-segmentation-isic-2018},
comparing against a baseline U-Net~\citep{ronneberger2015u} and
non-local U-Net~\citep{wang2020non} (U-Net augmented with non-local attention blocks).

\vspace{-1em}
\begin{table}[!htbp]
\caption{\small{\textbf{Semantic segmentation on ISIC dataset}. 
For all metrics (ACC, SE, SP, PC, F1 and JS), higher values indicates better performance. 
The reported numbers are an average of $10$ runs with different seeds.}}
\label{tab:eval-isic}
\setlength{\tabcolsep}{2pt}
\centering
\resizebox{\textwidth}{!}{
	\begin{tabular}{c| c | c c c c c c}
	\midrule
	Model & params. & ACC & SE & SP & PC & F1 & JS \\
	\midrule
	U-Net~\citep{ronneberger2015u} & 3.453M & 0.946 $\pm$ 0.003 & 0.884 $\pm$ 0.019 & 
	\textbf{0.977} $\pm$ 0.005 & 0.857 $\pm$ 0.020 & 0.842 $\pm$ 0.009 & 0.754 $\pm$ 0.011 \\
	NL U-Net~\citep{wang2020non}  & 4.403M & 0.945 $\pm$ 0.003 & 0.877 $\pm$ 0.017 & 
	0.973 $\pm$ 0.004 & 0.844 $\pm$ 0.014 & 0.831 $\pm$ 0.012 & 0.741 $\pm$ 0.013 \\
	\midrule
	\midrule
	ARMA U-Net & 3.455M & 0.955 $\pm$ 0.003 & 0.896 $\pm$ 0.011 & 
	0.972 $\pm$ 0.005 & \textbf{0.873} $\pm$ 0.011 & 0.861 $\pm$ 0.007 & 0.780 $\pm$ 0.009 \\
	NL \ARMA U-Net  & 4.405M & \textbf{0.960} $\pm$ 0.002 & \textbf{0.909} $\pm$ 0.009 & 
	0.968 $\pm$ 0.004 & 0.870 $\pm$ 0.011 & \textbf{0.870} $\pm$ 0.006 & \textbf{0.790} $\pm$ 0.008 \\
	\midrule
\end{tabular}}
\end{table}
\vspace{-0.5em}

\emph{\ARMA networks outperform both baselines in almost all metrics.}
As shown in Table~\ref{tab:eval-isic}, our (non-local) \ARMA U-Net outperform 
both U-Net and non-local U-Net except for specificity (SP).
Furthermore, we find that the synergy of non-local attention and \ARMA layers 
achieves best results among all.

\textbf{Pixel-level Video Prediction.}
We evaluate our \ARMA-LSTM network on 
the Moving-MNIST-2 dataset~\citep{moving-mnist-dataset} with different moving velocities, 
comparing against the baseline ConvLSTM network~\citep{xingjian2015convolutional, byeon2018contextvp} 
and its augmentation using dilated convolutions and non-local attention blocks~\citep{wang2018non}. As shown in the visualizations in \autoref{app:supp_exp}, the {dilated \ARMA-LSTM} does not have gridding artifacts as in dilated Conv-LSTM, that is \emph{\ARMA removes the gridding artifacts.}

\vspace{-1em}
\begin{table}[!htbp]
\caption{\small{10-frames \textbf{video prediction} on Moving-MNIST-2 with three different speeds (results averaged over 10 predicted frames). MA and AR denote the size of moving-average and autoregressive kernels respectively, and dil. denotes dilation in the moving-average kernel. Higher PSNR, SSIM values indicate better performance.}}
\label{tab:eval-mnist}
\setlength{\tabcolsep}{4pt}
\centering
	\begin{tabular}{c| c c c | c | c c c c c c}
	\midrule
	\multirow{2}{*}{Model} & \multirow{2}{*}{MA} & \multirow{2}{*}{AR} &\multirow{2}{*}{dil.} & \multirow{2}{*}{params.} 
	& \multicolumn{2}{c}{original speed} & \multicolumn{2}{c}{2X speed} & \multicolumn{2}{c}{3X speed} \\
	& & & & & PSNR & SSIM & PSNR & SSIM & PSNR & SSIM \\
	\midrule
	Conv-LSTM (size 3) & 3 & 1 & 1 & 0.887M & 18.24 & 0.867 & 16.62 & 0.827 & 15.81 & 0.810 \\
	Conv-LSTM (size 5) & 5 & 1 & 1 & 2.462M & 19.58 & 0.901 & 17.61 & 0.856 & 16.99 & 0.841 \\
	Dilated Conv-LSTM & 3 & 2 & 2 & 0.887M & 19.16 & 0.893 & 17.92 & 0.858 & 17.48 & 0.846 \\
	\midrule \midrule
	Dilated ARMA-LSTM & 3 & 3 & 2 & 0.893M &  {\bf 19.72} & {\bf 0.904} & 18.05 & 0.870 & 17.65 & 0.855 \\
	ARMA-LSTM (size 3) & 3 & 2 & 1 & 0.893M & {\bf 19.72} &  0.899 & {\bf 18.73} & {\bf 0.881} & {\bf 18.13} & {\bf 0.869} \\
	\midrule
	\end{tabular}
\end{table}
\vspace{-0.5em}

\emph{\ARMA networks outperforms larger networks:}
As shown in Table~\ref{tab:eval-mnist},
our \ARMA networks with kernel sizes \(3 \times 3\) outperform all baselines under all velocities
(at the original speed, our \ARMA network requires dilated convolutions to achieve the best performance).
Moreover, for videos with a higher moving speed, the advantage is more pronounced 
as expected due to \ARMA's ability to expand the \ERF.
The \ARMA networks improve the best baseline 
(Conv-LSTM with kernel size \(5 \times 5\)) in PSNR by \(6.36\%\) at 2X speed 
and by \(6.70\%\) at 3X speed, with {63.7\%} fewer parameters.

\emph{\ARMA networks outperforms non-local attention blocks:} 
As shown in Table~\ref{tab:eval-mnist-non-local},
our \ARMA-LSTM with kernel sizes \(3 \times 3\) outperforms 
the {Conv-LSTM}s augmented by non-local attention blocks. 
However, attention mechanism does not always improve the baselines or our models. 
When both \ARMA-LSTM and {Conv-LSTM} are combined with non-local attention blocks, 
our model achieves better performance compared to the non-\ARMA baselines.  

\begin{minipage}{0.54\textwidth}
\captionof{table}{\small{\textbf{Comparison with non-local attention blocks} on \textbf{video prediction}.} The original networks are the same as in Table~\ref{tab:eval-mnist}. Each non-local network additionally inserts two non-local blocks in the corresponding base network.}
\label{tab:eval-mnist-non-local}
\setlength{\tabcolsep}{4pt}
\centering
	\resizebox{\linewidth}{!}{
	\begin{tabular}{c | c c | c c}
	\midrule
    	\multirow{2}{*}{Model} & \multicolumn{2}{c}{Original} & \multicolumn{2}{c}{Non-local} \\
	& PSNR & SSIM & PSNR & SSIM \\
	\midrule
	ConvLSTM (size 3) & 18.24 & 0.867 & 19.45 & 0.895  \\
	ConvLSTM (size 5) & 19.58 & {\bf 0.901} & 19.18 & 0.891 \\
	\midrule \midrule
	\ARMA-LSTM (size 3) & {\bf 19.72} & 0.899 & {\bf 19.62} & {\bf 0.897}  \\
	\midrule
	\end{tabular}}%
\end{minipage}
\hfill
\begin{minipage}{0.44\textwidth}
\centering
	\includegraphics[width=\linewidth]{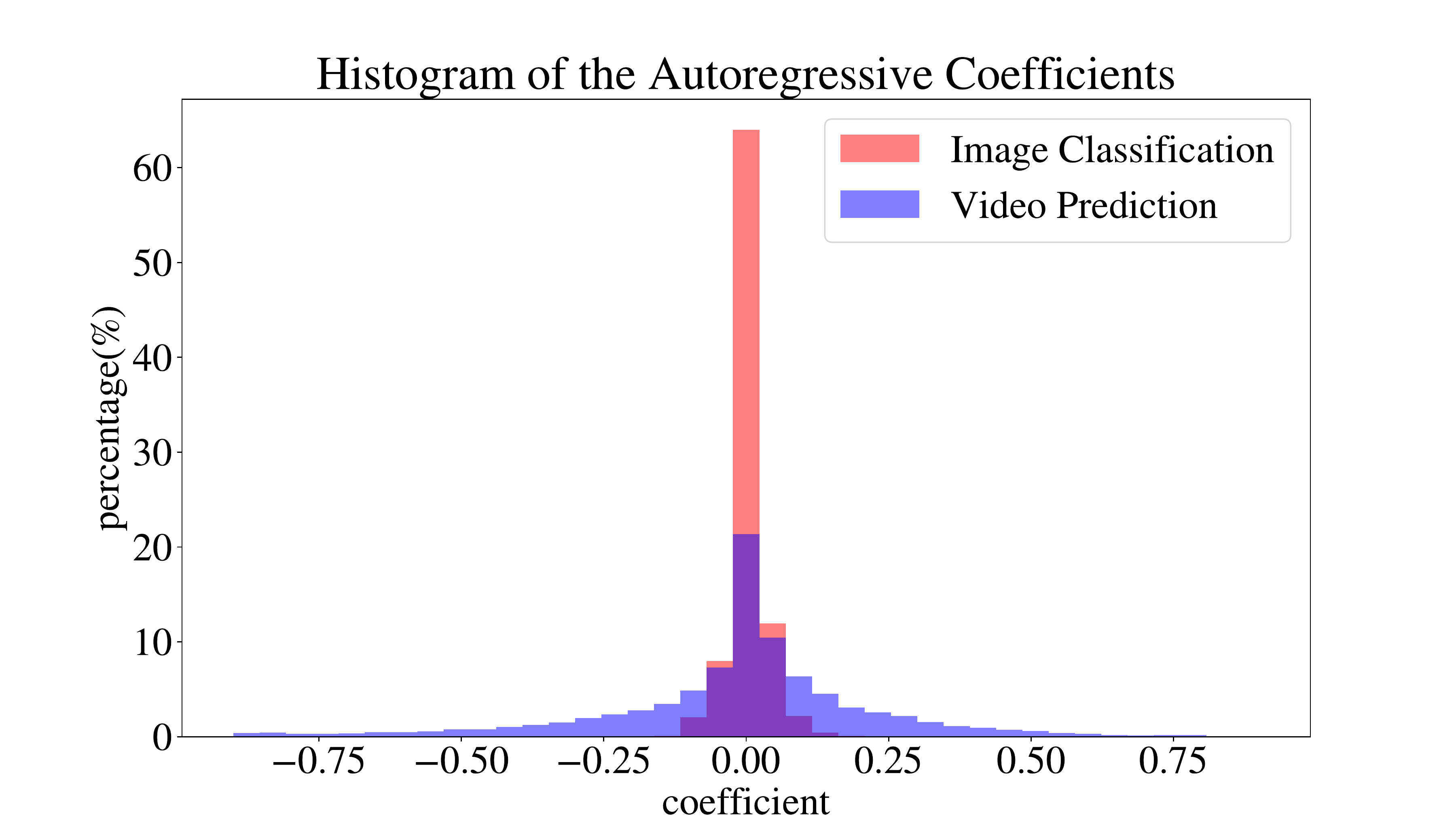}
	\captionof{figure}{\small{Histogram of the \ARlong coefficients in trained \ARMA networks}.}
\label{fig:histogram}
\end{minipage}

\textbf{Interpretation by \ARLong Coefficients.}
To explain why \ARMA networks achieve impressive performance in dense prediction, 
\autoref{fig:histogram} compares the histograms of the trained \ARlong coefficients 
between video prediction and image classification. 
(\autoref{sub:supp_exp_image_classification} demonstrates performance of image classifications 
when \ARMA layers are incorporated in VGG and ResNet.)
\begin{enumerate}[leftmargin=*, itemsep=0pt, topsep=0pt]
\item The histograms demonstrate how \ARMA networks adaptively learn \ARlong coefficients according to the tasks.
As motivated in the introduction, dense prediction such as video prediction requires each layer to have a large receptive field such that global information is captured.
\item The large \ARlong coefficients in video prediction model suggests the overall \ERF is significantly expanded.
In image classification model, global information is already aggregated 
by pooling (downsampling) layers and a fully-connected classification layer. 
Therefore, the \ARMA layers automatically learn nearly zero \ARlong coefficients.
\end{enumerate}

\section{Discussion}
\label{sec:discussion}

This paper proposes a novel {\em \ARMA} layer 
capable of expanding a network's \ERFlong adaptively. 
Our method is related to techniques in signal processing and machine learning. 
First, \ARMA layer is equivalent to a multi-channel 
{\em impulse response filter} in signal processing~\citep{oppenheim2014discrete}. 
Alternatively, we can interpret the \ARlong layer as a 
a learnable {\em spectral normalization}~\citep{miyato2018spectral} following the \MAlong layer.
Additionally, the \ARMA layer is an {\em linear recurrent neural network}, 
where the recurrent propagations are over the spatial domain (\autoref{sec:related}).

\bibliography{\bibhome/supp_bib}
\bibliographystyle{plain}

\newpage
\appendix
{\begin{center}{\bf \large Appendix of \mytitle}\end{center}}
\numberwithin{equation}{section}

\section{Supplementary Materials for Experiments}
\label{app:supp_exp}

In this section, we explain detailed setups 
(datasets, model architectures, learning strategies and evaluation metrics) 
of all experiments, and provide additional visualizations of the results.

\subsection{Visualization of Effective Receptive Field}
\label{sub:supp_exp_visualization_erf}

In the simulations in \autoref{sub:erf},
all linear networks have \(32\) channels and \(64 \times 64\) feature size at each layer.
The filter size for both \MAlong coefficients and \ARlong coefficients is set to \(3 \times 3\): 
each \MAlong kernel \(\mytensor{W}\) is initialized using Xavier's method, 
while the \ARlong kernel \(\mytensor{A}\) is initialized randomly within a stable region 
\(-a \leq \matrixInd{f}{q}{-1, t} + \matrixInd{f}{q}{1, t} \leq 0, 
-a \leq \matrixInd{g}{q}{-1, t} + \matrixInd{g}{q}{1, t} \leq 0, \forall t \in [T], q \in [Q] \)
(see \autoref{sec:stability} for details).
Each heat map in \autoref{fig:ERF} is computed as an average of \(32\) gradient maps from different channels. 

\subsection{Multi-frame Video Prediction}
\label{sub:supp_exp_video_prediction}

\paragraph{Datasets and Metrics}
The Moving-MNIST-2 dataset is generated by moving two digits of size 
$28 \times 28$ in MNIST dataset within a $64 \times 64$ black canvas~\citep{moving-mnist-dataset}. 
These digits are placed at a random initial location, 
and move with constant velocity in the canvas and bounce when they reach the boundary.
In addition to the default velocity in the public generator~\citep{moving-mnist-dataset},
we increase the velocity to \(2\times\) and \(3\times\) to test all models on videos with stronger motions.
For each velocity, we generate 10,000 videos for training set, 3,000 for validation set, and 5,000 for test set, 
where each video contains \(20\) frames.
All models are trained to the next 10 frames given 10 input frames, and we evaluate their performance 
based on the metrics of {\em mean square error} (MSE), {\em peak signal-noise ratio} (PSNR) 
and {\em structure similarity} (SSIM)~\citep{wang2004image}.

\paragraph{Model Architectures}
\textbf{(1) Baselines.}
The backbone architecture consists of a stack of 12 Conv-LSTM modules, 
and each module contains 32 units (channels).
Following~\citep{byeon2018contextvp},
two skip connections that perform channel concatenation  
are added between (3, 9) and (6, 12) module.
An additional traditional convolutional layer
is applied on top of all recurrent layers to compute the predicted frames.
The backbone architecture is illustrated in \autoref{fig:convlstm}.
In the baseline networks, we consider three different convolutions at each layer: 
{\bf(a)} Traditional convolution with filter size $3 \times 3$;
{\bf(b)} Traditional convolution with filter size $5 \times 5$; 
and {\bf(c)} $2$-dilated convolution with filter size $3 \times 3$.

\textbf{(2) \ARMA networks.}
Our \ARMA networks use the same backbone architecture as baselines, 
but replace their convolutional layers with \ARMA layers.
For all {\ARMA} models, we set the filter size for 
both \MAlong and \ARlong parts to $3 \times 3$. 
In the \ARMA networks, we consider two different convolutions each layer:
{\bf(a)} The \MAlong part is a traditional convolution;
{\bf(b)} We further consider using $2$-dilated convolution in the \MAlong part.

\textbf{(3) Non-local networks.}
In non-local networks, we additionally insert two non-local block in the backbone architecture,
as illustrated in \autoref{fig:convlstm_nl}. 
In each non-local block, we use embedded Gaussian as the non-local operation~\citep{wang2018non}, 
and we replace the batch normalization by instance normalization 
that is compatible to recurrent neural networks.
In non-local networks, we consider three types of convolutions at each layer: 
{\bf(1)(2)} Traditional convolutions with filter size $3 \times 3$ and $5 \times 5$; 
{\bf(3)} \ARMA layer with $3 \times 3$ \MAlong and \ARlong filters.

\begin{center}
\centering
	\begin{minipage}[c]{0.48\textwidth}
	\centering
		\includegraphics[trim={1.2cm 2.6cm 1.2cm 1.6cm}, clip, width=\linewidth]{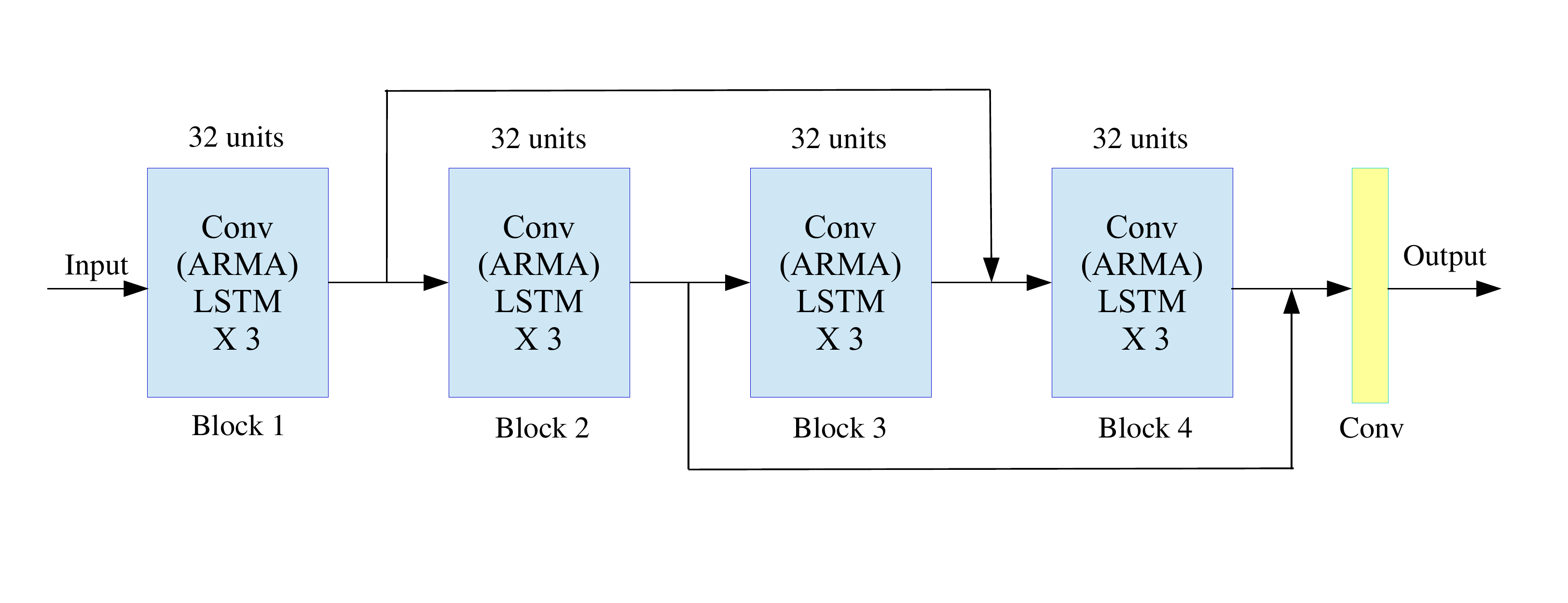}
		\captionof{figure}{Conv(ARMA)-LSTM.}
	\label{fig:convlstm}
	\end{minipage}
	\hfill
	\begin{minipage}[c]{0.5\textwidth}
	\centering
		\includegraphics[trim={1.2cm 2.6cm 1.2cm 1.6cm}, clip, width=\linewidth]{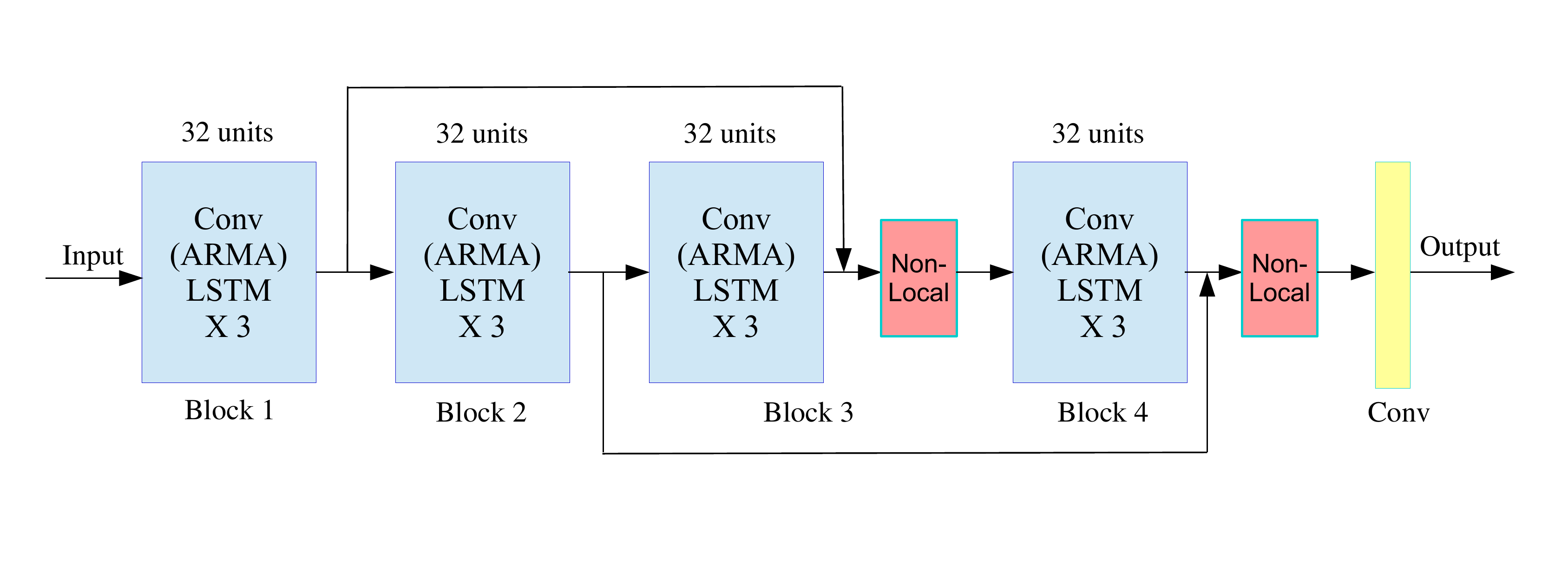}
		\captionof{figure}{Non-Local Conv(ARMA)-LSTM.}
	\label{fig:convlstm_nl}
	\end{minipage}
\end{center}

\paragraph{Training Strategy}
All models are trained using ADAM optimizer~\citep{kingma2014adam}, 
with $\mathcal{L}_{1} + \mathcal{L}_{2}$ loss and for $500$ epochs. 
We set the initial learning rate to $10^{-3}$, and the value for gradient clipping to $3$.
Learning rate decay and scheduled sampling~\citep{bengio2015scheduled} are used to ease training.
Scheduled sampling is started once the model does not improve in $20$ epochs 
(in term of validation loss), 
and the sampling ratio is decreased linearly by $4 \times 10^{-4}$ each epoch 
(i.e.\ scheduling sampling lasts for $250$ epochs).
Learning rate decay is further activated if the validation loss does not drop in 20 epochs, 
and the learning rate is decreased exponentially by $0.98$ every $5$ epochs.
All convolutional layers and \MAlong parts in \ARMA layers
are initialized by Xavier's normalized initializer~\citep{glorot2010understanding}, 
and \ARlong coefficients in \ARMA layers are initialized as zeros 
(i.e.\ each \ARMA layer is initialized as a traditional layer).

\paragraph{Visualization of the Predictions}
We visualize the predictions by different models 
under three moving velocites in \autoref{fig:visual-mnist-v1}, 
\autoref{fig:visual-mnist-v2} and \autoref{fig:visual-mnist-v3}
Notice that the gridding artifacts by dilated convolutions are removed by \ARMA layer:
since each neuron receives information from all pixels in a local region (\autoref{fig:dilated-arma-2d}), 
adjacent neurons are on longer computed from separate sets of pixels.
Moreover, for videos with a higher moving speed, the advantage of our \ARMA layer is more pronounced 
as expected due to \ARMA's ability to expand the \ERF.

\begin{figure}[!htbp]
  \centering
  \addtolength{\tabcolsep}{-5pt}
  \begin{tabular}{cc}
      input & ground truth (top) / predictions \\ 
      \scriptsize $t=1\;\;\;\;\;\;\;\;\;\;2\;\;\;\;\;\;\;\;\;3$ & \scriptsize $4\;\;\;\;\;\;\;\;\;\;\;5\;\;\;\;\;\;\;\;\;\;6\;\;\;\;\;\;\;\;\;\;\;7\;\;\;\;\;\;\;\;\;\;\;\;8\;\;\;\;\;\;\;\;\;9\;\;\;\;\;\;\;\;\;10\;\;\;\;\;\;\;\;\;11\;\;\;\;\;\;\;\;\;12\;\;\;\;\;\;\;\;\;\;13$ \\
      \includegraphics[trim={16.cm 4.7cm 0.cm 0.1cm},clip, height=1cm]{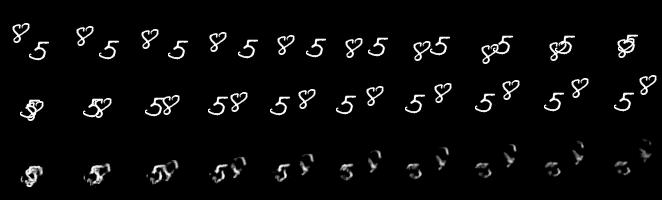} & 
      \includegraphics[trim={0.cm 2.5cm 0.cm 2.2cm},clip, height=1cm, width=10cm]{\fighome/per_frame_vis/w3/cmp_372_1248.jpg} \\ [-0.2em]
        \raisebox{0.4cm}{\scriptsize Traditional ($K_w = 3$)} & 
        \includegraphics[trim={0.cm 0 0.cm 4.7cm},clip, height=1cm, width=10cm]{\fighome/per_frame_vis/w3/cmp_372_1248.jpg} \\ [-0.2em]
       \raisebox{0.4cm}{\scriptsize $2$-dilated ($K_w = 3$)}  & \includegraphics[trim={0.cm 0 0.cm 4.7cm},clip, height=1cm, width=10cm]{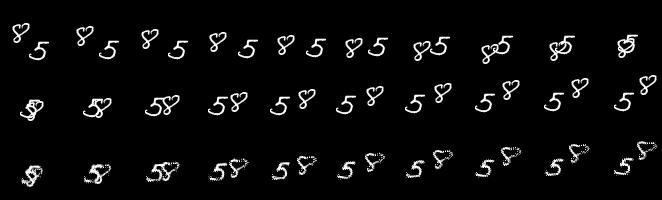} \\ [-0.2em]
      \raisebox{0.4cm}{\scriptsize Traditional ($K_w = 5$)}  & 
      \includegraphics[trim={0.cm 0 0.cm 4.7cm},clip, height=1cm, width=10cm]{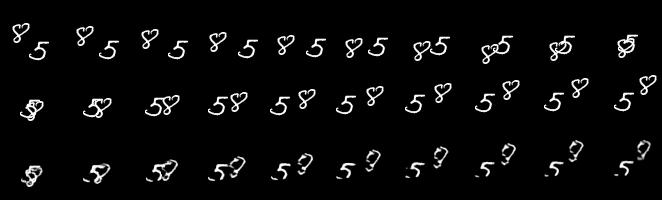} \\ [-0.2em]
      \raisebox{0.4cm}{\scriptsize $2$-dilated \ARMA ($K_w = 3, K_a = 3$)}  &  \includegraphics[trim={0.cm 0 0.cm 4.7cm},clip, height=1cm, width=10cm]{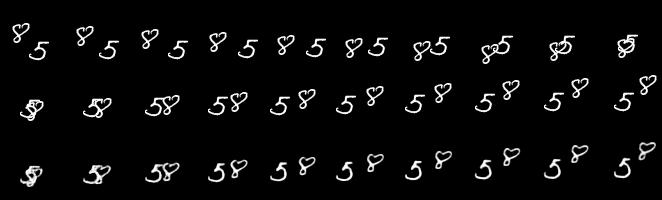} \\ [-0.2em]
      \raisebox{0.4cm}{\scriptsize \ARMA ($K_w = 3, K_a = 3$)} &
      \includegraphics[trim={0.cm 0 0.cm 4.7cm},clip, height=1cm, width=10cm]{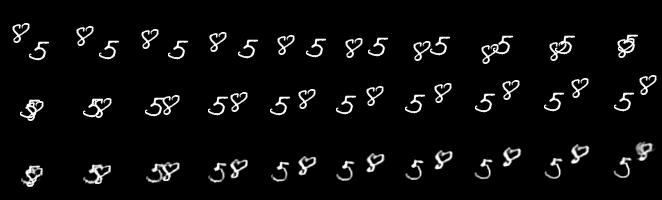} \\ [-0.2em]
  \end{tabular}
  \caption{{\bf Prediction on Moving-MNIST-2 (original speed)}. 
  The first row contains the last $3$ input frames and $10$ ground-truth frames for models to predict.}
  \label{fig:visual-mnist-v1}
\end{figure}
\begin{figure}[!htbp]
  \centering
  \addtolength{\tabcolsep}{-5pt}
  \begin{tabular}{cc}
      input & ground truth (top) / predictions \\ 
      \scriptsize $t=1\;\;\;\;\;\;\;\;\;\;2\;\;\;\;\;\;\;\;\;3$ & \scriptsize $4\;\;\;\;\;\;\;\;\;\;\;5\;\;\;\;\;\;\;\;\;\;6\;\;\;\;\;\;\;\;\;\;\;7\;\;\;\;\;\;\;\;\;\;\;\;8\;\;\;\;\;\;\;\;\;9\;\;\;\;\;\;\;\;\;10\;\;\;\;\;\;\;\;\;11\;\;\;\;\;\;\;\;\;12\;\;\;\;\;\;\;\;\;\;13$ \\
      \includegraphics[trim={16.cm 4.7cm 0.cm 0.1cm},clip, height=1cm]{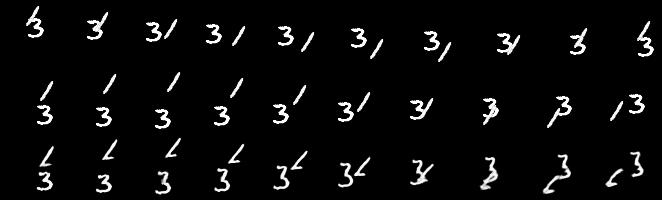} & 
      \includegraphics[trim={0.cm 2.5cm 0.cm 2.2cm},clip, height=1cm, width=10cm]{\fighome/per_frame_vis/v2_w3/cmp_559_2688.jpg} \\ [-0.2em]
        \raisebox{0.4cm}{\scriptsize Traditional ($K_w = 3$)} & 
        \includegraphics[trim={0.cm 0 0.cm 4.7cm},clip, height=1cm, width=10cm]{\fighome/per_frame_vis/v2_w3/cmp_559_2688.jpg} \\ [-0.2em]
       \raisebox{0.4cm}{\scriptsize $2$-dilated ($K_w = 3$)}  & \includegraphics[trim={0.cm 0 0.cm 4.7cm},clip, height=1cm, width=10cm]{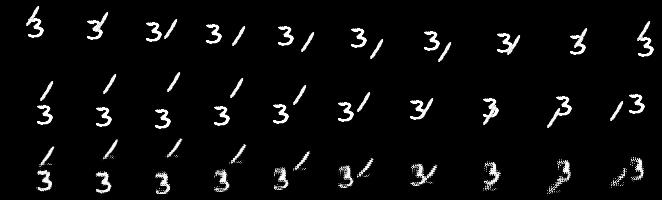} \\ [-0.2em]
      \raisebox{0.4cm}{\scriptsize Traditional ($K_w = 5$)}  & 
      \includegraphics[trim={0.cm 0 0.cm 4.7cm},clip, height=1cm, width=10cm]{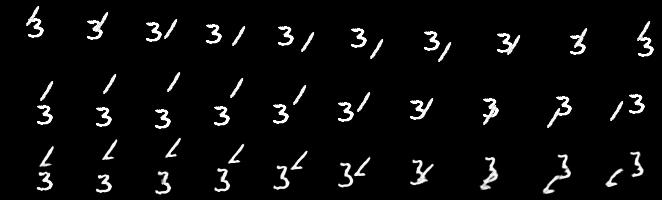} \\ [-0.2em]
      \raisebox{0.4cm}{\scriptsize $2$-dilated \ARMA ($K_w = 3, K_a = 3$)}  &  \includegraphics[trim={0.cm 0 0.cm 4.7cm},clip, height=1cm, width=10cm]{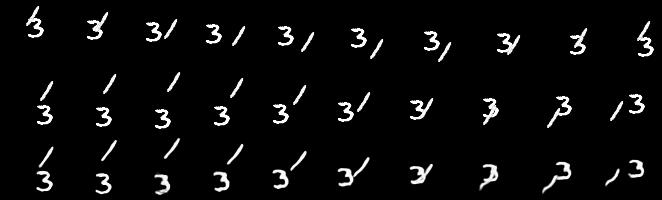} \\ [-0.2em]
      \raisebox{0.4cm}{\scriptsize \ARMA ($K_w = 3, K_a = 3$)} &
      \includegraphics[trim={0.cm 0 0.cm 4.7cm},clip, height=1cm, width=10cm]{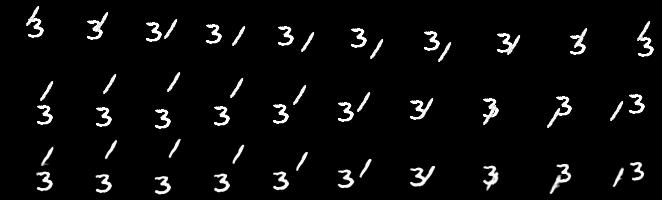} \\ [-0.2em]
  \end{tabular}
  \caption{{\bf Prediction on Moving-MNIST-2 ($2\times$ speed)}. 
  The first row contains the last $3$ input frames and $10$ ground-truth frames for models to predict.}
  \label{fig:visual-mnist-v2}
\end{figure}
\begin{figure}[!htbp]
  \centering
  \addtolength{\tabcolsep}{-5pt}
  \begin{tabular}{cc}
      input & ground truth (top) / predictions \\ 
      \scriptsize $t=1\;\;\;\;\;\;\;\;\;\;2\;\;\;\;\;\;\;\;\;3$ & \scriptsize $4\;\;\;\;\;\;\;\;\;\;\;5\;\;\;\;\;\;\;\;\;\;6\;\;\;\;\;\;\;\;\;\;\;7\;\;\;\;\;\;\;\;\;\;\;\;8\;\;\;\;\;\;\;\;\;9\;\;\;\;\;\;\;\;\;10\;\;\;\;\;\;\;\;\;11\;\;\;\;\;\;\;\;\;12\;\;\;\;\;\;\;\;\;\;13$ \\
      \includegraphics[trim={16.cm 4.7cm 0.cm 0.1cm},clip, height=1cm]{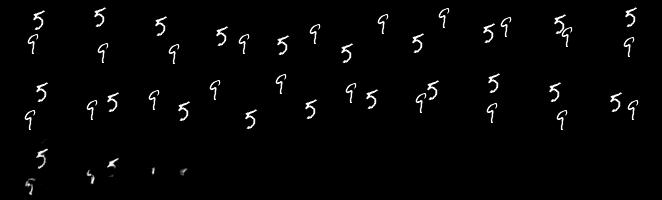} & 
      \includegraphics[trim={0.cm 2.5cm 0.cm 2.2cm},clip, height=1cm, width=10cm]{\fighome/per_frame_vis/v3_w3/cmp_373_672.jpg} \\ [-0.2em]
        \raisebox{0.4cm}{\scriptsize Traditional ($K_w = 3$)} & 
        \includegraphics[trim={0.cm 0 0.cm 4.7cm},clip, height=1cm, width=10cm]{\fighome/per_frame_vis/v3_w3/cmp_373_672.jpg} \\ [-0.2em]
       \raisebox{0.4cm}{\scriptsize $2$-dilated ($K_w = 3$)}  & \includegraphics[trim={0.cm 0 0.cm 4.7cm},clip, height=1cm, width=10cm]{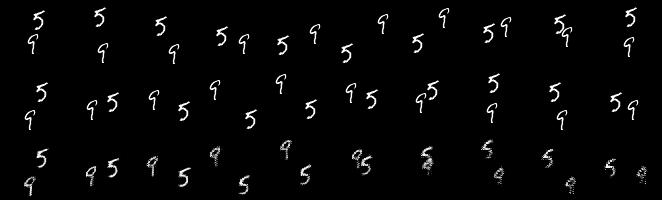} \\ [-0.2em]
      \raisebox{0.4cm}{\scriptsize Traditional ($K_w = 5$)}  & 
      \includegraphics[trim={0.cm 0 0.cm 4.7cm},clip, height=1cm, width=10cm]{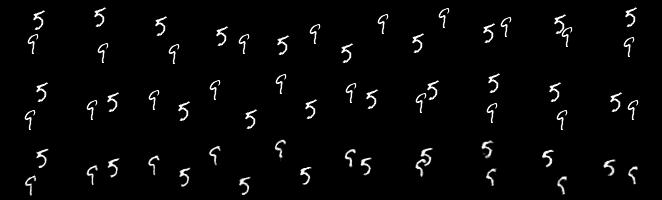} \\ [-0.2em]
      \raisebox{0.4cm}{\scriptsize $2$-dilated \ARMA ($K_w = 3, K_a = 3$)}  &  \includegraphics[trim={0.cm 0 0.cm 4.7cm},clip, height=1cm, width=10cm]{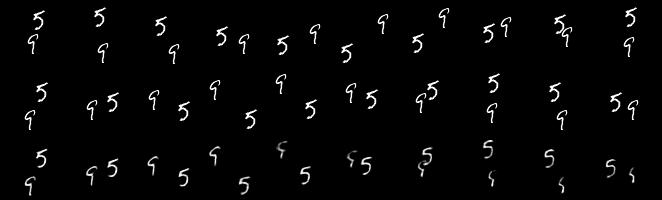} \\ [-0.2em]
      \raisebox{0.4cm}{\scriptsize \ARMA ($K_w = 3, K_a = 3$)} &
      \includegraphics[trim={0.cm 0 0.cm 4.7cm},clip, height=1cm, width=10cm]{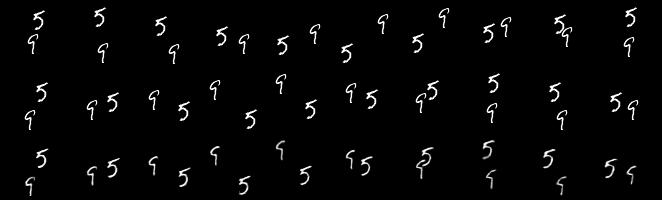} \\ [-0.2em]
  \end{tabular}
  \caption{{\bf Prediction on Moving-MNIST-2 ($3\times$ speed)}. 
  The first row contains the last $3$ input frames and $10$ ground-truth frames for models to predict.}
  \label{fig:visual-mnist-v3}
\end{figure}

\begin{figure}[!htbp]
\centering
	\begin{subfigure}[t]{0.32\linewidth}
		\centering
		\resizebox{\linewidth}{!}{
\begin{tikzpicture}

\definecolor{color0}{rgb}{0.75,0,0.75}
\definecolor{color1}{rgb}{0.75,0.75,0}

\begin{axis}[
legend cell align={left},
legend style={fill opacity=0.8, draw opacity=1, text opacity=1, at={(0.03,0.97)}, anchor=north west, draw=white!80!black},
tick align=outside,
tick pos=both,
x grid style={white!69.0196078431373!black},
xlabel={Time steps},
xmin=0.55, xmax=10.45,
xtick style={color=black},
y grid style={white!69.0196078431373!black},
ylabel={MSE},
ymin=0.00380430939156016, ymax=0.0289598512892555,
ytick style={color=black}
]
\addplot [thick, blue]
table {%
1 0.00652123496872235
2 0.0100063383508518
3 0.0132419686372673
4 0.0160665456615404
5 0.0186731539532828
6 0.0209115678661289
7 0.0230470243637927
8 0.0247791083897477
9 0.0264471098170253
10 0.027816417566633
};
\addlegendentry{Conv (K=3)}
\addplot [thick, color0]
table {%
1 0.00494774311418268
2 0.00739772343112836
3 0.00986781598005796
4 0.0122389285152822
5 0.0144563691053816
6 0.0166966374791428
7 0.0187928312854306
8 0.0206600659461271
9 0.0226683907617271
10 0.0245250804112609
};
\addlegendentry{Conv (K=5)}
\addplot [thick, color1]
table {%
1 0.00600437373640249
2 0.00822394542510202
3 0.0105353858431828
4 0.0126951618585482
5 0.0148393449078336
6 0.0168844166317052
7 0.0188244288656458
8 0.0205146512115556
9 0.0223131540676512
10 0.0239127576792333
};
\addlegendentry{Dilated-Conv}
\addplot [thick, red]
table {%
1 0.00508357862224958
2 0.00735330684695312
3 0.00966384278035658
4 0.0118009843080445
5 0.0138756871553045
6 0.0157729146885524
7 0.0176930602251208
8 0.0193938144318312
9 0.0211894592769858
10 0.0227429178498808
};
\addlegendentry{Dialted ARMA}
\end{axis}

\end{tikzpicture}}
		\vspace{-1em}
		\caption{MSE}
	\end{subfigure}
	\begin{subfigure}[t]{0.32\linewidth}
		\centering
		\resizebox{\linewidth}{!}{
\begin{tikzpicture}

\definecolor{color0}{rgb}{0.75,0,0.75}
\definecolor{color1}{rgb}{0.75,0.75,0}

\begin{axis}[
legend cell align={left},
legend style={fill opacity=0.8, draw opacity=1, text opacity=1, draw=white!80!black},
tick align=outside,
tick pos=both,
x grid style={white!69.0196078431373!black},
xlabel={Time steps},
xmin=0.55, xmax=10.45,
xtick style={color=black},
y grid style={white!69.0196078431373!black},
ylabel={PSNR},
ymin=15.4072208511922, ymax=25.0694480014747,
ytick style={color=black}
]
\addplot [thick, blue]
table {%
1 23.0705585712325
2 20.9342232130136
3 19.5555415182316
4 18.5805068948829
5 17.8301527295829
6 17.264340572185
7 16.7900789961815
8 16.429560895351
9 16.1031652448299
10 15.8464129943869
};
\addlegendentry{Conv (K=3)}
\addplot [thick, color0]
table {%
1 24.6302558582801
2 22.5656354816221
3 21.1575843321174
4 20.1251358947131
5 19.3346894019546
6 18.6163704474188
7 18.031332337355
8 17.5676386685637
9 17.1029723712642
10 16.7126598776118
};
\addlegendentry{Conv (K=5)}
\addplot [thick, color1]
table {%
1 23.1533019994024
2 21.7997738572248
3 20.6767796663404
4 19.7840349321441
5 19.0088100131386
6 18.3603312814825
7 17.8389664787466
8 17.4146548873448
9 17.0023456402438
10 16.6582364344761
};
\addlegendentry{Dilated-Conv}
\addplot [thick, red]
table {%
1 24.2809909873136
2 22.4880512672361
3 21.212902663083
4 20.2465412812261
5 19.4855024133467
6 18.856767248827
7 18.2967598395822
8 17.840709569055
9 17.4064881895702
10 17.0579534705116
};
\addlegendentry{Dialted ARMA}
\end{axis}

\end{tikzpicture}}
		\vspace{-1em}
		\caption{PSNR}
	\end{subfigure}
	\begin{subfigure}[t]{0.32\linewidth}
		\centering
		\resizebox{\linewidth}{!}{
\begin{tikzpicture}

\definecolor{color0}{rgb}{0.75,0,0.75}
\definecolor{color1}{rgb}{0.75,0.75,0}

\begin{axis}[
legend cell align={left},
legend style={fill opacity=0.8, draw opacity=1, text opacity=1, draw=white!80!black},
tick align=outside,
tick pos=both,
x grid style={white!69.0196078431373!black},
xlabel={Time steps},
xmin=0.55, xmax=10.45,
xtick style={color=black},
y grid style={white!69.0196078431373!black},
ylabel={SSIM},
ymin=0.800043787876827, ymax=0.97101302696089,
ytick style={color=black}
]
\addplot [thick, blue]
table {%
1 0.950169888239155
2 0.925337039539072
3 0.903302491182124
4 0.884441045515051
5 0.867380759509769
6 0.85262812360876
7 0.839006238108125
8 0.827731689466968
9 0.816874623946026
10 0.807815116926103
};
\addlegendentry{Conv (K=3)}
\addplot [thick, color0]
table {%
1 0.963241697911614
2 0.946857660019486
3 0.931599894440768
4 0.917261508766772
5 0.904240201587725
6 0.891329641273691
7 0.879514257956326
8 0.869196186742036
9 0.858121331048786
10 0.848116917040197
};
\addlegendentry{Conv (K=5)}
\addplot [thick, color1]
table {%
1 0.954125131500462
2 0.939170499894388
3 0.924214891920886
4 0.9101697422044
5 0.896473999494524
6 0.883508909100867
7 0.871345549545344
8 0.860715095168098
9 0.849181027444626
10 0.838756306901115
};
\addlegendentry{Dilated-Conv}
\addplot [thick, red]
table {%
1 0.961737568692073
2 0.94676358214146
3 0.932512397446549
4 0.919292693932525
5 0.906946648270627
6 0.895800173932082
7 0.884792931718824
8 0.875086907722459
9 0.864897565024478
10 0.856050435361102
};
\addlegendentry{Dialted ARMA}
\end{axis}

\end{tikzpicture}}
		\vspace{-1em}
		\caption{SSIM}
	\end{subfigure}
\caption{{\bf Per-frame performance comparison} of our \ARMA and our dilated \ARMA networks v.s.\ 
the Conv-LSTM, dilated Conv-LSTM baselines for Moving-MNIST-2 (original speed).
Lower MSE values (in $10^{-3}$) or higher PSNR/SSIM values indicate better performance.
}\label{fig:curves_mnist_v1}
\end{figure}

\begin{figure}[!htbp]
\centering
	\begin{subfigure}[t]{0.32\linewidth}
		\centering
		\resizebox{\linewidth}{!}{
\begin{tikzpicture}

\definecolor{color0}{rgb}{0.75,0,0.75}
\definecolor{color1}{rgb}{0.75,0.75,0}

\begin{axis}[
legend cell align={left},
legend style={fill opacity=0.8, draw opacity=1, text opacity=1, at={(0.97,0.03)}, anchor=south east, draw=white!80!black},
tick align=outside,
tick pos=both,
x grid style={white!69.0196078431373!black},
xlabel={Time steps},
xmin=0.55, xmax=10.45,
xtick style={color=black},
y grid style={white!69.0196078431373!black},
ylabel={MSE},
ymin=0.00479341910138081, ymax=0.0330781578033324,
ytick style={color=black}
]
\addplot [thick, blue]
table {%
1 0.0107864300201257
2 0.0163437034951267
3 0.0206329448847203
4 0.0237399560308938
5 0.0259927651699426
6 0.027805294510943
7 0.0290622339846832
8 0.0301248002927173
9 0.0311007138378259
10 0.0317924878623346
};
\addlegendentry{Conv (K=3)}
\addplot [thick, color0]
table {%
1 0.00759613484444936
2 0.0118371152314211
3 0.0157689095042328
4 0.0190548595656603
5 0.0219161871188707
6 0.0243969522031346
7 0.0263841840045438
8 0.0279586293631245
9 0.0294734768209546
10 0.0304991062206873
};
\addlegendentry{Conv (K=5)}
\addplot [thick, color1]
table {%
1 0.00804321915718528
2 0.0113923678289673
3 0.0144524018440108
4 0.0171240772054645
5 0.0194281918540442
6 0.0214363818768275
7 0.0228681370648302
8 0.0242333712177016
9 0.0255353227499321
10 0.0266375791447152
};
\addlegendentry{Dilated-Conv}
\addplot [thick, red]
table {%
1 0.00607908904237861
2 0.00932206836502122
3 0.0122817539004983
4 0.014928843082256
5 0.0172411009643878
6 0.0193584155936797
7 0.0210581403616266
8 0.0226055498148544
9 0.0241043629797249
10 0.0254462825106797
};
\addlegendentry{ARMA}
\end{axis}

\end{tikzpicture}}
		\vspace{-1em}
		\caption{MSE}
	\end{subfigure}
	\begin{subfigure}[t]{0.32\linewidth}
		\centering
		\resizebox{\linewidth}{!}{
\begin{tikzpicture}

\definecolor{color0}{rgb}{0.75,0,0.75}
\definecolor{color1}{rgb}{0.75,0.75,0}

\begin{axis}[
legend cell align={left},
legend style={fill opacity=0.8, draw opacity=1, text opacity=1, draw=white!80!black},
tick align=outside,
tick pos=both,
x grid style={white!69.0196078431373!black},
xlabel={Time steps},
xmin=0.55, xmax=10.45,
xtick style={color=black},
y grid style={white!69.0196078431373!black},
ylabel={PSNR},
ymin=14.7970663791851, ymax=23.7013304100852,
ytick style={color=black}
]
\addplot [thick, blue]
table {%
1 20.3358046238318
2 18.3929349717916
3 17.2679565678697
4 16.6007290012955
5 16.1547019210025
6 15.8366146526201
7 15.6240623742996
8 15.4571878592229
9 15.3052527941432
10 15.2018056533169
};
\addlegendentry{Conv (K=3)}
\addplot [thick, color0]
table {%
1 22.1616766930303
2 20.0991387464632
3 18.7428139550369
4 17.8309665173337
5 17.137566605582
6 16.6307146813541
7 16.2477716491106
8 15.9661645224622
9 15.708841844896
10 15.5305516631727
};
\addlegendentry{Conv (K=5)}
\addplot [thick, color1]
table {%
1 21.6133536531686
2 20.0856310926027
3 18.9556556463418
4 18.1720189530899
5 17.5746974981893
6 17.1019604925028
7 16.8047943367105
8 16.5348566161144
9 16.2857750630565
10 16.0859548884329
};
\addlegendentry{Dilated-Conv}
\addplot [thick, red]
table {%
1 23.2965911359534
2 21.3256140573266
3 19.9851945610225
4 19.0448180761787
5 18.3202670858559
6 17.7651925627277
7 17.3592677394995
8 17.023414442605
9 16.7077897980479
10 16.4440579295277
};
\addlegendentry{ARMA}
\end{axis}

\end{tikzpicture}}
		\vspace{-1em}
		\caption{PSNR}
	\end{subfigure}
	\begin{subfigure}[t]{0.32\linewidth}
		\centering
		\resizebox{\linewidth}{!}{
\begin{tikzpicture}

\definecolor{color0}{rgb}{0.75,0,0.75}
\definecolor{color1}{rgb}{0.75,0.75,0}

\begin{axis}[
legend cell align={left},
legend style={fill opacity=0.8, draw opacity=1, text opacity=1, draw=white!80!black},
tick align=outside,
tick pos=both,
x grid style={white!69.0196078431373!black},
xlabel={Time steps},
xmin=0.55, xmax=10.45,
xtick style={color=black},
y grid style={white!69.0196078431373!black},
ylabel={SSIM},
ymin=0.784756252257109, ymax=0.960242465761391,
ytick style={color=black}
]
\addplot [thick, blue]
table {%
1 0.911199975610055
2 0.872310577953333
3 0.845176211498488
4 0.827252473471397
5 0.815713948491785
6 0.808093607659721
7 0.803402992787796
8 0.799369508545441
9 0.79567408862985
10 0.792732898325486
};
\addlegendentry{Conv (K=3)}
\addplot [thick, color0]
table {%
1 0.942320336572277
2 0.914597646574124
3 0.890484473597789
4 0.870323873415163
5 0.853367040117396
6 0.838889723383443
7 0.826656029898529
8 0.816412207524252
9 0.806988944849334
10 0.799125655788883
};
\addlegendentry{Conv (K=5)}
\addplot [thick, color1]
table {%
1 0.935858477464469
2 0.911253287672169
3 0.889514104097924
4 0.871310735742828
5 0.85590322706479
6 0.842870712059604
7 0.833132496680099
8 0.823423278840363
9 0.814507659581583
10 0.805867933615178
};
\addlegendentry{Dilated-Conv}
\addplot [thick, red]
table {%
1 0.952265819693014
2 0.930008265827574
3 0.910572830945683
4 0.893804403032574
5 0.87964296912722
6 0.867169939893576
7 0.857118263248236
8 0.847849047278918
9 0.839442659539744
10 0.831152231123047
};
\addlegendentry{ARMA}
\end{axis}

\end{tikzpicture}}
		\vspace{-1em}
		\caption{SSIM}
	\end{subfigure}
\caption{{\bf Per-frame performance comparison} of our \ARMA and our dilated \ARMA networks v.s.\ 
the Conv-LSTM, dilated Conv-LSTM baselines for Moving-MNIST-2 ($2\times$ speed).
Lower MSE values (in $10^{-3}$) or higher PSNR/SSIM values indicate better performance.
}
\label{fig:curves_mnist_v2}
\end{figure}

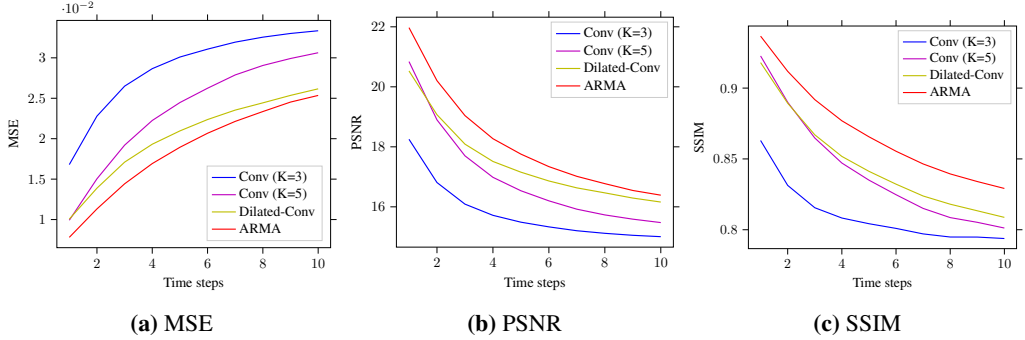
\begin{figure}[!htbp]
\centering
	\begin{subfigure}[t]{0.32\linewidth}
		\centering
		\resizebox{\linewidth}{!}{
\begin{tikzpicture}

\definecolor{color0}{rgb}{0.75,0,0.75}
\definecolor{color1}{rgb}{0.75,0.75,0}

\begin{axis}[
legend cell align={left},
legend style={fill opacity=0.8, draw opacity=1, text opacity=1, at={(0.97,0.03)}, anchor=south east, draw=white!80!black},
tick align=outside,
tick pos=both,
x grid style={white!69.0196078431373!black},
xlabel={Time steps},
xmin=0.55, xmax=10.45,
xtick style={color=black},
y grid style={white!69.0196078431373!black},
ylabel={MSE},
ymin=0.00652216773710135, ymax=0.0346271411125301,
ytick style={color=black}
]
\addplot [thick, blue]
table {%
1 0.0167978574017571
2 0.0227942365823597
3 0.026497364808371
4 0.0286612972740561
5 0.0300988418311176
6 0.0310755494853093
7 0.0319403999405057
8 0.0325564120982272
9 0.0330144927279039
10 0.0333496423227379
};
\addlegendentry{Conv (K=3)}
\addplot [thick, color0]
table {%
1 0.00991549792097928
2 0.0150402957441724
3 0.019199850976699
4 0.022253208693118
5 0.0244702396936723
6 0.0262419756390367
7 0.0278854666767273
8 0.0290558518245545
9 0.0299176172884129
10 0.0306297875639882
};
\addlegendentry{Conv (K=5)}
\addplot [thick, color1]
table {%
1 0.0100508113013425
2 0.0138826643363984
3 0.0170996106747709
4 0.0193185223270004
5 0.020963238326451
6 0.022369192965504
7 0.023536807084826
8 0.0244331834243906
9 0.0253558707757718
10 0.0261662578186909
};
\addlegendentry{Dilated-Conv}
\addplot [thick, red]
table {%
1 0.00779966652689357
2 0.0113153960074166
3 0.0144200199455957
4 0.0169226314208128
5 0.0189386353884064
6 0.0206744597573721
7 0.0221400156010635
8 0.0233577235422015
9 0.0245290607738243
10 0.0253507604429687
};
\addlegendentry{ARMA}
\end{axis}

\end{tikzpicture}}
		\vspace{-1em}
		\caption{MSE}
	\end{subfigure}
	\begin{subfigure}[t]{0.32\linewidth}
		\centering
		\resizebox{\linewidth}{!}{
\begin{tikzpicture}

\definecolor{color0}{rgb}{0.75,0,0.75}
\definecolor{color1}{rgb}{0.75,0.75,0}

\begin{axis}[
legend cell align={left},
legend style={fill opacity=0.8, draw opacity=1, text opacity=1, draw=white!80!black},
tick align=outside,
tick pos=both,
x grid style={white!69.0196078431373!black},
xlabel={Time steps},
xmin=0.55, xmax=10.45,
xtick style={color=black},
y grid style={white!69.0196078431373!black},
ylabel={PSNR},
ymin=14.6568729848007, ymax=22.3207602181737,
ytick style={color=black}
]
\addplot [thick, blue]
table {%
1 18.2515250645959
2 16.8097527473094
3 16.0903088325427
4 15.7167136802862
5 15.4883133834478
6 15.3319498232596
7 15.2030607673073
8 15.117130431583
9 15.0510161218357
10 15.0052314954086
};
\addlegendentry{Conv (K=3)}
\addplot [thick, color0]
table {%
1 20.8409999118963
2 18.8986417044652
3 17.7000403209397
4 16.9837078045373
5 16.535747431295
6 16.1996072670313
7 15.9214776241282
8 15.7303837494095
9 15.5888782456233
10 15.4749361348785
};
\addlegendentry{Conv (K=5)}
\addplot [thick, color1]
table {%
1 20.528999123788
2 19.0703059130758
3 18.0897314679223
4 17.5124377109706
5 17.1550336720475
6 16.8618908697655
7 16.6317018535291
8 16.4653382533782
9 16.2921393142623
10 16.1593246477387
};
\addlegendentry{Dilated-Conv}
\addplot [thick, red]
table {%
1 21.9724017075659
2 20.2051884932366
3 19.0410725736255
4 18.2707889716345
5 17.7570862302428
6 17.3400272964425
7 17.0159746742423
8 16.772532999916
9 16.5452088090973
10 16.3884493298197
};
\addlegendentry{ARMA}
\end{axis}

\end{tikzpicture}}
		\vspace{-1em}
		\caption{PSNR}
	\end{subfigure}
	\begin{subfigure}[t]{0.32\linewidth}
		\centering
		\resizebox{\linewidth}{!}{
\begin{tikzpicture}

\definecolor{color0}{rgb}{0.75,0,0.75}
\definecolor{color1}{rgb}{0.75,0.75,0}

\begin{axis}[
legend cell align={left},
legend style={fill opacity=0.8, draw opacity=1, text opacity=1, draw=white!80!black},
tick align=outside,
tick pos=both,
x grid style={white!69.0196078431373!black},
xlabel={Time steps},
xmin=0.55, xmax=10.45,
xtick style={color=black},
y grid style={white!69.0196078431373!black},
ylabel={SSIM},
ymin=0.786657608165842, ymax=0.943826740286725,
ytick style={color=black}
]
\addplot [thick, blue]
table {%
1 0.863012854788052
2 0.831316892523012
3 0.815543921174206
4 0.808277178885945
5 0.804254455267408
6 0.800950147936577
7 0.797058624048922
8 0.794859674299331
9 0.794820767669791
10 0.793801659625882
};
\addlegendentry{Conv (K=3)}
\addplot [thick, color0]
table {%
1 0.922703053392108
2 0.890007753704784
3 0.864780679998058
4 0.847095551101655
5 0.835016525920503
6 0.824733650238965
7 0.815059594048915
8 0.808567135956327
9 0.805235347156133
10 0.801201960566016
};
\addlegendentry{Conv (K=5)}
\addplot [thick, color1]
table {%
1 0.91799735844923
2 0.889329989729781
3 0.867036754052553
4 0.851775079975178
5 0.84124007945244
6 0.832181576287618
7 0.823823988078459
8 0.818030439605132
9 0.813384076807662
10 0.808728758546808
};
\addlegendentry{Dilated-Conv}
\addplot [thick, red]
table {%
1 0.936682688826685
2 0.911846329869225
3 0.891870562001967
4 0.876925164257258
5 0.865617060845095
6 0.855419149976617
7 0.846450950974051
8 0.839409896112894
9 0.834025502556032
10 0.829149402522451
};
\addlegendentry{ARMA}
\end{axis}

\end{tikzpicture}}
		\vspace{-1em}
		\caption{SSIM}
	\end{subfigure}
\caption{{\bf Per-frame performance comparison} of our \ARMA and our dilated \ARMA networks v.s.\ 
the Conv-LSTM, dilated Conv-LSTM baselines for Moving-MNIST-2 ($3\times$ speed).
Lower MSE values (in $10^{-3}$) or higher PSNR/SSIM values indicate better performance.
}
\label{fig:curves_mnist_v3}
\end{figure}

\subsection{Medical Image Segmentation}
\label{sub:supp_exp_semantic_segmenation}

To demonstrate \ARMA networks' applicability to image segmentation, 
we evaluate it on a challenging medical image segmentation problem. 

\paragraph{Dataset and Metrics}
For all experiments, we use a dataset from ISIC 2018: 
Skin Lesion Analysis Towards Melanoma Detection~\citep{codella2018skin}, 
which can be downloaded online\footnote{\url{https://challenge2018.isic-archive.com/task1/training/}}.
In this task, a model aims to predict a binary mask 
that indicates the location of the primary skin lesion for each input image.
The dataset consists of $2594$ images, and we resize each image to $224 \times 224$. 
We split dataset into training set, validation set and test set with ratios of $0.7$, $0.1$ and $0.2$ respectively.
 All models are evaluated using the following metrics:
$AC = (TP + TN) / (TP + TN + FP + FN)$, $SE = TP / (TP + FN)$, $SP = TN / (TN + FP)$, $PC = TP / (TP+FP)$, $F1 = 2 PC \cdot SE / (PC + SE)$ and $JS = |GT \cap SR| / |GT \cup SR|$, 
where TP stands for true positive, TN for true negative, FP for false positive, FN for false negative, 
GT for ground truth mask and SR for predictive mask.
 
\paragraph{Model Architectures} 
\textbf{(1) Baselines.} 
We use U-Net~\citep{ronneberger2015u} and non-local U-Net~\citep{wang2020non} as baseline models.
U-Net has a contracting path to capture context and a symmetric expanding path enables precise localization.
The network architecture is illustrated in \autoref{fig:unet_architecture}.
Non-local U-Net is equipped with global aggregation blocks based on the self-attention operator 
to aggregate global information without a deep encoder for biomedical image segmentation,
which is illustrated in \autoref{fig:nonlocal_unet_architecture}.
\textbf{(2) Our architectures.} 
We replace all traditional convolution layers with \ARMA layers in U-Net and non-local U-Net.

 \begin{figure}[!htbp]
\centering
	\begin{subfigure}[b]{0.46\textwidth}
	\centering
		\includegraphics[width=\linewidth]{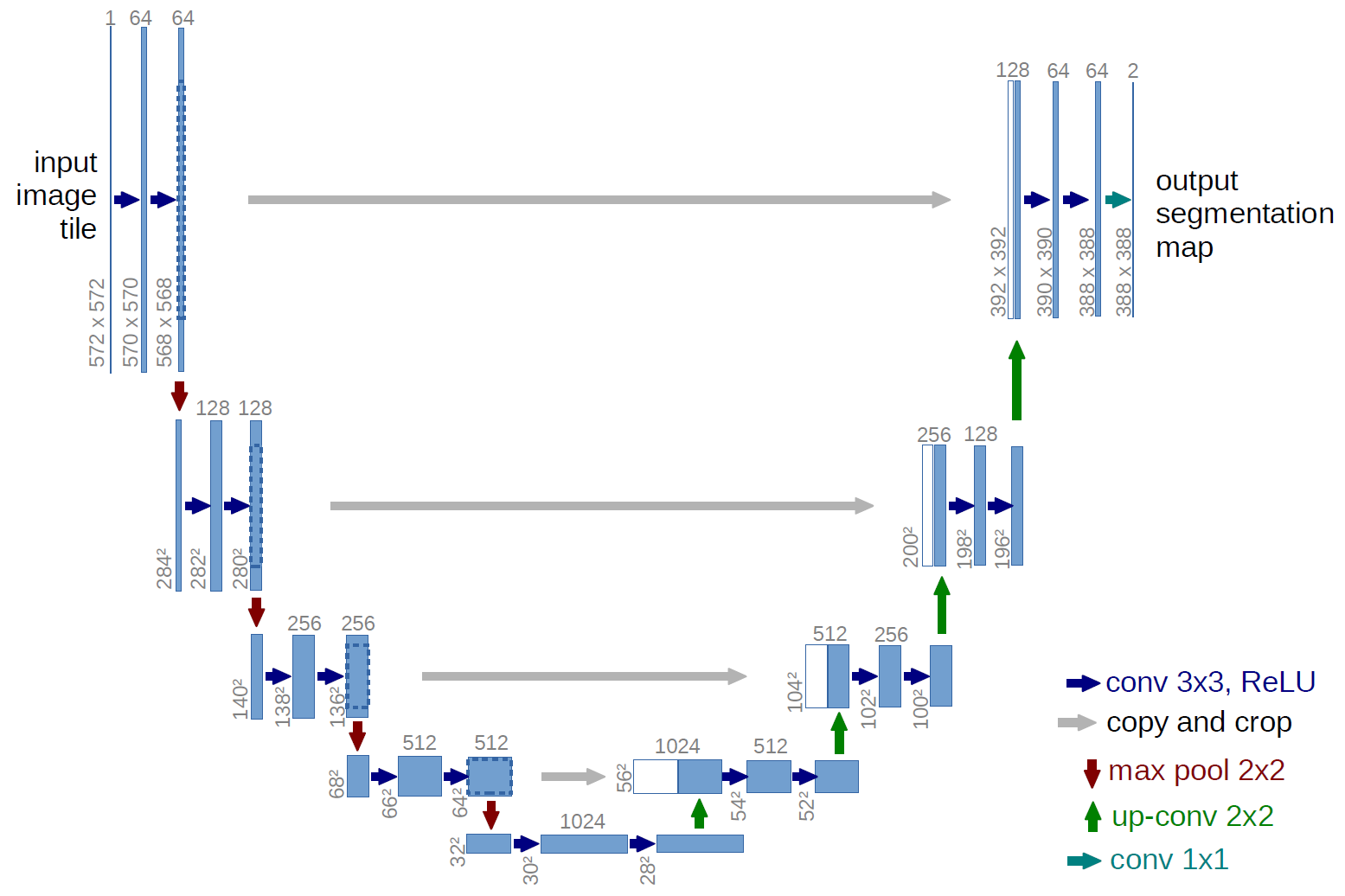}
		\caption{U-Net architecture.}
        \label{fig:unet_architecture}
    	\end{subfigure}
	\begin{subfigure}[b]{0.46\textwidth}
	\centering
		\includegraphics[width=\linewidth]{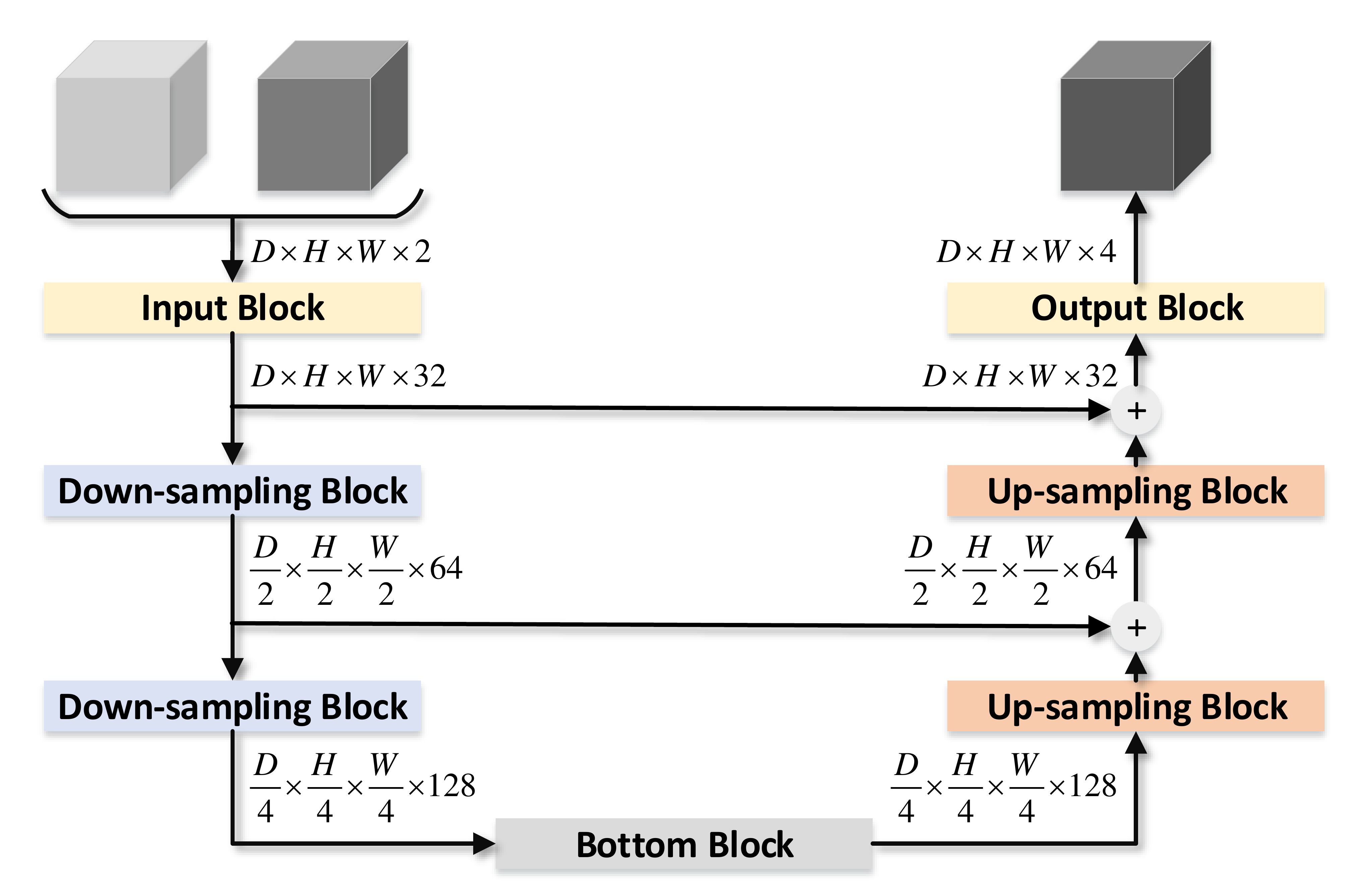}
		\caption{Non-local U-Net architecture.}
	\label{fig:nonlocal_unet_architecture}
	\end{subfigure}
\caption{Backbone architectures.}
\label{fig:unet_att_architecture}
\end{figure}
 
\paragraph{Training Strategy.} 
All models are trained using ADAM optimizer~\citep{kingma2014adam} with binary cross entropy (BCE) loss.
For initial learning rate, we search from $10^{-2}$ to $10^{-5}$ and 
choose $10^{-3}$ for U-Net and $10^{-2}$ for non-local U-Net.
The learning rate is decayed by $0.98$ every epoch.
During training, each image is randomly augmented by rotation, cropping, shifting, color jitter and normalization following the public source code\footnote{\url{https://github.com/LeeJunHyun/Image_Segmentation}}.

\begin{figure}[!htbp]
\centering
	\begin{subfigure}[b]{0.16\textwidth}
	\centering
		\includegraphics[width=\linewidth]{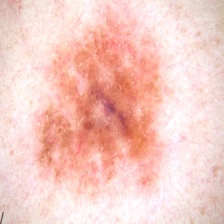}
		\caption*{}
    	\end{subfigure}
	\hfill
	\begin{subfigure}[b]{0.16\textwidth}
    	\centering
		\includegraphics[width=\linewidth]{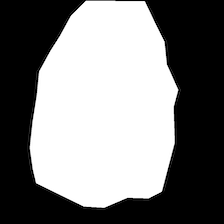}
		\caption*{}
	\end{subfigure}
	\hfill
	\begin{subfigure}[b]{0.16\textwidth}
	\centering
		\includegraphics[width=\linewidth]{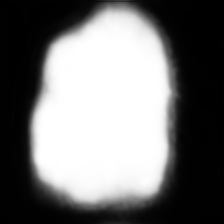}
		\caption*{}
	\end{subfigure}
	\hfill
	\begin{subfigure}[b]{0.16\textwidth}
	\centering
		\includegraphics[width=\linewidth]{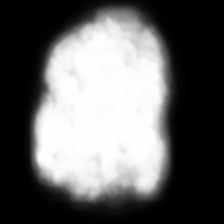}
		\caption*{}
	\end{subfigure}
	\hfill
	\begin{subfigure}[b]{0.16\textwidth}
	\centering
		\includegraphics[width=\linewidth]{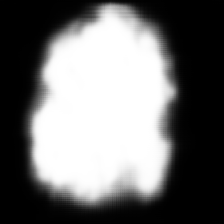}
		\caption*{}
	\end{subfigure}
	\hfill
	\begin{subfigure}[b]{0.16\textwidth}
	\centering
		\includegraphics[width=\linewidth]{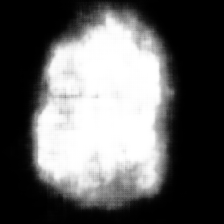}
		\caption*{}
	\end{subfigure}
	\begin{subfigure}[b]{0.16\textwidth}
	\centering
		\includegraphics[width=\linewidth]{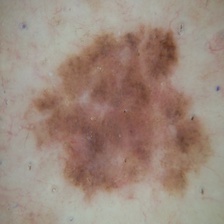}
		\caption*{{\small \begin{tabular}{c} Input \\ image \end{tabular}}}
	\end{subfigure}
	\hfill
	\begin{subfigure}[b]{0.16\textwidth}
	\centering
		\includegraphics[width=\linewidth]{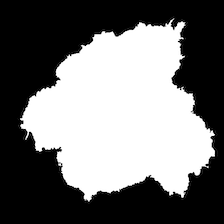}
		\caption*{{\small \begin{tabular}{c} Ground \\ truth \end{tabular}}}
	\end{subfigure}
	\hfill
	\begin{subfigure}[b]{0.16\textwidth}
	\centering
		\includegraphics[width=\linewidth]{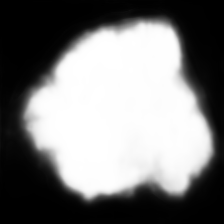}
		\caption*{{\small \begin{tabular}{c} U-net \\ {\bf with \ARMA} \end{tabular}}}
	\end{subfigure}
	\hfill
	\begin{subfigure}[b]{0.16\textwidth}
	\centering
		\includegraphics[width=\linewidth]{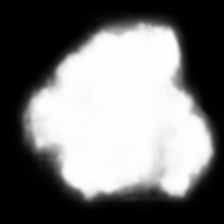}
		\caption*{{\small \begin{tabular}{c} U-net \\ without \ARMA \end{tabular}}}
	\end{subfigure}
	\begin{subfigure}[b]{0.16\textwidth}
	\centering
		\includegraphics[width=\linewidth]{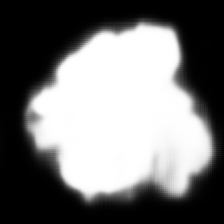}
		\caption*{{\small \begin{tabular}{c} Non-Local U-net \\ {\bf with \ARMA} \end{tabular}}}
	\end{subfigure}
	\hfill
	\begin{subfigure}[b]{0.16\textwidth}
	\centering
		\includegraphics[width=\linewidth]{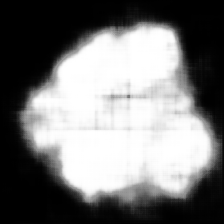}
		\caption*{{\small \begin{tabular}{c} Non-Local U-net \\ without \ARMA \end{tabular}}}
	\end{subfigure}
\caption{Predictive results of U-Net and Non-local U-Net with/without {\ARMA} layers.}
\label{fig:isic-results}
\end{figure}

\subsection{Image Classification}
\label{sub:supp_exp_image_classification}

\paragraph{Model Architectures and Datasets}
We replace the traditional convolutional layers by \ARMA layers 
in three benchmarking architectures for image classification:
AlexNet~\citep{NIPS2012_4824}, VGG-11~\citep{simonyan2014very}, and ResNet-18~\citep{he2016deep}.
We apply our proposed \ARMA networks on CIFAR10 and CIFAR100 datasets.
Both datasets have $50000$ training examples and $10000$ test examples, 
and we use $5000$ examples from the training set for validation (and leave $45000$ examples for training).

\paragraph{Training Strategy}
All models are trained using cross-entropy loss and SGD optimizer
with batch size $128$, learning rate $0.1$, weight decay $0.0005$ and momentum $0.9$.
For CIFAR10, the models are trained for $300$ epochs and we half the learning rate every $30$ epochs.
For CIFAR100, the models are trained for $200$ epochs and we divide the learning rate by 5 
at the $60^{\text{th}}$, $120^{\text{th}}$, $160^{\text{th}}$ epochs.

\paragraph{Results}
The experimental results are summarized in Table~\ref{table:image_classification_results}.
Our results show that \ARMA models achieve comparable or slightly better results than 
the benchmarking architectures.
Replacing traditional convolutional layer with our proposed \ARMA layer 
slightly boosts the performance of VGG-11 and ResNet-18 by 0.01\%-0.1\% in accuracy.
\textbf{Since image classifications tasks do not require convolutional layers to have large receptive field,
the learned \ARlong coefficients are highly concentrated around 0 as shown in \autoref{fig:histogram}.}
Consequently, \ARMA networks effectively reduce to traditional convolutional neural networks 
and therefore achieve comparable results.

\begin{table*}[!htbp]
    \centering
    \setlength\aboverulesep{0pt}\setlength\belowrulesep{0pt}
    \setcellgapes{3pt}\makegapedcells
    \scalebox{0.8}{
    \begin{tabular}{c| cc cc cc}
    \hline
     &
     \multicolumn{2}{c}{AlexNet} &
     \multicolumn{2}{c}{VGG-11}  & 
     \multicolumn{2}{c}{ResNet-18} \\
    \cmidrule(r){2-3} \cmidrule(r){4-5} \cmidrule(r){6-7} 
     &
    Conv. & \ARMA & Conv. & \ARMA & Conv. & \ARMA \\
    \hline
    CIFAR10 & $86.30 \pm 0.29$ & $85.67 \pm 0.19$ & $91.57 \pm 0.59$ & $91.57 \pm 0.73$ & $95.01 \pm 0.15$ & $95.07 \pm 0.13$ \\
    CIFAR100& $58.99 \pm 0.37$ & $57.43 \pm 0.24$ & $68.25\pm 0.11$ & $68.36 \pm 1.67$ & $73.71 \pm 0.23$ & $73.72 \pm 0.52$ \\
    \hline
    \end{tabular}}
    \caption{{\bf Image classification} on CIFAR10 and CIFAR100. 
    The reported accuracies (\%) and their standard deviations are computed from $10$ runs with different seeds. 
    Since image classifications tasks do not require convolutional layers to have large receptive field,
the learned \ARlong coefficients are highly concentrated around $0$ as shown in \autoref{fig:histogram}.
Consequently, \ARMA networks effectively reduce to traditional {\CNN}s
and therefore achieve comparable results.}
    \label{table:image_classification_results}
\end{table*}

\section{Analysis of \ERFLONG (\ERF)}
\label{app:receptive}

In this section, we prove the Theorem~\ref{thm:erf-arma} in \autoref{sub:erf}. Throughout this section, 
we use $\vectorSup{a}{\ell} = \{\cdots, \vectorInd{a}{\ell}{-1}, \vectorInd{a}{\ell}{0}, \vectorInd{a}{\ell}{1}, \cdots \}$ 
to denote the $\ell^{\text{th}}$ layer's \ARlong coefficients, and $\vectorSup{w}{\ell} = \{\cdots, \vectorInd{w}{\ell}{-1}, \vectorInd{w}{\ell}{0}, \vectorInd{w}{\ell}{1}, \cdots \}$ the $\ell^{\text{th}}$ layer's \ARlong coefficients.

\subsection{\ERF of General Linear Convolutional Networks}
\label{sub:}

The proof of is Theorem~\ref{thm:erf-arma} based on the following theorem on
linear convolutional networks~\citep{luo2016understanding}, which includes both
\CNN and \ARMA networks as special cases.

\begin{theorem}[{\bf \ERF of linear convolutional networks with infinite horizon}]
\label{thm:erf-cnn}
Consider an $L$-layer linear convolutional network
(without activation and pooling),
where its $\ell^\text{th}$-layer computes a weighted-sum of its input
$\vectorInd{y}{\ell}{i} = \sum_{p = -\infty}^{+\infty} \vectorInd{w}{\ell}{p} \vectorInd{y}{\ell - 1}{i - p}$.
Suppose the weights are non-negative $\vectorInd{w}{\ell}{p} \geq 0$ and normalized at each layer
$\sum_{p = -\infty}^{+\infty} \vectorInd{w}{\ell}{p} = 1, 1 \leq \ell \leq [L]$,
the network has an \ERF radius as
\begin{equation}
r^2(\text{ERF})= \sum_{\ell = 1}^{L} \left[ \sum_{p = -\infty}^{+\infty} p^2  \vectorInd{w}{\ell}{p} - 
\left( \sum_{p = -\infty}^{+\infty} p \vectorInd{w}{\ell}{p} \right)^2 \right]
\label{eq:erf-cnn}
\end{equation}
Furthermore, the \ERF converges to a Gaussian density function when $L$ tends to infinity.
\end{theorem}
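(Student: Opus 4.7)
The plan is to exploit two structures already present in the hypotheses: the \emph{linearity} of the network, which makes an $L$-layer stack of convolutions equivalent to a single convolution whose kernel is the convolution of the per-layer kernels; and the \emph{probabilistic} content of non-negative, sum-to-one coefficients, which turns each $\vectorSup{w}{\ell}$ into a probability mass function on $\mathbb{Z}$. The theorem then reduces to two classical facts about sums of independent integer-valued random variables: additivity of variance, and the central limit theorem.

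First I would make the gradient explicit. By induction on $L$, the linear recursion $\vectorInd{y}{\ell}{i} = \sum_p \vectorInd{w}{\ell}{p}\,\vectorInd{y}{\ell-1}{i-p}$ gives $\partial \vectorInd{y}{L}{i}/\partial \vectorInd{y}{0}{i-p} = h_p$, where $h := \vectorSup{w}{1} \ast \vectorSup{w}{2} \ast \cdots \ast \vectorSup{w}{L}$ is the composite kernel. So $g(i, p) = h_p$ is independent of $i$ and non-negative, and because convolutions of probability mass functions are probability mass functions, $\sum_p h_p = \prod_\ell \sum_p \vectorInd{w}{\ell}{p} = 1$. Substituting into Definition~\ref{def:erf} (1D specialization), the spatial average is trivial and the normalization is the identity, yielding $\ERF(p) = h_p$. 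Now let $P_1, \dots, P_L$ be independent integer random variables with $\Pr[P_\ell = p] = \vectorInd{w}{\ell}{p}$; then $h_p = \Pr[P_1 + \cdots + P_L = p]$. The radius formula in \autoref{eq:erf-cnn} is exactly the variance of this distribution, and by independence $\mathrm{Var}(P_1 + \cdots + P_L) = \sum_{\ell=1}^{L}\mathrm{Var}(P_\ell) = \sum_{\ell=1}^{L} \bigl[\sum_p p^2 \vectorInd{w}{\ell}{p} - (\sum_p p\,\vectorInd{w}{\ell}{p})^2\bigr]$, which closes the first half of the theorem.

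For the Gaussian limit I would invoke the Lindeberg--Feller CLT: after centering $P_1 + \cdots + P_L$ by $\mu_L := \sum_\ell \mathbb{E}[P_\ell]$ and rescaling by $\sigma_L := \sqrt{\sum_\ell \mathrm{Var}(P_\ell)}$, the standardized sum converges in distribution to $\mathcal{N}(0,1)$ as $L \to \infty$ under a uniform smallness (Lindeberg) condition on the summands --- in particular for identically distributed layers, or for uniformly bounded kernels with $\sigma_L \to \infty$. Pulling this back to the unnormalized coordinate, the PMF $h_p = \ERF(p)$ approaches a Gaussian density with mean $\mu_L$ and variance $\sigma_L^2$, which is the ``converges to a Gaussian density function'' claim. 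The main obstacle is precisely this CLT step: the theorem as stated carries no explicit regularity hypotheses, so a fully rigorous version requires adding such a condition, since degenerate sequences (e.g., one layer dominates the variance, or mass escapes to infinity) can defeat the Gaussian conclusion. The variance identity, by contrast, is assumption-free beyond convergence of second moments and drops out of linearity plus independence alone.
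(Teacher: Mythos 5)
Your proposal is correct and takes essentially the same route as the paper's own proof: both identify the gradient map with the composite kernel $w^{1} \ast w^{2} \ast \cdots \ast w^{L}$ (data-independent and location-independent by linearity), reinterpret the non-negative normalized coefficients as probability mass functions of independent integer-valued random variables so that the ERF is the law of their sum, derive the radius formula from additivity of variance, and obtain the Gaussian limit from a central limit theorem (the paper invokes Lyapunov's CLT where you invoke Lindeberg--Feller, an immaterial difference). Your closing caveat that the Gaussian-limit claim tacitly requires a regularity condition on the layer kernels is fair --- the paper likewise applies the CLT without verifying its hypotheses --- but it does not alter the substance of the argument.
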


\begin{proof}
In this linear convolutional network, the gradient maps can be computed with chain rule as
\begin{equation}
\vectorSub{g}{i, \bm{:}} = {\vectorSup{w}{1}}^{\top} \ast {\vectorSup{w}{2}}^{\top} 
\cdots \ast {\vectorSup{w}{L}}^{\top}, ~ \forall i \in \Z
\label{eq:gradient-map-analytic}
\end{equation}
where ${w}^{\ell \top}$ denotes the reversed sequence of $\vectorSup{w}{\ell}$. 
Notice that 
{\bf (1)} The gradient maps do not depend on the input, i.e.\ they are data-independent;
{\bf (2)} The gradient maps are identical across different locations in the output.
Consequently, the ERF is equal to any one gradient map above
\begin{equation}
\text{ERF} = {\vectorSup{w}{1}}^{\top} \ast {\vectorSup{w}{2}}^{\top} 
\cdots \ast {\vectorSup{w}{L}}^{\top}
\label{eq:erf-cnn-analytic}
\end{equation}
The remaining part of the proof makes use of {\em probabilistic method}, 
which interprets the operation at each layer as a discrete random variable. 
Since the weights at each layer are non-negative and normalized, 
they can be treated as values of a probability mass function.
Concretely, we construct $L$ independent random variables $\{\scalarSup{W}{1}, \cdots, \scalarSup{W}{L}\}$
such that $\P[\scalarSup{W}{\ell} = p] = \vectorInd{w}{\ell}{-p}$.
Similarly, we introduce a random variable $\scalarSup{S}{L}$ to represent the \ERF, 
i.e.\ $\P[\scalarSup{S}{L} = p] = \vectorSub{\text{ERF}}{p}$. 
As a result, the ERF radius is equal to 
standard deviation of $\scalarSup{S}{L}$, or equivalently $r^2(\text{ERF}) = \V[\scalarSup{S}{L}]$. 

Recall that {\em addition of independent random variables results in 
convolution of their probability mass functions}, \autoref{eq:erf-cnn-analytic}
implies that $\scalarSup{S}{L}$ is an addition of all $\scalarSup{W}{\ell}$'s, i.e.\
$\scalarSup{S}{L} = \sum_{\ell = 1}^{L} \scalarSup{W}{\ell}$.
Therefore, the variance of $\scalarSup{S}{L}$ is equal to
a summation of the variances for $\scalarSup{W}{\ell}$'s. Thus,
\begin{align}
\V[\scalarSup{S}{L}] & = \sum_{\ell = 1}^{L} \V[\scalarSup{W}{\ell}] 
= \sum_{\ell = 1}^{L} \left[ \E[(\scalarSup{W}{\ell})^2] - \E[\scalarSup{W}{\ell}]^2 \right] \\
& = \sum_{\ell = 1}^{L} \left[ \sum_{p = -\infty}^{+\infty} p^2  \vectorInd{w}{\ell}{p} -
\left( \sum_{p = -\infty}^{+\infty} p \vectorInd{w}{\ell}{p} \right)^2 \right]
\end{align}
which proves the \autoref{eq:erf-cnn}.
Furthermore, the {\em Lyapunov central limit theorem} shows that 
$(\scalarSup{S}{L} - \E[\scalarSup{S}{L}]) / \V[\scalarSup{S}{L}]$ converges to
a standard normal random variable if $L$ tends to infinity
\begin{equation}
\frac{\scalarSup{S}{L} - \E[\scalarSup{S}{L}]}{\sqrt{\V[\scalarSup{S}{L}]}} 
= \frac{\sum_{\ell = 1}^{L} \left(\scalarSup{W}{\ell} - \E[\scalarSup{W}{\ell}] \right)}
{\sqrt{\sum_{\ell = 1}^{L} \V[\scalarSup{W}{\ell}]}}\xrightarrow[]{D} \mathcal{N}(0, 1)
\end{equation}
that is, the \ERF function is approximately Gaussian
when the number of layers $L$ is large enough.
\end{proof}

\subsection{\ERF of Traditional {\CNN}s ($\vectorInd{a}{\ell}{1} = -\scalarSup{a}{\ell}  = 0$)}
As a warmup, we first provide a proof for 
the special case of traditional \CNN where $\scalarSup{a}{\ell} = 0$ for all layers.
For reference, we list the first two cases of {\em Faulhaber's formula}:
\begin{subequations}
\begin{gather}
\sum_{p = 0}^{K - 1} p = \frac{K(K-1)}{2}
\label{eq:faulhaber-sum-1} \\
\sum_{p = 0}^{K - 1} p^2 = \frac{K(K-1)(2K-1)}{6} 
\label{eq:faulhaber-sum-2}
\end{gather}
\end{subequations}
\label{sub:erf-conv}

\begin{proof}
In this special case, the \ERF radius can be obtained by plugging $\vectorInd{w}{\ell}{p} = 1 / \scalarSup{K}{\ell}$ 
for $p = 0, \scalarSup{d}{\ell}, \cdots, \scalarSup{d}{\ell} (\scalarSup{K}{\ell} - 1)$ into \autoref{eq:erf-cnn}.
\begin{align}
r^2(\text{ERF})
& = \sum_{\ell = 1}^{L} \left[ \sum_{p = 0}^{\scalarSup{K}{\ell} - 1} \frac{\left(p \scalarSup{d}{\ell} \right)^2}{\scalarSup{K}{\ell}} 
- \left( \sum_{p = 0}^{\scalarSup{K}{\ell} - 1} \frac{p \scalarSup{d}{\ell}}{\scalarSup{K}{\ell}} \right)^2 \right] \\
& = \sum_{\ell = 1}^{L} \left[ \frac{{\scalarSup{d}{\ell}}^2}{\scalarSup{K}{\ell}} \frac{\scalarSup{K}{\ell}\left(\scalarSup{K}{\ell} - 1\right)\left(2\scalarSup{K}{\ell}-1\right)}{6} - \left( \frac{\scalarSup{d}{\ell}}{\scalarSup{K}{\ell}} \frac{\scalarSup{K}{\ell}\left(\scalarSup{K}{\ell}-1\right)}{2} \right)^2 \right] \\
& = \sum_{\ell = 1}^{L} \frac{{{\scalarSup{d}{\ell}}^2}\left({\scalarSup{K}{\ell}}^2 - 1 \right)}{12}
\label{eq:erf-conv}
\end{align}
where the infinite series are computed using
\autoref{eq:faulhaber-sum-1} and \autoref{eq:faulhaber-sum-2}. 
Taking square root on both sides completes the proof for the special case of {\CNN}s.
\end{proof}

\textbf{\ERF Analysis of {\CNN}s.}
If we further assume that all layers are identical, 
i.e.\ $\scalarSup{K}{\ell} = K, \scalarSup{d}{\ell} = d$ for $1 \leq \ell \leq L$, 
we can simplify \autoref{eq:erf-conv} as
\begin{equation}
r(\text{ERF}) = \sqrt{L} \cdot \sqrt{\frac{d^2 (K^2 - 1)}{12}} = O\left(d K \sqrt{L} \right)
\end{equation}
That is, the \ERF radius grows linearly with kernel size $K$ and dilation $d$, 
but sub-linearly with the number of layers $L$ in the linear network.

\subsection{\ERF of \ARMA networks ($\vectorInd{a}{\ell}{1} = - \scalarSup{a}{\ell} \leq 0$)}
In the part, we provide a proof for general {\ARMA} networks where $\scalarSup{a}{\ell} \leq 0$.
In sketch, the proof consists of three steps:
{\bf (1)} we introduce {\em inverse convolution}
and convert each {\ARMA} model to a {\MAlong} model:
\(\myvector{a} \ast \myvector{y} 
= \myvector{w} \ast \myvector{x} \implies \myvector{y} 
= \myvector{f} \ast \myvector{x}\),
where $\myvector{f}$ represents a convolution with {\em infinite} number of coefficients, 
$\myvector{x}$ and $\myvector{y}$ are the input and output of the model respectively.
{\bf (2)} We derive the {\em \MGFlong} (\MGF) of the {\MAlong} coefficients from the first step,
and use the functions to compute the first and second moments.
{\bf (3)} We plug the moments from the second step 
into \autoref{eq:erf-cnn} to obtain \autoref{eq:erf-arma}.

\begin{definition}[{\bf Inverse convolution}]
\label{def:inverse-convolution}
Given a convolution (with coefficients) $\myvector{a}$,
its inverse convolution $\inverse{\myvector{a}}$ is defined such that
$\myvector{a} \ast \inverse{\myvector{a}} = \inverse{\myvector{a}} \ast \myvector{a} = \myvector{\delta}$ 
is an identical mapping, i.e.\
\begin{equation}
\sum_{p = -\infty}^{+\infty} \vectorSub{a}{i - p} \vectorSub{\inverse{a}}{p} 
= \vectorSub{\delta}{i} = \left\{
	\begin{aligned}
		1 \qquad i = 0 \\
		0 \qquad i \ne 0
	\end{aligned}
\right.
\label{eq:inverse-convolution}
\end{equation}
\end{definition}

{\em Remark:} The inverse convolution does not exist for any convolution $\myvector{a}$.
A necessary and sufficient condition for invertibility of $\myvector{a}$ is 
that its Fourier transform is non-zero everywhere~\citep{oppenheim2014discrete}.

\begin{definition}[{\bf Moments and \MGFLONG, \MGF}]
\label{def:moment-generating-function}
Given a convolution (with coefficients) $\myvector{f}$, its $i^{\text{th}}$ moment is defined as
\begin{equation}
\scalarSub{M}{i}(\myvector{f}) = \sum_{p = -\infty}^{+\infty} \vectorSub{f}{p} p^i
\label{eq:moments}
\end{equation}
Furthermore, we define the {\MGFlong} of the coefficients $\myvector{f}$ as
\begin{equation}
\scalarSub{M}{\myvector{f}}(\lambda) = \sum_{p = -\infty}^{+\infty} \vectorSub{f}{p} e^{\lambda p}
\label{eq:moment-generating-function}
\end{equation}
The name ``moment generating'' comes from the fact that
\begin{equation}
\scalarSub{M}{i}(\myvector{f}) = 
\left. \frac{d^{i} \scalarSub{M}{\myvector{f}}(\lambda)}{{d \lambda}^{i}} \right\rvert_{\lambda = 0}
\label{eq:mgf-to-moments}
\end{equation}
\end{definition}

{\em Remark:} Since {\MGFlong} (\MGF) could be interpreted as a real-valued {\DTFTlong} (\DTFT), 
the properties of {\MGF} are very similar to the ones of {\DTFT}.
In particular, the convolution theorem also holds for {\MGF},
i.e.\ $\scalarSub{M}{\myvector{f} \ast \myvector{g}}(\lambda) =
\scalarSub{M}{\myvector{f}}(\lambda) \scalarSub{M}{\myvector{g}}(\lambda)$.
If two convolutions $\myvector{a}$ and $\inverse{\myvector{a}}$ are inverse to each other,
we have $\scalarSub{M}{\myvector{a}}(\lambda) \scalarSub{M}{\inverse{\myvector{a}}}(\lambda) = 1$.

Now we are ready to prove Theorem~\ref{thm:erf-arma} using Theorem~\ref{thm:erf-cnn} and
Definitions~\ref{def:inverse-convolution} and~\ref{def:moment-generating-function}.
\begin{proof}
Let $\vectorSup{f}{\ell} = \vectorSup{\inverse{a}}{\ell} \ast \vectorSup{w}{\ell}$, we have
\begin{equation}
\begin{aligned}
\vectorSup{y}{\ell} & = \myvector{\delta} \ast \vectorSup{y}{\ell} 
= \left( \vectorSup{\inverse{a}}{\ell} \ast \vectorSup{a}{\ell} \right) \ast \vectorSup{y}{\ell}  
= \vectorSup{\inverse{a}}{\ell} \ast \left( \vectorSup{a}{\ell} \ast \vectorSup{y}{\ell} \right) \\
& = \vectorSup{\inverse{a}}{\ell} \ast \left( \vectorSup{w}{\ell} \ast \vectorSup{y}{\ell - 1} \right)
= \left( \vectorSup{\inverse{a}}{\ell} \ast \vectorSup{w}{\ell} \right) \ast \vectorSup{y}{\ell - 1}
= \vectorSup{f}{\ell} \ast \vectorSup{y}{\ell - 1}
\end{aligned}
\end{equation}
where each $\vectorSup{f}{\ell}$ has infinite number of coefficients. 
We denote the {\MGF} of $\vectorSup{f}{\ell}$ as $\scalarSub{M}{\vectorSup{f}{\ell}}$, 
and its first and second moments as $\scalarSub{M}{1}(\vectorSup{f}{\ell})$ and $\scalarSub{M}{2}(\vectorSup{f}{\ell})$.
With the moments of $\vectorSup{f}{\ell}$, we can rewrite \autoref{eq:erf-cnn} in Theorem~\ref{thm:erf-cnn} as
\begin{equation}
r^2(\text{ERF}) = \sum_{\ell = 1}^{L} 
\left[ \scalarSub{M}{2}(\vectorSup{f}{\ell}) - \left( \scalarSub{M}{1}(\vectorSup{f}{\ell}) \right)^2 \right] 
\label{eq:erf-var-app}
\end{equation}
The remaining part is to compute $\scalarSub{M}{\vectorSup{f}{\ell}}$ for each $\vectorSup{f}{\ell}$, 
with which $\scalarSub{M}{1}(\vectorSup{f}{\ell})$ and $\scalarSub{M}{2}(\vectorSup{f}{\ell})$ are generated.
Notice that $\vectorSup{f}{\ell} = \vectorSup{\inverse{a}}{\ell} \ast \vectorSup{w}{\ell}$ 
is a convolution between $\vectorSup{\inverse{a}}{\ell}$ and $\vectorSup{w}{\ell}$, we have
\begin{align}
\scalarSub{M}{\vectorSup{f}{\ell}}(\lambda) &
= \scalarSub{M}{\vectorSup{\inverse{a}}{\ell}}(\lambda) \scalarSub{M}{\vectorSup{w}{\ell}}(\lambda)
= \frac{\scalarSub{M}{\vectorSup{w}{\ell}}(\lambda)}{\scalarSub{M}{\vectorSup{a}{\ell}}(\lambda)} \\
& = \frac{1}{1 - \scalarSup{a}{\ell} e^{\lambda}} \sum_{p = 0}^{\scalarSup{K}{\ell} - 1} 
\frac{1 - \scalarSup{a}{\ell}}{\scalarSup{K}{\ell}} e^{\lambda p \scalarSup{d}{\ell}}
\end{align}
where the first equation uses the property that $\scalarSub{M}{\vectorSup{a}{\ell}}(\lambda) 
\scalarSub{M}{\vectorSup{\inverse{a}}{\ell}}(\lambda) = 1$ for any $\lambda$.
The first moment $\scalarSub{M}{1}(\vectorSup{f}{\ell})$ is therefore
\begin{align}
& \scalarSub{M}{1}(\vectorSup{f}{\ell}) =
\left. \frac{d \scalarSub{M}{\vectorSup{f}{\ell}}(\lambda)}{d \lambda} \right\rvert_{\lambda = 0} \\
= ~ & \bigg\{ \frac{\scalarSup{a}{\ell}}{\left(1 - \scalarSup{a}{\ell} \lambda \right)^2} 
\sum_{p = 0}^{\scalarSup{K}{\ell}} \frac{1 - \scalarSup{a}{\ell}}{\scalarSup{K}{\ell}} e^{\lambda p \scalarSup{d}{\ell}} + 
\frac{1}{1 - \scalarSup{a}{\ell} e^{\lambda}} \sum_{p = 0}^{\scalarSup{K}{\ell} - 1}
\frac{1 - \scalarSup{a}{\ell}}{\scalarSup{K}{\ell}} p \scalarSup{d}{\ell} e^{\lambda p \scalarSup{d}{\ell}} \bigg\}_{\lambda = 0} \\
= ~ & \frac{\scalarSup{a}{\ell}}{1 - \scalarSup{a}{\ell}} + \frac{\scalarSup{d}{\ell}}{\scalarSup{K}{\ell}}
\left( \sum_{p = 0}^{\scalarSup{K}{\ell} - 1} p \right) \\
= ~ & \frac{\scalarSup{a}{\ell}}{1 - \scalarSup{a}{\ell}} + \frac{\scalarSup{d}{\ell} \left( \scalarSup{K}{\ell} - 1\right)}{2}
\label{eq:first-moment}
\end{align}
where the last equation makes use of \autoref{eq:faulhaber-sum-1}.
Similarly, the second moment $\scalarSub{M}{2}(\vectorSup{f}{\ell})$ is
\begin{align}
& \scalarSub{M}{2}(\vectorSup{f}{\ell}) = 
\left. \frac{d^2 \scalarSub{M}{\vectorSup{f}{\ell}}(\lambda)}{{d \lambda}^2} \right\rvert_{\lambda = 0} \\
= ~ & \begin{aligned} \bigg\{ \frac{{\scalarSup{a}{\ell}}^2}{\left( 1 - \scalarSup{a}{\ell} \right)^3} & 
\sum_{p = 0}^{\scalarSup{K}{\ell}} \frac{1 - \scalarSup{a}{\ell}}{\scalarSup{K}{\ell}} e^{\lambda p \scalarSup{d}{\ell}}
+ \frac{2 \scalarSup{a}{\ell}}{\left(1 - \scalarSup{a}{\ell} e^{\lambda} \right)^2} \sum_{p = 0}^{\scalarSup{K}{\ell} - 1}
\frac{1 - \scalarSup{a}{\ell}}{\scalarSup{K}{\ell}} p \scalarSup{d}{\ell} e^{\lambda p \scalarSup{d}{\ell}} \\
& \quad + \frac{1}{1 - \scalarSup{a}{\ell} e^{\lambda}} \sum_{p = 0}^{\scalarSup{K}{\ell} - 1}
\frac{1 - \scalarSup{a}{\ell}}{\scalarSup{K}{\ell}} \left( p \scalarSup{d}{\ell} \right)^2 e^{\lambda p \scalarSup{d}{\ell}} \bigg\}_{\lambda = 0} \end{aligned} \\
& = \left( \frac{\scalarSup{a}{\ell}}{1 - \scalarSup{a}{\ell}}\right)^2 + 
\frac{2 \scalarSup{a}{\ell}}{1 - \scalarSup{a}{\ell}} \frac{\scalarSup{d}{\ell}}{\scalarSup{K}{\ell}} \left( \sum_{p = 0}^{\scalarSup{K}{\ell} - 1} p \right) + \frac{{\scalarSup{d}{\ell}}^2}{\scalarSup{K}{\ell}} \left( \sum_{p = 0}^{\scalarSup{K}{\ell} - 1} p^2 \right) \\
& =   \left( \frac{\scalarSup{a}{\ell}}{1 - \scalarSup{a}{\ell}}\right)^2 + \frac{2 \scalarSup{a}{\ell}}{1 - \scalarSup{a}{\ell}} \frac{\scalarSup{d}{\ell} \left( \scalarSup{K}{\ell} - 1\right)}{2} + \frac{{\scalarSup{d}{\ell}}^2 \left(\scalarSup{K}{\ell} - 1\right) \left(2 \scalarSup{K}{\ell} - 1\right)}{6}
\label{eq:second-moment}
\end{align}
Plugging \autoref{eq:first-moment} and \autoref{eq:second-moment} into \autoref{eq:erf-var-app}, we have
\begin{equation}
r^2(\text{ERF}) = \sum_{\ell = 1}^{L} \left[ \frac{{\scalarSup{d}{\ell}}^2 \left( \scalarSup{K}{\ell} - 1 \right)^2}{12}  
+ \frac{ \scalarSup{a}{\ell}}{\left(1 - \scalarSup{a}{\ell}\right)^2} \right]
\label{eq:erf-arma-app}
\end{equation}
Taking square root on both sides completes the proof.
\end{proof}

\textbf{\ERF Analysis of \ARMA Networks.} If we assume all layers are identical, 
i.e.\ $\scalarSup{K}{\ell} = K, \scalarSup{d}{\ell} = d, \scalarSup{a}{\ell} = a$ 
for $1 \leq \ell \leq L$, we can simplify \autoref{eq:erf-arma-app} as
\begin{equation}
r(\text{ERF}) = \sqrt{L} \cdot \sqrt{\frac{d^2 (K^2 - 1)}{12} + \frac{a}{(1 - a)^2}} = O\left( \sqrt{L} \max\left(d K, \frac{\sqrt{a}}{1 - a} \right) \right)
\end{equation}
The \ERF radius is dominated by the \AR coefficient when $a \lessapprox 1$ regardless of kernel size $K$ and dilation $d$.
The radius still grows sub-linearly with the number of layers $L$ in the linear network.

\section{Computation of {\ARMA} Layers}
\label{app:computation}

In the section, we first derive the backpropagation rules in Theorem~\ref{thm:ar-backprop}.
We then show how to efficiently compute both forward and backward passes 
in \ARMA layer using {\em \FFTLONG}.

\subsection{Backpropagation in {\ARMA} models}
\label{sub:backprop}

In this part, we will prove a general theorem for backpropagation in \ARMA models. 
To keep the notations simple, we derive the backpropagation equations
for \ARMA models with one dimension input/output and one channel.
However, the techniques in the proof can be trivially extended to general \ARMA models 
with high-dimensional input/output with multiple channels.

\begin{theorem} [{\bf Backpropagation in an \ARMA model}]
\label{thm:arma-backprop}
Consider an \ARMA model $\myvector{a} \ast \myvector{y} = \myvector{w} \ast \myvector{x}$,
where $\myvector{a}$ and $\myvector{w}$ are the sequences 
of moving-average and autoregressive coefficients respectively,
the gradients $\{\gradientInline{\myvector{x}}, \gradientInline{\myvector{w}}, \gradientInline{\myvector{a}} \}$ 
can be computed from $\gradientInline{\myvector{y}}$ with the following equations:
\begin{subequations}
\begin{align}
\adjoint{\myvector{a}} \ast \gradient{\myvector{x}} & = 
\adjoint{\myvector{w}} \ast \gradient{\myvector{y}}
\label{eq:arma-model-backprop-x} \\
- \adjoint{\myvector{a}} \ast \gradient{\myvector{a}} & = 
\adjoint{\myvector{y}} \ast \gradient{\myvector{y}} 
\label{eq:arma-model-backprop-a} \\
\adjoint{\myvector{a}} \ast \gradient{\myvector{w}} & = 
\adjoint{\myvector{x}} \ast \gradient{\myvector{y}}
\label{eq:arma-model-backprop-w} 
\end{align}
\end{subequations}
where $\adjoint{\myvector{a}}$, $\adjoint{\myvector{w}}$ and $\adjoint{\myvector{y}}$ 
denote the reversed sequences of $\myvector{a}$, $\myvector{w}$ and $\myvector{y}$ respectively.
\end{theorem}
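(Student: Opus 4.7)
The plan is to eliminate the implicit definition of $\myvector{y}$ by the closed form $\myvector{y} = \inverse{\myvector{a}} \ast \myvector{w} \ast \myvector{x}$, where $\inverse{\myvector{a}}$ is the inverse convolution of Definition~\ref{def:inverse-convolution}. For $\gradient{\myvector{x}}$ and $\gradient{\myvector{w}}$ this renders $\myvector{y}$ a linear function of the variable, so the chain rule reduces to one application of adjoints. For $\gradient{\myvector{a}}$ the dependence is nonlinear through the inverse, and I would instead differentiate the ARMA relation implicitly. In every case, convolving the resulting expression on the left by $\adjoint{\myvector{a}}$ cancels the $\inverse{\myvector{a}}$ factor and produces the stated ARMA-style backward equations.

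My first preparatory step is a small adjoint lemma used repeatedly below. Under the $\ell^2$ inner product $\langle \myvector{u}, \myvector{v}\rangle = \sum_i \vectorSub{u}{i} \vectorSub{v}{i}$, a direct re-indexing shows that $\langle \myvector{u}, \myvector{h} \ast \myvector{v}\rangle = \langle \adjoint{\myvector{h}} \ast \myvector{u}, \myvector{v}\rangle$, so the adjoint of $\myvector{v}\mapsto \myvector{h}\ast\myvector{v}$ is convolution by the reversal $\adjoint{\myvector{h}}$. Since reversal distributes over convolution ($\adjoint{(\myvector{u}\ast\myvector{v})} = \adjoint{\myvector{u}}\ast\adjoint{\myvector{v}}$) and fixes $\myvector{\delta}$, reversing both sides of $\myvector{a}\ast\inverse{\myvector{a}} = \myvector{\delta}$ gives the commutation identity $\adjoint{\inverse{\myvector{a}}} = \inverse{\adjoint{\myvector{a}}}$ that I will invoke in each of the three derivations.

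With these tools, \autoref{eq:arma-model-backprop-x} and \autoref{eq:arma-model-backprop-w} drop out immediately. The map $\myvector{x} \mapsto \myvector{y} = \inverse{\myvector{a}}\ast\myvector{w}\ast\myvector{x}$ is linear, and its adjoint sends $\gradient{\myvector{y}}$ to $\adjoint{\myvector{w}}\ast\adjoint{\inverse{\myvector{a}}}\ast \gradient{\myvector{y}} = \adjoint{\myvector{w}} \ast \inverse{\adjoint{\myvector{a}}} \ast \gradient{\myvector{y}}$; convolving both sides by $\adjoint{\myvector{a}}$ gives \autoref{eq:arma-model-backprop-x}. Because convolution is commutative, $\myvector{w}$ enters $\myvector{y}$ symmetrically with $\myvector{x}$, and the same calculation with the roles swapped yields \autoref{eq:arma-model-backprop-w}.

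The main obstacle is \autoref{eq:arma-model-backprop-a}, where the inverse destroys linearity. I would handle it by implicit differentiation: differentiating $\myvector{a} \ast \myvector{y} = \myvector{w} \ast \myvector{x}$ entrywise with respect to $\vectorSub{a}{j}$ and applying the product rule produces $\vectorSub{y}{i-j} + (\myvector{a} \ast \partial_{\vectorSub{a}{j}} \myvector{y})_i = 0$, so $\partial_{\vectorSub{a}{j}} \myvector{y}$ equals $-\inverse{\myvector{a}}$ convolved with the $j$-shift of $\myvector{y}$. Composing with $\gradient{\myvector{y}}$ through the chain rule, collecting across $j$, and invoking the adjoint lemma yields $\gradient{\myvector{a}} = -\adjoint{\myvector{y}} \ast \inverse{\adjoint{\myvector{a}}} \ast \gradient{\myvector{y}}$; a final left-convolution by $\adjoint{\myvector{a}}$ delivers \autoref{eq:arma-model-backprop-a}. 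The one technical caveat worth a sentence is that $\inverse{\myvector{a}}$ must exist, which is precisely the stability regime addressed by Section~\ref{sec:stability}; the extension from the one-dimensional single-channel form here to the 2D multichannel setting of Theorem~\ref{thm:ar-backprop} is mechanical, since each output channel $t$ decouples and both spatial dimensions behave identically under the above reversal and inverse-convolution calculus.
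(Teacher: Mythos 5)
Your proposal is correct and follows essentially the same route as the paper's first (real-number) proof: implicit differentiation of the ARMA relation, use of the inverse convolution $\inverse{\myvector{a}}$, the key identity $\adjoint{\inverse{\myvector{a}}} = \inverse{\adjoint{\myvector{a}}}$ (the paper's Lemma~\ref{lem:inverse-transposed-convolution}), and a final left-convolution by $\adjoint{\myvector{a}}$ to recast the gradient in ARMA form. Your packaging of the index manipulations as an adjoint-of-convolution lemma under the $\ell^2$ inner product is a cleaner bookkeeping device but not a different argument, and your caveat about invertibility of $\myvector{a}$ matches the paper's implicit assumption.
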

Notice that Theorem~\ref{thm:ar-backprop} is special case of Theorem~\ref{thm:arma-backprop}: 
the first equation in \autoref{eq:arma-cnn-backprop} is proved by \autoref{eq:arma-model-backprop-a}, 
and the second equation is proved by \autoref{eq:arma-model-backprop-x}.

We provide two different proofs of Theorem~\ref{thm:arma-backprop}.
{\bf (1)} The analysis in our first proof is based on real numbers, 
and applicable to arbitrary types of convolution.
{\bf (2)} If the convolution is {\em circular} (as in the implementation of this paper), 
we provide a simpler proof using Fourier transform (therefore complex numbers). 
The second proof also suggests an {\FFT}-based algorithm to compute the backpropagation 
in \autoref{eq:arma-cnn-backprop} efficiently. 

\subsubsection{Proof in Real Numbers $\mathbb{R}$}
Before we prove the theorem, we first prove
a useful lemma on the inverse of transposed convolution. 

\begin{lemma} [{\bf Inverse of transposed convolution}]
Given a convolution (with coefficients) $\myvector{a}$, the operations of 
inversion and transposition are exchangeable,
\label{lem:inverse-transposed-convolution}
\begin{equation}
\inverse{\adjoint{\myvector{a}}} = \inverse{\adjoint{\myvector{a}}}
\label{eq:inverse-transposed-convolution}
\end{equation}
that is, the inverse transposed convolution is equal to the transposed inverse convolution.  
\end{lemma}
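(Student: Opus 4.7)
The plan is to prove the identity by starting from the defining equation of the inverse convolution and taking time-reversal on both sides. By Definition~\ref{def:inverse-convolution}, we have $\myvector{a} \ast \inverse{\myvector{a}} = \myvector{\delta}$. I would apply the transposition (reversal) operator $(\cdot)^{\top}$ to both sides and exploit two elementary facts about discrete sequences: (i) the unit impulse is symmetric, so $\adjoint{\myvector{\delta}} = \myvector{\delta}$; and (ii) reversal distributes over convolution, i.e.\ $\adjoint{(\myvector{u} \ast \myvector{v})} = \adjoint{\myvector{u}} \ast \adjoint{\myvector{v}}$. The second identity follows from a one-line change of summation index in $(\myvector{u} \ast \myvector{v})_{-i} = \sum_{p} \vectorSub{u}{-i-p}\vectorSub{v}{p}$, substituting $q = -p$ to match the convolution of the reversed sequences.

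Applying these facts to the transposed identity yields $\adjoint{\myvector{a}} \ast \adjoint{\inverse{\myvector{a}}} = \myvector{\delta}$. This equation states precisely that $\adjoint{\inverse{\myvector{a}}}$ is an inverse convolution of $\adjoint{\myvector{a}}$. To conclude, I would invoke uniqueness of the inverse convolution whenever it exists, which follows from the non-vanishing Fourier transform characterization noted in the remark after Definition~\ref{def:inverse-convolution}: since $\myvector{a}$ is invertible, its Fourier transform is non-zero everywhere, and the Fourier transform of $\adjoint{\myvector{a}}$ is its complex conjugate, hence also non-zero everywhere. Therefore $\inverse{\adjoint{\myvector{a}}}$ exists and equals $\adjoint{\inverse{\myvector{a}}}$.

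The argument is essentially three lines of symbol manipulation, so there is no serious technical obstacle. The only point that needs care is stating the reversal-of-convolution identity in a form that matches the paper's conventions (double-sided infinite sums versus finite/circular convolution), and being explicit that the same calculation works for each variant used in the rest of the paper. I would write the proof in one short display-math block derivation, followed by one sentence invoking uniqueness of the inverse.
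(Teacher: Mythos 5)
Your proposal is correct and follows essentially the same route as the paper: the paper's proof is exactly the one-line index computation $\sum_{p} \vectorSub{\adjoint{a}}{p} \vectorSub{\adjoint{\inverse{a}}}{i - p} = \sum_{p} \vectorSub{a}{-p} \vectorSub{\inverse{a}}{p - i} = \vectorSub{\delta}{-i} = \vectorSub{\delta}{i}$, which is your ``reversal distributes over convolution'' step specialized to $\myvector{a} \ast \inverse{\myvector{a}} = \myvector{\delta}$. Your explicit appeal to uniqueness of the inverse (via the non-vanishing Fourier transform) is a small extra point of rigor that the paper leaves implicit, but it does not change the argument.
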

\begin{proof}
The lemma is an immediate result of the definitions of inverse and transposed convolutions.
\begin{equation}
\sum_{p = -\infty}^{+\infty} \vectorSub{\adjoint{a}}{p} \vectorSub{\adjoint{\inverse{a}}}{i - p} 
= \sum_{p = -\infty}^{+\infty} \vectorSub{a}{-p} \vectorSub{\inverse{a}}{p - i} 
= \vectorSub{\delta}{-i} = \vectorSub{\delta}{i} \quad \forall i
\end{equation}
which shows the inverse of $\adjoint{\myvector{a}}$,
i.e.\ $\inverse{\adjoint{\myvector{a}}}$, is equal to $\adjoint{\inverse{\myvector{a}}}$.
\end{proof}

Now we are ready to prove Theorem~\ref{thm:arma-backprop} at the beginning of this section.
\begin{proof}
To begin with, we write the {\ARMA} model
$\myvector{a} \ast \myvector{y} = \myvector{w} \ast \myvector{x}$ in its weighted-sum form: 
\begin{equation}
\sum_{q = -\infty}^{+\infty} \vectorSub{a}{q} \vectorSub{y}{i - q} 
= \sum_{p = -\infty}^{+\infty} \vectorSub{w}{p} \vectorSub{x}{i - p}, ~\forall i
\end{equation}
Taking derivative w.r.t.\ $\vectorSub{a}{r}$ on both sides, and since the right side
is a constant w.r.t.\ $\vectorSub{a}{r}$, we have
\begin{equation}
\derivative{\left(\sum_{q = -\infty}^{+\infty} \vectorSub{a}{q} \vectorSub{y}{q} \right)}{a_r} = 0, ~\forall i, r
\end{equation}
By {\em implicit function theorem}, the left hand side can be further expanded as
\begin{align}
&  \derivative{\left(\sum_{q = -\infty}^{+\infty} \vectorSub{a}{q} \vectorSub{y}{i - q} \right)}{\vectorSub{a}{r}}
= \sum_{q = -\infty}^{+\infty} \derivative{\left( \vectorSub{a}{q} \vectorSub{y}{i - q} \right)}{\vectorSub{a}{r}} \\
= ~ & \sum_{q \ne r} \vectorSub{a}{q} \derivative{ \vectorSub{y}{i - q} }{\vectorSub{a}{r}}
+ \left( \vectorSub{y}{i - r} + \vectorSub{a}{r} \derivative{\vectorSub{y}{i - r}}{ \vectorSub{a}{r}} \right) \\
= ~ & \sum_{q = -\infty}^{+\infty} \vectorSub{a}{q} \derivative{ \vectorSub{y}{i - q}}{ \vectorSub{a}{r}} 
+ \vectorSub{y}{i - r} = 0, ~\forall i, r
\end{align}
Rearranging the equation above, we have
\begin{subequations}
\begin{equation}
- \sum_{q = -\infty}^{+\infty} \vectorSub{a}{q} \derivative{\vectorSub{y}{i - q}}{\vectorSub{a}{r}} 
= \vectorSub{y}{i - r}, ~\forall i, r
\label{eq:arma-derivative-a}  
\end{equation}
Repeating the procedure twice for the derivatives w.r.t.\ $\vectorSub{w}{r}$ and $\vectorSub{x}{r}$, 
we have two similar equations:
\begin{align}
\sum_{q = -\infty}^{+\infty} \vectorSub{a}{q} \derivative{\vectorSub{y}{i - q}}{\vectorSub{w}{r}} 
& = \vectorSub{x}{i - r}, ~\forall i, r
\label{eq:arma-derivative-w} \\
\sum_{q = -\infty}^{+\infty} \vectorSub{a}{q} \derivative{\vectorSub{y}{i - q}}{\vectorSub{x}{r}} 
& = \vectorSub{a}{i - r}, ~\forall i, r 
\label{eq:arma-derivative-x}
\end{align}
\end{subequations}
Since \autoref{eq:arma-derivative-a}, \autoref{eq:arma-derivative-w} 
and \autoref{eq:arma-derivative-x} take the same form, 
we only precede with \autoref{eq:arma-derivative-w}
and obtain $\gradientInline{\myvector{w}}$. 
The other two can be derived using the same arguments.

Notice that \autoref{eq:arma-derivative-w} can be rewritten as 
\begin{equation}
\sum_{q = -\infty}^{+\infty} \vectorSub{a}{i - q} \derivative{\vectorSub{y}{q}}{\vectorSub{w}{r}} 
= \vectorSub{x}{i - r}, ~\forall i, r 
\label{eq:arma-derivative-w-2}
\end{equation}
by changing variable $q$ to $i - q$. 
Since \autoref{eq:arma-derivative-w-2} holds for any $i$, 
we further introduce a new index $l$ and change $i$ to $i - l$ on both hand sides:
\begin{equation}
\sum_{q = -\infty}^{+\infty} \vectorSub{a}{i - q - l} \derivative{\vectorSub{y}{q}}{\vectorSub{w}{r}} 
= \vectorSub{x}{i - r - l}, ~\forall i, r, l
\end{equation}
Now we convolve both hand sides with $\inverse{\myvector{a}}$, the inverse of $\myvector{a}$. 
Then for all $i$ and $r$, we have
\begin{align}
\sum_{l = -\infty}^{+\infty} \vectorSub{\inverse{a}}{l} 
\left( \sum_{q = -\infty}^{+\infty} \vectorSub{a}{i - q - l} \derivative{\vectorSub{y}{q}}{\vectorSub{w}{r}} \right)
& = \sum_{l = -\infty}^{+\infty} \vectorSub{\inverse{a}}{l} \vectorSub{x}{i - r- l} \\
\sum_{q = -\infty}^{+\infty} \left(\sum_{l = -\infty}^{+\infty} \vectorSub{\inverse{a}}{l} \vectorSub{a}{i - q - l} \right) 
\derivative{\vectorSub{y}{q}}{\vectorSub{w}{r}}
& = \sum_{l = -\infty}^{+\infty} \vectorSub{\inverse{a}}{l} \vectorSub{x}{i - r - l} \\
\derivative{\vectorSub{y}{i}}{\vectorSub{w}{r}}
= \sum_{q = -\infty}^{+\infty} \vectorSub{\delta}{i -q} \derivative{\vectorSub{y}{q}}{\vectorSub{w}{r}} 
& = \sum_{l = -\infty}^{+\infty} \vectorSub{\inverse{a}}{l} \vectorSub{x}{i - r - l}
\end{align}
Subsequently, we apply the chain rule to obtain $\gradientInline{\vectorSub{w}{r}}$
\begin{equation}
\gradient{w_r} = \sum_{i = -\infty}^{+\infty} \derivative{\vectorSub{y}{i}}{\vectorSub{w}{r}} \gradient{\vectorSub{y}{i}}
= \sum_{i = -\infty}^{+\infty} \sum_{l = -\infty}^{+\infty} \vectorSub{\inverse{a}}{l} \vectorSub{x}{i - r - l} \gradient{\vectorSub{y}{i}}, ~\forall r
\end{equation}
Finally, we convolve both hand sides with $\adjoint{\myvector{a}}$, 
the transpose of $\myvector{a}$, to obtain the {\ARMA} form of backpropagation rule.
\begin{align}
\sum_{r = -\infty}^{+\infty} \vectorSub{\adjoint{a}}{j - r} \gradient{\vectorSub{w}{r}}
& = \sum_{r = -\infty}^{+\infty} \vectorSub{\adjoint{a}}{j - r}
\left(\sum_{i = -\infty}^{+\infty} \sum_{l = -\infty}^{+\infty} \vectorSub{\inverse{a}}{l} 
\vectorSub{x}{i - r - l} \gradient{\vectorSub{y}{i}} \right) \\
& = \sum_{r = -\infty}^{+\infty} \sum_{i = -\infty}^{+\infty} \sum_{l = -\infty}^{+\infty} 
\vectorSub{\adjoint{a}}{j - r} \vectorSub{\inverse{a}}{l} \vectorSub{x}{i - r - l} \gradient{\vectorSub{y}{i}} \\
& = \sum_{r = -\infty}^{+\infty} \sum_{i = -\infty}^{+\infty} \sum_{l = -\infty}^{+\infty} 
\vectorSub{\adjoint{a}}{j - r} \vectorSub{\inverse{a}}{l - r} \vectorSub{x}{i - l} \gradient{\vectorSub{y}{i}} \\
& = \sum_{r = -\infty}^{+\infty} \sum_{i = -\infty}^{+\infty} \sum_{l = -\infty}^{+\infty}
\vectorSub{\adjoint{a}}{j - r} \vectorSub{\adjoint{\inverse{a}}}{r - l} \vectorSub{\adjoint{x}}{l - i} \gradient{\vectorSub{y}{i}} \\
& = \sum_{i = -\infty}^{+\infty} \sum_{l = -\infty}^{+\infty} \left(\sum_{r = -\infty}^{+\infty} 
\vectorSub{\adjoint{a}}{j-r} \vectorSub{\adjoint{\inverse{a}}}{r-l} \right) \vectorSub{\adjoint{x}}{l-i} \gradient{\vectorSub{y}{i}} \\
& = \sum_{i = -\infty}^{+\infty} \sum_{l = -\infty}^{+\infty} \vectorSub{\delta}{j - l} \vectorSub{\adjoint{x}}{l-i} \gradient{\vectorSub{y}{i}} \\
& = \sum_{i = -\infty}^{+\infty} \vectorSub{\adjoint{x}}{j - i} \gradient{\vectorSub{y}{i}}, ~\forall j
\end{align}
where the second last equality uses Lemma~\ref{lem:inverse-transposed-convolution}.
Therefore, we prove 
$\adjoint{\myvector{a}} \ast \gradientInline{\myvector{w}} = \adjoint{\myvector{x}} \ast \gradientInline{\myvector{y}}$, 
i.e.\ \autoref{eq:arma-model-backprop-w} in the theorem.
\autoref{eq:arma-model-backprop-a} and \autoref{eq:arma-model-backprop-x} can be proved similarly. 
\end{proof}

\subsubsection{Proof in Complex Numbers $\mathbb{C}$}
In this part, we provide an alternative proof of 
Theorem~\ref{thm:arma-backprop} using Fourier transform.

\begin{proof} 
If both convolutions in $\myvector{a} \ast \myvector{y} = \myvector{w} \ast \myvector{x}$ are circular with period $N$,
the celebrated {\em convolution theorem} relates the {\DFTlong} of $\myvector{a}$, $\myvector{y}$, $\myvector{w}$ and $\myvector{x}$ with
\begin{equation}
\vectorSub{A}{l} \vectorSub{Y}{l} = \vectorSub{W}{l} \vectorSub{X}{l}
\quad \left\{
	\begin{aligned} 
		\vectorSub{A}{l}  = \sum_{n=0}^{N-1} \vectorSub{a}{n} \rootunity{N}{n l}, \quad & 
		\vectorSub{Y}{l}  = \sum_{n=0}^{N-1} \vectorSub{y}{n} \rootunity{N}{n l}  \\
		\vectorSub{W}{l} = \sum_{n=0}^{N-1} \vectorSub{w}{n} \rootunity{N}{n l}, \quad & 
		\vectorSub{X}{l}  = \sum_{n=0}^{N-1} \vectorSub{x}{n} \rootunity{N}{n l} 
                \end{aligned} \right.
\end{equation}
where $\rootunity{N}{} = \exp(- \im 2 \pi / N)$ is the $N$-th root of unity.
For brevity, we only prove the most difficult equation
$ - \adjoint{\myvector{a}} \ast \gradientInline{\myvector{a}}
= \adjoint{\myvector{y}} \ast \gradientInline{\myvector{y}}$ (\autoref{eq:arma-model-backprop-a}) here,
and the proofs for the other two equations can be obtained with minor modification. 

Taking derivative w.r.t.\ $\vectorSub{A}{k}$ on both hand sides, we have
\begin{equation}
\left\{ \begin{aligned}
\vectorSub{A}{l} \derivative{\vectorSub{Y}{l}}{\vectorSub{A}{k}} & = 0, \quad l \ne k \\
\vectorSub{A}{l} \derivative{\vectorSub{Y}{l}}{\vectorSub{A}{k}} + \vectorSub{Y}{k} & = 0, \quad l = k
\end{aligned} \right.
\end{equation}
Since $\vectorSub{A}{l} \ne 0, \forall l$, the equation can be simplified as
\begin{equation}
\derivative{\vectorSub{Y}{l}}{\vectorSub{A}{k}} =
\left\{ \begin{aligned}
	0, & \quad l \ne k \\
	- \frac{\vectorSub{Y}{k}}{\vectorSub{A}{k}}, & \quad l = k
\end{aligned} \right.
\end{equation}
Then we apply chain rule to obtain the gradient of $\vectorSub{A}{k}$, which yields
\begin{equation}
\gradient{\vectorSub{A}{k}} = \sum_{l = 0}^{N - 1} \gradient{\vectorSub{Y}{l}} \derivative{\vectorSub{Y}{l}}{\vectorSub{A}{k}} 
= - \frac{\vectorSub{Y}{k}}{\vectorSub{A}{k}} \gradient{Y_k}
\end{equation}
Again, since $\vectorSub{A}{k} \ne 0, \forall k$, we can simplify the equation as
\begin{equation}
\vectorSub{A}{k} \gradient{\vectorSub{A}{k}} = - \vectorSub{Y}{k} \gradient{\vectorSub{Y}{k}}
\label{eq:backprop-complex-key}
\end{equation}
(Notice that the equation above suggests an efficient algorithm 
to evaluate the equation using {\FFT}.)
To precede, we apply the chain rule one more time to obtain the derivatives 
w.r.t.\ $\vectorSub{a}{n}$ and $\vectorSub{y}{n}$.
\begin{subequations}
\begin{align}
\gradient{\vectorSub{a}{n}} & = \sum_{k = 0}^{N - 1} 
\gradient{\vectorSub{A}{k}} \derivative{\vectorSub{A}{k}}{\vectorSub{a}{n}}
= \sum_{k = 0}^{N - 1} \gradient{\vectorSub{A}{k}} \rootunity{N}{k n} \\
\gradient{\vectorSub{y}{n}} & = \sum_{k = 0}^{N - 1} 
\gradient{\vectorSub{Y}{k}} \derivative{\vectorSub{Y}{k}}{\vectorSub{y}{n}}
= \sum_{k = 0}^{N - 1} \gradient{\vectorSub{Y}{k}} \rootunity{N}{k n} 
\end{align}
\end{subequations}
With the equations above, the convolution between 
$\adjoint{\myvector{a}}$ and $\gradientInline{\myvector{a}}$ can be rewritten as
\begin{align}
& \sum_{n = 0}^{N-1} \vectorSub{\adjoint{a}}{i - n} \gradient{\vectorSub{a}{n}} 
= \sum_{n = 0}^{N-1} \vectorSub{a}{n - i} \gradient{\vectorSub{a}{n - i}} \\
= ~ & \sum_{n = 0}^{N-1} a_{n - i} \left( \sum_{k = 0}^{N-1}
\gradient{\vectorSub{A}{k}} \rootunity{N}{kn} \right) \\
= ~ & \sum_{k = 0}^{N - 1} \left(\sum_{n = 0}^{N-1} 
\vectorSub{a}{n - i} \rootunity{N}{k(n - i)} \right) \gradient{\vectorSub{A}{k}} \rootunity{N}{ki} \\
= ~ &  \sum_{k = 0}^{N-1} \vectorSub{A}{k} \gradient{\vectorSub{A}{k}} \rootunity{N}{ki}
\end{align}
With identical arguments, we can rewrite the convolution between 
$\adjoint{\myvector{y}}$ and $\gradientInline{\myvector{y}}$ as
\begin{equation}
\sum_{n = 0}^{N - 1} \vectorSub{\adjoint{y}}{i - n} \gradient{\vectorSub{y}{n}} = 
\sum_{k = 0}^{N - 1} \vectorSub{Y}{k} \gradient{\vectorSub{Y}{k}} \rootunity{N}{ki}
\end{equation}
Recall the relation in \autoref{eq:backprop-complex-key}, we have
\begin{equation}
- \sum_{n = 0}^{N-1} \vectorSub{\adjoint{a}}{i - n} \gradient{\vectorSub{a}{n}} = 
\sum_{n = 0}^{N - 1} \vectorSub{\adjoint{y}}{i - n} \gradient{\vectorSub{y}{n}}
\end{equation}
i.e.\ $- \adjoint{\myvector{a}} \ast \gradientInline{\myvector{a}} = \adjoint{\myvector{y}} \ast \gradientInline{\myvector{y}}$, 
which completes the proof.
\end{proof}

\subsection{Efficient Computation using {\FFTLONG}}
\label{sub:fft}

The key to speeding up both forward and backward passes in \ARMA layers
is the {\em \DFTLONG} (\DFT), along with the {\em \FFTLONG} (\FFT) algorithm.

\begin{definition} [{\bf \DFTLONG, \DFT}]
\label{def:dft}
Given a third-order tensor \(\mytensor{T} \in \R^{I_1 \times I_2 \times C}\), 
we define its \DFT of over the spatial coordinates 
as \(\mytensor{\Fourier{T}} \in \C^{I_1 \times I_2 \times C}\).
\begin{equation}
\tensorSub{\Fourier{T}}{k_1, k_2, c} = \sum_{i_1 = 0}^{I_1 - 1} \sum_{i_2 = 0}^{I_2 - 1} 
\tensorSub{T}{i_1, i_2, c} ~ \rootunity{I_1}{- i_1 k_1} ~ \rootunity{I_2}{- i_2 k_2}
\label{eq:arma-cnn-dft}
\end{equation}
where \(\rootunity{I}{} = \exp({2\pi}/{I})\) is the \(I^{th}\) root of unity. 
Given the transformed tensor \(\Fourier{\mytensor{T}} \in \C^{I_1 \times I_2 \times C}\), 
the original tensor \(\mytensor{T}\) can be recovered by inverse {\DFT} (\IDFT) as
\begin{equation}
\tensorSub{T}{i_1, i_2, c} = \frac{1}{I_1 I_2} \sum_{k_1 = 0}^{I_1 - 1} \sum_{k_2 = 0}^{I_2 - 1} 
\tensorSub{\Fourier{T}}{i_1, i_2, c} \rootunity{I_1}{i_1 k_1} \rootunity{I_2}{i_2 k_2}
\label{eq:arma-cnn-idft}
\end{equation}
\end{definition}
The definition above can be extended to convolutional kernels \(\mytensor{A}\)
by first zero-padding \(\mytensor{A}\) to be \(\R^{I_1 \times I_2 \times C}\).
With {\DFT}, the \ARlong layer in \autoref{eq:arma-cnn-steps} can be computed as
\begin{equation}
\tensorSub{\Fourier{A}}{k_1, k_2, t} \tensorSub{\Fourier{Y}}{k_1, k_2, t}
 = \tensorSub{\Fourier{T}}{k_1, k_2, t}
\label{eq:arma-cnn-step-2-fourier}
\end{equation}
where \(\mytensor{\Fourier{A}}, \mytensor{\Fourier{T}}\) are computed from 
\(\mytensor{A}, \mytensor{T}\) with \autoref{eq:arma-cnn-dft},
and \(\mytensor{Y}\) is recovered from \(\mytensor{\Fourier{Y}}\) by \autoref{eq:arma-cnn-idft}.
Similarly, the backpropagation in \autoref{eq:arma-cnn-backprop} can be solved as
\begin{subequations}
\begin{align}
\gradient{\tensorSub{\Fourier{A}}{k_1, k_2, t}}
& = - \frac{\tensorSub{\Fourier{Y}}{k_1, k_2, t}}{\tensorSub{\Fourier{A}}{k_1, k_2, t}}
\cdot \gradient{\tensorSub{\Fourier{Y}}{k_1, k_2, t}} \\
\gradient{\tensorSub{\Fourier{A}}{k_1, k_2, t}} & = \frac{1}{\tensorSub{\Fourier{A}}{k_1, k_2, t}} \cdot \gradient{\tensorSub{\Fourier{Y}}{k_1, k_2, t}}
\end{align}
\end{subequations}
If every {\DFT} is evaluated using {\FFT}, the computational complexity of either forward or backward pass reduces to  $O(\log(\max(I_1, I_2)) I_1 I_2 T)$, compared to $O((I_1^2 + I_2^2) I_1I_2 T)$ using Gaussian elimination. 
\section{Stability of {\ARMA} Layers}
\label{app:stability}

In this section, we will prove the main Theorem~\ref{thm:stability-arma} in \autoref{sec:stability}.
The section is organized in three subsections: 
{\bf (1)} In \autoref{sub:stability-algebra}, 
we formally define the concept of {\em \BIBO stability}, 
and prove a lemma that relates the stability of 
a complicated model to the ones of its submodules;
{\bf (2)} In \autoref{sub:arma-decomposition}, we repeatedly apply the lemma and deduce
the stability of an {\ARMA} layer to from the one of length-$3$ filters;
{\bf (3)} Lastly in \autoref{sub:stabillity-2nd}, we prove a theorem on the stability of a length-$3$ filter.

\subsection{Algebra of \BIBO stability}
\label{sub:stability-algebra}

To analyze the stability of an {\ARMA} model, we adopt the traditional notion of 
{\em \BIBOLONG} (\BIBO) stability~\citep{oppenheim2014discrete} 
that characterizes stability of linear systems.

\begin{definition} [{\bf \BIBO stability}]
\label{def:BIBO-stability} 
An input \( \myvector{x} \) (or an output \(  \)) is bounded if 
\(| \vectorSub{x}{i} | < B_1, \forall i \in \Z\) for some \(B_1 > 0\) 
(or \(| \vectorSub{y}{i} < B_2, \forall i \in \Z\) for some \(B_2 > 0\)).
A model is \BIBO stable if the output \( \myvector{y} \) is bounded 
given any bounded input \( \myvector{x} \), that is
\begin{equation}
\forall \myvector{x}, ~ (\exists B_1 > 0, | \vectorSub{x}{i} | < B_1, \forall i \in \Z)
\implies (\exists B_2 > 0, | \vectorSub{y}{i} | < B_2, \forall i \in \Z)  
\label{eq:BIBO-stability}
\end{equation}
\end{definition}

The following lemma presents that the \BIBO stability is preserved under 
simple algebraic operations of {\em cascade}, {\em addition} and {\em concatenation}. 
This lemma allows us to reduce the stability analysis of a complex model into its simpler submodules.

\begin{lemma} [\bf{Preserved \BIBO Stability}]
\label{lem:stability-algebra}
\BIBO stability is preserved under the operations of cascade, addition and concatenation.
Suppose $f$ and $g$ are two \BIBO stable models, 
and consider three compound models:
{\bf (1)} $h_1 = g \circ f$ is a cascaded model
$\myvector{y} = h_1(\myvector{x}) = g(f(\myvector{x}))$,
{\bf (2)} $h_2 = f + g$ is a parallel model
$\myvector{y} = h_2(\myvector{x}) = f(\myvector{x}) + g(\myvector{x})$, 
{\bf (3)} $h_3 = f \otimes g$ is a concatenated model
$\myvector{y} = [ \vectorSup{y}{1}, \vectorSup{y}{2} ] = 
h_3([\vectorSup{x}{1}, \vectorSup{x}{2}]) = [f(\vectorSup{x}{1}), g(\vectorSup{x}{2})]$,
 $h_1$, $h_2$ and $h_3$ are all \BIBO stable.
\end{lemma}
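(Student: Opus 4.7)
The plan is to prove each of the three closure properties by a direct chase of definitions, using the boundedness of intermediate signals as the common thread. Since each compound model is built entirely from $f$ and $g$, the argument is essentially bookkeeping: for every bounded input to the compound model, I trace out what is fed into $f$ and $g$ individually, invoke their assumed \BIBO stability to get a bound on each output, and then combine these bounds to bound the output of the compound model. The proof requires nothing beyond Definition~\ref{def:BIBO-stability} and the triangle inequality.

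For the cascade $h_1 = g \circ f$, I would fix an arbitrary bounded input $\myvector{x}$ with bound $B_1$. Since $f$ is \BIBO stable, its output $\myvector{z} = f(\myvector{x})$ is bounded by some $B_2 > 0$. Treating $\myvector{z}$ as the (bounded) input to $g$ and invoking the \BIBO stability of $g$, I obtain a bound $B_3$ on $\myvector{y} = g(\myvector{z}) = h_1(\myvector{x})$, establishing stability of $h_1$. For the sum $h_2 = f + g$, I again take a bounded input $\myvector{x}$ with bound $B_1$; stability of $f$ and $g$ yield bounds $B_f$ and $B_g$ on $f(\myvector{x})$ and $g(\myvector{x})$ respectively, and then the triangle inequality gives $|\vectorSub{y}{i}| \le |f(\myvector{x})_i| + |g(\myvector{x})_i| < B_f + B_g$ pointwise, so $\myvector{y}$ is bounded by $B_f + B_g$.

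For the concatenation $h_3 = f \otimes g$, the key observation is that if the stacked input $[\vectorSup{x}{1}, \vectorSup{x}{2}]$ is bounded by $B_1$ in every coordinate, then each component $\vectorSup{x}{j}$ is individually bounded by $B_1$. Stability of $f$ and $g$ then yields bounds $B_f$ and $B_g$ on $\vectorSup{y}{1} = f(\vectorSup{x}{1})$ and $\vectorSup{y}{2} = g(\vectorSup{x}{2})$, so the stacked output $[\vectorSup{y}{1}, \vectorSup{y}{2}]$ is bounded by $\max(B_f, B_g)$.

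There is no real obstacle in this proof; the only thing worth being careful about is keeping the quantifier order straight (the bounds $B_f, B_g$ depend on the input through $B_1$, but this is exactly what the definition allows, since \BIBO stability promises \emph{some} bound for \emph{each} bounded input, not a uniform one). The main value of the lemma is not its difficulty but its role as a reduction tool: in \autoref{sub:arma-decomposition} it will be applied repeatedly to decompose the separable \ARMA layer into a combination of length-$3$ filters, so that stability of the whole layer follows from the length-$3$ analysis carried out in \autoref{sub:stabillity-2nd}.
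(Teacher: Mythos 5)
Your proof is correct and follows essentially the same route as the paper's: both arguments chase the definition of \BIBO{} stability through each compound model, bounding the intermediate signal for the cascade, summing the two output bounds for the parallel model, and taking the maximum of the two bounds for the concatenation. Your remark on the quantifier order (that the output bound may depend on the input) is a sound clarification but does not change the argument.
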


\begin{proof}
(1) {\em Cascaded model $h_1 = g \circ f$: $y = h_1(x) = f(g(x))$}. 
Let $t = h(x)$ denote the intermediate result returned by the model $f$. 
Since $f$ is \BIBO stable, we have
\begin{subequations}
\begin{equation}
(\exists B_1 > 0, | \vectorSub{x}{i} | < B_1, \forall i \in \Z)
\implies (\exists B_0 > 0, | \vectorSub{t}{i} | < B_0, \forall i \in \Z)  
\label{eq:BIBO-cascade-step-1}
\end{equation}
Similarly, since $g$ is \BIBO stable, we further have
\begin{equation}
(\exists B_0 > 0, | \vectorSub{t}{i} | < B_0, \forall i \in \Z)
\implies (\exists B_2 > 0, | \vectorSub{y}{i} | < B_2, \forall i \in \Z)  
\label{eq:BIBO-cascade-step-2}
\end{equation}
\end{subequations}
Combining both \autoref{eq:BIBO-cascade-step-1} 
and \autoref{eq:BIBO-cascade-step-2}, we achieve 
\begin{equation}
(\exists B_1 > 0, | \vectorSub{t}{i} | < B_1, \forall i \in \Z)
\implies (\exists B_2 > 0, | \vectorSub{y}{i} | < B_2, \forall i \in \Z)  
\label{eq:BIBO-cascade}
\end{equation}
which is the definition of \BIBO stability for model $h_1$.

{\em (2) Parallel model $h_2 = f + g$: 
$\myvector{y} = h_2(\myvector{x}) = f(\myvector{x}) + g(\myvector{x})$.}
Let $\myvector{u} = f(\myvector{x})$ and $\myvector{v} = g(\myvector{x})$ 
be the outputs of $f$ and $g$.
Since both $f$ and $g$ are \BIBO stable, we have the following two relations:
\begin{subequations}
\begin{gather}
(\exists B_1 > 0, | \vectorSub{x}{i} | < B_1, \forall i \in \Z)
\implies (\exists B_{21} > 0, | \vectorSub{u}{i} | < B_{21}, \forall i \in \Z)  
\label{eq:BIBO-parallel-1} \\
(\exists B_1 > 0, | \vectorSub{x}{i} | < B_1, \forall i \in \Z)
\implies (\exists B_{22} > 0, | \vectorSub{v}{i} | < B_{22}, \forall i \in \Z)  
\label{eq:BIBO-parallel-2}
\end{gather}
\end{subequations}
Combining both \autoref{eq:BIBO-parallel-1}
and \autoref{eq:BIBO-parallel-2}, we have 
\begin{equation}
(\exists B_1 > 0, | \vectorSub{t}{i} | < B_0, \forall i \in \Z)
\implies (| \vectorSub{y}{i} | < B_2 = B_{21} + B_{22}, \forall i \in \Z)  
\label{eq:BIBO-parallel}
\end{equation}
We achieve the definition \BIBO stability for model $h_2$.

{\em (3) Concatenated model $\myvector{y} = \myvector{f} \otimes \myvector{g}$: 
$\myvector{y} = [ \vectorSup{y}{1}, \vectorSup{y}{2}] 
= h([\vectorSup{x}{1}, \vectorSup{x}{2}]) = [f(\vectorSup{x}{1}), g(\vectorSup{x}{2})]$:} 
Since $f$ and $g$ are both \BIBO stable, we have the following relations:
\begin{subequations}
\begin{gather}
(\exists B_1 > 0, | \vectorSub{x}{i} | < B_1, \forall i \in \Z)
\implies (\exists B_{21} > 0, | \vectorInd{y}{1}{i} | < B_{21}, \forall i \in \Z)  
\label{eq:BIBO-concat-1} \\
(\exists B_1 > 0, | \vectorSub{x}{i} | < B_1, \forall i \in \Z)
\implies (\exists B_{22} > 0, | \vectorInd{y}{2}{i} | < B_{22}, \forall i \in \Z)  
\label{eq:BIBO-concat-2}
\end{gather}
\end{subequations}
Again, combining both equations we have
\begin{equation}
(\exists B_1 > 0, | \vectorSub{x}{i} | < B_1, \forall i \in \Z)
\implies (| \vectorInd{y}{2}{i} | < B_2 = \max(B_{21}, B_{22}), \forall i \in \Z)  
\label{eq:BIBO-concat}
\end{equation}
And we achieve the \BIBO stability for model $h_3$. 
\end{proof}

\subsection{Reduction of an {\ARMA} layer}
\label{sub:arma-decomposition}

In what follows, we repeatedly use Lemma~\ref{lem:stability-algebra} 
to decompose an {\ARMA} layer into simpler submodules 
until the stability analysis for the submodule is tractable.

{\bf From {\ARMA} model to {\AR} model.}
In \autoref{sec:arma-computation}, we show that an
 {\ARMA} layer can be decomposed into 
a {\em cascade} of a {\em traditional convolutional layer}
and an {\em \ARlong layer} in \autoref{eq:arma-cnn-steps}.
Since the traditional convolutional layer
is always \BIBO stable,
it is sufficient to guarantee the stability of the \ARlong layer:
\begin{equation}
\tensorSub{A}{\bm{:}, \bm{:}, t} \ast \tensorSub{Y}{\bm{:}, \bm{:}, t} 
= \tensorSub{T}{\bm{:}, \bm{:}, t}, ~\forall t
\label{eq:arma-cnn-step-2-app}
\end{equation}

{\bf From multiple channels to a single channel.} 
Note that the \ARlong layer in \autoref{eq:arma-cnn-step-2-app} 
is a {\em concatenation} of $T$ channels of \ARMA models, 
therefore it is sufficient to guarantee the stability of each \ARMA model.
For simplicity, we drop the subscript $t$ and denote $\mytensor{A}, \mytensor{Y}, \mytensor{T}$
as $\mymatrix{A}, \mymatrix{Y}, \mymatrix{T}$.
Our goal now reduces to finding a sufficient condition for the stability of
\begin{equation}
\mymatrix{A} \ast \mymatrix{Y} = \mymatrix{T} ~ \Longleftrightarrow ~
\sum_{p_1, p_2} \matrixSub{A}{p_1, p_2} \matrixSub{Y}{i_1 - p_1, i_2 - p_2}
= \matrixSub{T}{i_1, i_2}, ~\forall i_1, i_2
\label{eq:arma-cnn-step-2-channel}
\end{equation}
where $\mymatrix{A} \in \R^{K_a \times K_a}$
and $\mymatrix{T}, \mymatrix{Y} \in \R^{I_1 \times I_2}$.

{\bf From separable 2D-filter to two 1D-filters.}
In a {\em separable {\ARMA} layer} (\autoref{eq:separable-arma-cnn}),
each filter $\mymatrix{A}$ in \autoref{eq:arma-cnn-step-2-channel} is separable, 
i.e.\ $\mymatrix{A} = \myvector{f} \otimes \myvector{g}$ is outer product of two
1D-filters $\myvector{f} \in \R^{I_1}, \myvector{g} \in \R^{I_2}$: 
\begin{equation}
\matrixSub{A}{p_1, p_2} = \vectorSub{f}{p_1} \vectorSub{g}{p_2}, ~\forall p_1, p_2
\label{eq:separable-filter}
\end{equation}
Given the factorization, the model in \autoref{eq:arma-cnn-step-2-channel}
can be written as a cascade of two submodules: 
\begin{subequations}
\begin{align}
\sum_{p_1} \vectorSub{f}{p_1} \matrixSub{S}{i_1 - p_1, i_2} & = \matrixSub{T}{i_1, i_2}, ~\forall i_2
\label{eq:separable-fitlers-step-1} \\
\sum_{p_2} \vectorSub{g}{p_2} \matrixSub{Y}{i_1, i_2 - p_2} & = \matrixSub{S}{i_1, i_2}, ~\forall i_1
\label{eq:separable-filters-step-2}
\end{align}
\end{subequations}
where $\mymatrix{S} \in \R^{I_1 \times I_2}$ is an intermediate result.
Notice that \autoref{eq:separable-fitlers-step-1} is a concatenation of $I_2$ submodules, 
each of which operates on a column of $\mymatrix{T}$.
Similarly, \autoref{eq:separable-filters-step-2} can be decomposed into a concatenation of $I_1$ submodules, 
and each submodule operates on a row of $\mymatrix{S}$.
According to Lemma~\ref{lem:stability-algebra}, 
it is sufficient to guarantee the stability of $\myvector{f}$ and $\myvector{g}$ individually. 
For simplicity, we denote both $\myvector{f}$ and $\myvector{g}$ as $\myvector{a}$, 
and rewrite each submodule in \autoref{eq:separable-fitlers-step-1} or \autoref{eq:separable-filters-step-2} as
\begin{equation}
\myvector{a} \ast \myvector{y} = \myvector{x} ~ \Longleftrightarrow ~ 
\sum_{p} \vectorSub{a}{p} \vectorSub{y}{i - p} = \vectorSub{x}{i}, \forall i
\label{eq:arma-cnn-1d}
\end{equation}

{\bf From general 1D-filter to composition of length-3 filters. }
By {\em the fundamental theorem of algebra}, 
any one-dimensional filter can be decomposed
as a composition of shorter filters~\citep{oppenheim2014discrete}.
Specifically, suppose $\myvector{a} \in \R^{K}$ is a filter of length-$K$,
it can be factorized into a composition of $Q = (K - 1)/2$ length-$3$ filters such that
\begin{equation}
\myvector{a} = \vectorSup{a}{1} \ast \vectorSup{a}{2} \cdots \ast \vectorSup{a}{Q}
\label{eq:higher-to-second}
\end{equation}
where each filter $\vectorSup{a}{q} \in \R^{3}$ has three coefficients.
By the decomposition, the model in \autoref{eq:arma-cnn-1d} is a cascade of $Q$ submodules
\begin{equation}
\vectorSup{a}{1} \ast \left( \vectorSup{a}{2} \ast \cdots \left( \vectorSup{a}{Q} \ast \myvector{y} \right) \right) = \myvector{x}
\label{eq:higher-to-second-steps}
\end{equation}
Therefore, we only need to guarantee the stability for each $\vectorSup{a}{q}$ individually.
In the next subsection, we will further drop the superscript $q$ and assume $\myvector{a}$ itself is a length-3 filter. 

\subsection{Stability of a length-3 1D-filter}
\label{sub:stabillity-2nd}

Without loss of generality, we assume the filter 
$\myvector{a}$ is centered at $0$ with $\vectorSub{a}{0} = 1$
(otherwise we can rescale the moving-average coefficients). 
The model at consideration can be written as
\begin{equation}
\vectorSub{a}{1} \vectorSub{y}{i - 1} + \vectorSub{y}{i} + \vectorSub{a}{-1} \vectorSub{y}{i + 1} = \vectorSub{x}{i}
\label{eq:arma-cnn-1d-2}
\end{equation}
 
The stability analysis of this model follows the standard approach of 
Z-transform~\citep{oppenheim2014discrete}.
To begin with, we review the concepts of {\em Z-transform},
{\em \ROCLONG} (\ROC) and their relationships to \BIBO stability 
of a linear model.

\begin{definition} [{\bf Z-transform and \ROC}]
\label{def:z-transform}
Given a one-dimensional sequence $\myvector{h}$, 
the Z-transform maps the sequence to a complex function on the complex plain $\C$
\begin{equation}
H(z) = \sum_{i = -\infty}^{+\infty} \vectorSub{h}{i} z^{-i}
\label{eq:z-transform}
\end{equation}
Notice that the infinite series does not necessarily converge for any $z \in \C$, 
and the transformation exists only if the summation is convergent.
The region in the complex plane that the Z-transform exists is known as 
the \ROC for the sequence $\myvector{h}$.
\end{definition}

\begin{lemma} [{\bf \ROC and \BIBO stability}]
\label{lem:ROC-BIBO}
Consider a linear model $\myvector{y} = \myvector{h} \ast \myvector{x}$, 
and let $H$ denote the Z-transform of $\myvector{h}$,
then a necessary and sufficient condition for the model being \BIBO stable is that 
the unit circle belongs to the \ROC, i.e.\ the infinite series
\begin{equation}
H(e^{\im \omega}) = \sum_{i = -\infty}^{+\infty} \vectorSub{h}{i} e^{-\im \omega i}
\label{eq:discrete-time-fourier-transform}
\end{equation}
converges for any frequency $\omega \in \R$,
i.e.\ {\em \DTFTlong} (\DTFT) exists for $\myvector{h}$.
\end{lemma}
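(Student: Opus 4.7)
}
The plan is to prove the statement via the intermediate characterization that \BIBO stability is equivalent to {\em absolute summability} of the impulse response, $\sum_{i = -\infty}^{+\infty} | \vectorSub{h}{i} | < \infty$, and then observe that absolute summability is precisely the condition that places the unit circle inside the \ROC of the Z-transform. I would split the proof into two directions (sufficiency and necessity) and carry them out in that order.

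\textbf{Sufficiency (unit circle in \ROC $\Rightarrow$ \BIBO stable).} First I would argue that if the unit circle belongs to the \ROC, then the defining series $\sum_i \vectorSub{h}{i} z^{-i}$ converges absolutely for every $z$ on the unit circle; since $|z^{-i}|=1$ there, this gives $\sum_i |\vectorSub{h}{i}| < \infty$. With this in hand, \BIBO stability is a one-line application of the triangle inequality: for any input satisfying $|\vectorSub{x}{i}| < B_1$,
\begin{equation}
| \vectorSub{y}{i} | = \bigg| \sum_{k} \vectorSub{h}{k}  \vectorSub{x}{i-k} \bigg|
\leq \sum_{k} | \vectorSub{h}{k}| \cdot | \vectorSub{x}{i-k}|
\leq B_1 \sum_{k} | \vectorSub{h}{k} | < \infty,
\end{equation}
so the output is uniformly bounded and the model is \BIBO stable in the sense of Definition~\ref{def:BIBO-stability}.

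\textbf{Necessity (\BIBO stable $\Rightarrow$ unit circle in \ROC).} I would prove the contrapositive by constructing an explicit bounded input that drives a non-absolutely-summable filter to an unbounded output. Define $\vectorSub{x}{i} = \mathrm{sign}(\vectorSub{h}{-i})$, with the convention that $\mathrm{sign}(0)=0$. Then $|\vectorSub{x}{i}| \leq 1$ for every $i$, so $\myvector{x}$ is bounded. Evaluating the convolution at index $0$ gives
\begin{equation}
\vectorSub{y}{0} = \sum_{k} \vectorSub{h}{k} \vectorSub{x}{-k} = \sum_{k} \vectorSub{h}{k}  \mathrm{sign}( \vectorSub{h}{k}) = \sum_{k} | \vectorSub{h}{k} |.
\end{equation}
If $\sum_{k} |\vectorSub{h}{k}|$ diverges, $\vectorSub{y}{0}$ is unbounded, contradicting \BIBO stability; hence \BIBO stability forces absolute summability. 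Absolute summability in turn implies that $H(z) = \sum_i \vectorSub{h}{i} z^{-i}$ converges absolutely on $|z|=1$, so the unit circle lies in the \ROC and the \DTFT $H(e^{\im \omega})$ exists for every $\omega \in \R$.

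\textbf{Anticipated obstacle.} The sufficiency direction is essentially the triangle inequality and is routine. The subtle step is the necessity direction: one must exhibit a specific bounded input whose output at a single index already equals $\sum_k |\vectorSub{h}{k}|$. The sign-selection trick above does this cleanly for real-valued $\myvector{h}$; for completeness one should note that a minor care is needed at indices where $\vectorSub{h}{k}=0$ (handled by the convention $\mathrm{sign}(0)=0$) and that the same argument extends verbatim to complex-valued impulse responses by choosing $\vectorSub{x}{-k} = \overline{\vectorSub{h}{k}}/|\vectorSub{h}{k}|$ where defined. No further difficulty is expected, since the equivalence between absolute summability and containment of the unit circle in the \ROC is immediate from the definition of Z-transform convergence.
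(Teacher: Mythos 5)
Your proof is correct, and it is the standard textbook argument: \BIBO{} stability $\Leftrightarrow$ absolute summability of the impulse response $\Leftrightarrow$ the unit circle lies in the \ROC. Note that the paper itself does not prove Lemma~\ref{lem:ROC-BIBO} at all --- it states it as a classical fact from discrete-time signal processing (following \citep{oppenheim2014discrete}) and immediately moves on to Lemma~\ref{lem:ROC-AR} and the proof of Theorem~\ref{thm:stability-arma}. So your write-up supplies a proof the paper omits rather than an alternative to one it contains. Both halves of your argument are sound: the triangle-inequality bound for sufficiency, and the sign-selection input $\vectorSub{x}{i} = \mathrm{sign}(\vectorSub{h}{-i})$ for necessity, which forces $\vectorSub{y}{0} = \sum_k |\vectorSub{h}{k}|$. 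One small point worth making explicit: the chain of equivalences requires reading ``the unit circle belongs to the \ROC'' in the standard sense of \emph{absolute} convergence of $\sum_i \vectorSub{h}{i} z^{-i}$ on $|z|=1$, which is how you use it. If one instead read the lemma's ``converges'' as mere pointwise convergence of the \DTFT{} at every $\omega$, the equivalence would fail (the ideal lowpass filter $\vectorSub{h}{i} = \sin(\omega_c i)/(\pi i)$ has a pointwise-convergent \DTFT{} everywhere yet is not \BIBO{} stable), so your implicit adoption of the absolute-convergence convention --- the one used in \citep{oppenheim2014discrete} and consistent with Definition~\ref{def:z-transform} --- is the correct reading and deserves a sentence of justification.
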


\begin{lemma} [{\bf \ROC of length-3 \AR model}]
\label{lem:ROC-AR}
Consider an length-3 \AR model $\myvector{a} \ast \myvector{y} = \myvector{x}$, i.e.\
$\vectorSub{a}{-1} \vectorSub{y}{i - 1} + \vectorSub{y}{i} + \vectorSub{a}{1} \vectorSub{y}{i - 1} = x_i$,
the Z-transform of $\myvector$ is a length-3 complex polynomial
$A(z) = \vectorSub{a}{-1} z + 1 + \vectorSub{a}{1} z^{-1}$ with two zeros $z_1$ and $z_2$. 
Then the Z-transform of its inverse convolution $\inverse{\myvector{a}}$ is
\begin{equation}
\inverse{A}(z) = \frac{1}{A(z)} = \frac{z}{\vectorSub{a}{-1} z^2 + z + \vectorSub{a}{1}}
\end{equation}
with the corresponding \ROC $|z_1| < z < |z_2|$ as a ring.
Since the model can be written as $\myvector{y} = \inverse{\myvector{a}} \ast \myvector{x}$, 
it is \BIBO stable if $|z_1| < 1 < |z_2|$ according to Lemma~\ref{lem:ROC-BIBO}.
\end{lemma}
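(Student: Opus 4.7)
The plan is to verify the three parts of the lemma in sequence, matching its three assertions. First, applying the Z-transform definition $H(z) = \sum_i h_i z^{-i}$ to the length-3 sequence with nonzero entries $a_{-1}, 1, a_1$ at indices $-1, 0, 1$ yields $A(z) = a_{-1} z + 1 + a_1 z^{-1}$ directly. Since the convolution theorem maps $a \ast a^{-1} = \delta$ to a pointwise product on the Z-transform side, and the Z-transform of $\delta$ is the constant $1$, the Z-transform of the inverse convolution is $1/A(z)$. Multiplying numerator and denominator by $z$ produces the stated rational form $z / (a_{-1} z^{2} + z + a_{1})$.

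Second, the denominator factors as $a_{-1}(z - z_1)(z - z_2)$ with roots $z_1, z_2$ (ordered so that $|z_1| \le |z_2|$), and partial fractions decompose $1/A(z)$ into two simple-pole terms. Each simple-pole term admits two candidate Laurent expansions with disjoint regions of convergence: an outward (causal) expansion valid for $|z| > |z_k|$, and an inward (anti-causal) expansion valid for $|z| < |z_k|$. Because $a$ is a genuinely two-sided finite sequence, its bilateral inverse $a^{-1}$ is also two-sided, so we pair the inner pole $z_1$ with the outward expansion and the outer pole $z_2$ with the inward expansion. The intersection of these two regions is precisely the annulus $|z_1| < |z| < |z_2|$, which is the claimed ROC.

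Third, the BIBO conclusion follows immediately by invoking Lemma~\ref{lem:ROC-BIBO}, which characterizes BIBO stability as the unit circle lying in the ROC: for the annular ROC just obtained, this is equivalent to the pair of strict inequalities $|z_1| < 1 < |z_2|$.

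The main obstacle is a clean justification of the pole-pairing used to identify the ROC in the second step. Concretely, one must rule out the two alternative candidate regions $|z| > |z_2|$ and $|z| < |z_1|$, each of which corresponds to a one-sided (causal or anti-causal) inverse. The cleanest route is to appeal to uniqueness of the bilateral inverse under the implicit requirement that $a^{-1}$ define a bounded deconvolution operator on generic two-sided inputs; this forces both poles to contribute via opposite-sided expansions, selecting the annular ROC rather than a half-plane. I would also include a brief remark on the degenerate case in which one of $a_{-1}, a_1$ vanishes, so that one of the ``roots'' escapes to $0$ or $\infty$ and the annulus collapses to a punctured disk or exterior, consistent with the expected ROC of a first-order causal or anti-causal filter.
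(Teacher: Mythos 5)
Your first and third steps are sound, and it is worth noting at the outset that the paper itself never proves this lemma: it is stated as standard Z-transform background (with a citation to a signal-processing text) and is consumed directly in the proof of Theorem~\ref{thm:stability-arma}, so your write-up fills in an argument the paper leaves implicit rather than paralleling an existing one. The computation of $A(z)$, the passage to $1/A(z)$ via the convolution theorem, and the final invocation of Lemma~\ref{lem:ROC-BIBO} are all correct.

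The genuine gap is in your second step, exactly where you locate the "main obstacle." The principle you invoke to resolve it --- that because $a$ is a genuinely two-sided finite sequence, its bounded deconvolution inverse must be two-sided, which forces the outward expansion at $z_1$ and the inward expansion at $z_2$ --- is false. Take $a_{-1} = 2$, $a_0 = 1$, $a_1 = 0.1$: the zeros of $F(z) = 2z^2 + z + 0.1$ are approximately $-0.138$ and $-0.362$, both strictly inside the unit circle, so the DTFT of $a$ is nonvanishing and the unique absolutely summable inverse exists, but it has ROC $|z| > |z_2|$ and is one-sided (causal up to a finite shift), even though $a$ is two-sided. In general the annulus is only one of three admissible ROCs, each corresponding to a different inverse sequence, and the boundedness requirement you appeal to selects whichever region contains the unit circle --- not necessarily the annulus. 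Indeed, when $|z_2| < 1$, the annular expansion produces anti-causal coefficients growing like $|z_2|^{-j}$, which is not even a bounded sequence, so your uniqueness-via-boundedness argument cannot rule out the half-plane ROCs; it would in fact rule out the annulus in that regime. The lemma survives because it only asserts a conditional: under the hypothesis $|z_1| < 1 < |z_2|$ the unit circle lies in the annulus, your pole-pairing and the boundedness criterion coincide, both partial-fraction expansions converge absolutely on $|z| = 1$, the resulting two-sided sequence is in $\ell^1$ and satisfies $a \ast a^{-1} = \delta$, and Lemma~\ref{lem:ROC-BIBO} yields BIBO stability --- which is the only direction Theorem~\ref{thm:stability-arma} needs. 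The repair is therefore to present the annulus as the defining convention for the two-sided inverse, or to let the hypothesis $|z_1| < 1 < |z_2|$ itself force the pairing, rather than to derive the annular ROC from a boundedness-uniqueness principle that fails whenever both zeros lie on the same side of the unit circle. Your closing remark on the degenerate cases ($a_{-1}$ or $a_1$ vanishing) is correct; you might add that $|z_1| = |z_2|$ (e.g.\ a conjugate pair) collapses the annulus entirely, which is consistent with the paper's later observation that this case is infeasible under the stability constraint.
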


With the lemmas above, we are ready to prove Theorem~\ref{thm:stability-arma}.
\begin{proof}
Since the coefficients in $\myvector{a}$ are real numbers, 
the zeros of $F(z) = z A(z) = \vectorSub{a}{-1} z^2 + z + \vectorSub{a}{1}$ are conjugate to each other:
{\bf(1)} Both zeros lie on the real axis, i.e.\ $z_1$ and $z_2$ are real numbers; 
and {\bf(2)} $z_1$ and $z_2$ are complex conjugate to each other, i.e.\ $z_1^{*} = z_2$. 

Notice that {\bf(2)} also implies $|z_1| = |z_2|$. However, Lemma~\ref{lem:ROC-AR} shows that 
$|z_1| < 1 < |z_2|$ is required for \BIBO stability, and therefore the second distribution is not feasible.

If both zeros are real, the inequality $|z_1| < 1 < |z_2|$ is equivalent to 
$F(1) \cdot F(-1) < 0$, i.e.\
\begin{gather}
(\vectorSub{a}{-1} + 1 + \vectorSub{a}{1}) (\vectorSub{a}{-1} - 1 + \vectorSub{a}{1}) < 0 \\
(\vectorSub{a}{-1} + \vectorSub{a}{1})^ 2 - 1 < 0 \implies |\vectorSub{a}{-1} + \vectorSub{a}{1}| < 1
\end{gather}
which completes the proof.
\end{proof}

The constrain $|\vectorSub{a}{-1} + \vectorSub{a}{1}| < 1$ can be removed 
by re-parameterizing $(\vectorSub{a}{-1}, \vectorSub{a}{1})$ into \((\alpha, \beta)\):
\begin{equation}
\begin{pmatrix} \vectorSub{a}{-1} \\ \vectorSub{a}{1} \end{pmatrix} = 
\begin{pmatrix} \sqrt{2} / 2 & -\sqrt{2} / 2 \\ \sqrt{2} / 2 & \sqrt{2} / 2 \end{pmatrix} 
\begin{pmatrix} \alpha \\ \tanh(\beta) \end{pmatrix}
\label{eq:re-parameterization-app}
\end{equation}
where the learnable parameters \((\alpha, \beta)\) have no constrain. 
The transform in re-parameterization is illustrated in the following figure.

\begin{figure}[hbtp]
\centering
	\includegraphics[width=0.8\linewidth]{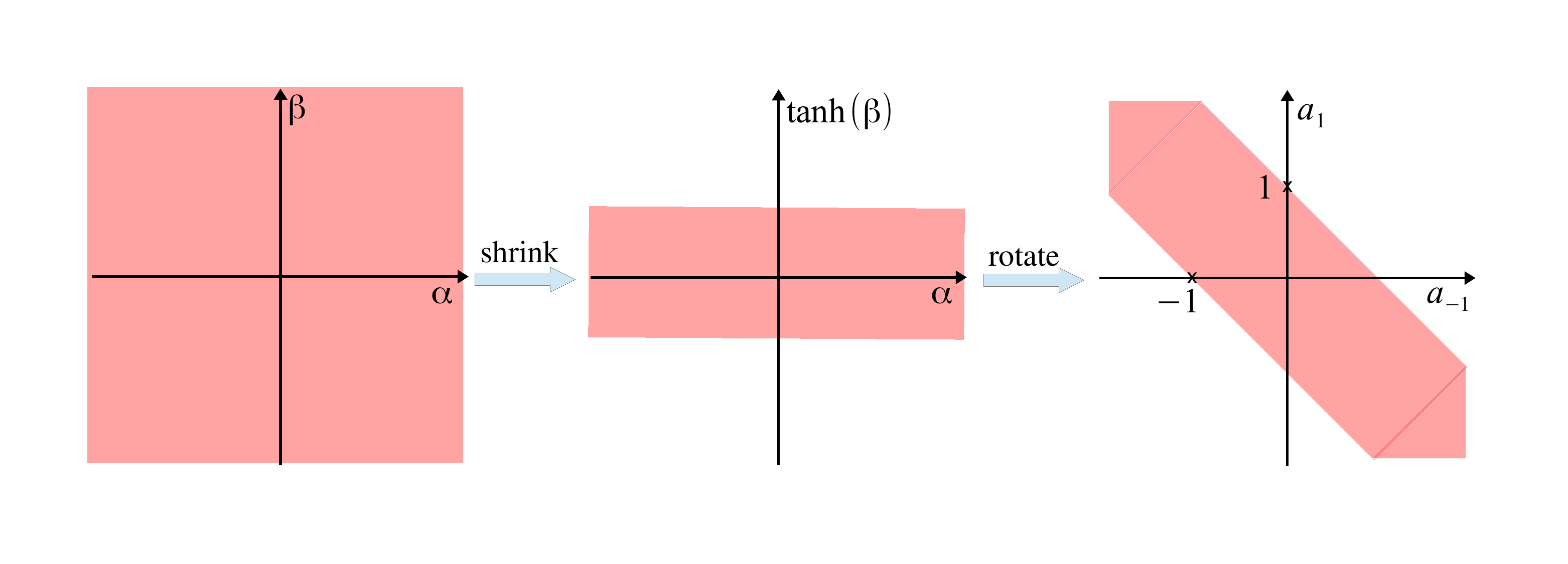}
	\caption{Visualization of the re-parameterization in \autoref{eq:re-parameterization-app}.}
\label{fig:stability}
\end{figure}

\end{document}